\def\eqref#1{equation~\ref{#1}}
\def\1{\bm{1}}
\def\vzero{{\bm{0}}}
\def\vone{{\bm{1}}}
\def\va{{\bm{a}}}
\def\vb{{\bm{b}}}
\def\vc{{\bm{c}}}
\def\ve{{\bm{e}}}
\def\vg{{\bm{g}}}
\def\vh{{\bm{h}}}
\def\vk{{\bm{k}}}
\def\vl{{\bm{l}}}
\def\vn{{\bm{n}}}
\def\vp{{\bm{p}}}
\def\vs{{\bm{s}}}
\def\vt{{\bm{t}}}
\def\vu{{\bm{u}}}
\def\vv{{\bm{v}}}
\def\vw{{\bm{w}}}
\def\vx{{\bm{x}}}
\def\vy{{\bm{y}}}
\def\vz{{\bm{z}}}
\def\mA{{\bm{A}}}
\def\mB{{\bm{B}}}
\def\mG{{\bm{G}}}
\def\mI{{\bm{I}}}
\def\mJ{{\bm{J}}}
\def\mN{{\bm{N}}}
\def\mR{{\bm{R}}}
\def\mS{{\bm{S}}}
\def\mW{{\bm{W}}}
\DeclareMathAlphabet{\mathsfit}{\encodingdefault}{\sfdefault}{m}{sl}
\SetMathAlphabet{\mathsfit}{bold}{\encodingdefault}{\sfdefault}{bx}{n}
\def\sB{{\mathbb{B}}}
\def\sC{{\mathbb{C}}}
\def\sD{{\mathbb{D}}}
\def\sG{{\mathbb{G}}}
\def\sH{{\mathbb{H}}}
\def\sM{{\mathbb{M}}}
\def\sN{{\mathbb{N}}}
\def\sR{{\mathbb{R}}}
\def\sS{{\mathbb{S}}}
\def\sT{{\mathbb{T}}}
\def\sU{{\mathbb{U}}}
\def\sV{{\mathbb{V}}}
\def\sX{{\mathbb{X}}}
\def\sY{{\mathbb{Y}}}
\def\sZ{{\mathbb{Z}}}
\newcommand{\E}{\mathbb{E}}
\DeclareMathOperator*{\argmin}{arg\,min}
\def\cC{{\mathcal{C}}}
\def\cE{{\mathcal{E}}}
\def\cF{{\mathcal{F}}}
\def\cG{{\mathcal{G}}}
\def\cH{{\mathcal{H}}}
\def\cL{{\mathcal{L}}}
\def\cN{{\mathcal{N}}}
\def\cP{{\mathcal{P}}}
\def\cQ{{\mathcal{Q}}}
\def\cR{{\mathcal{R}}}
\def\cS{{\mathcal{S}}}
\newcommand{\Urt}{\sU_r(\sM)}
\newcommand{\Utt}{\sU_\tau(\sM)}
\newcommand{\dist}{\operatorname{dist}}
\newtheorem{theorem}{Theorem}[section]
\newtheorem{lemma}[theorem]{Lemma}
\newtheorem{corollary}[theorem]{Corollary}
\theoremstyle{definition}
\newtheorem{definition}[theorem]{Definition}
\newtheorem{assumption}[theorem]{Assumption}
\theoremstyle{remark}
\newtheorem{remark}[theorem]{Remark}
\renewcommand{\eqref}[1]{\textup{(\ref{#1})}}
\pgfplotsset{compat=1.18}
\title[Expressive Power of Implicit Models]{Expressive Power of Implicit Models: Rich Equilibria and Test-Time Scaling}
\author{Jialin Liu}
\address{(JL) Department of Statistics and Data Science, University of Central Florida, Orlando, FL 32826.}
\email{jialin.liu@ucf.edu}
\author{Lisang Ding}
\address{(LD) Department of Mathematics, University of California, Los Angeles, CA 90095}
\email{lisangding@ucla.edu}
\author{Stanley Osher}
\address{(SO) Department of Mathematics, University of California, Los Angeles, CA 90095}
\email{sjo@math.ucla.edu}
\author{Wotao Yin}
\address{(WY) Decision Intelligence Lab, DAMO Academy, Alibaba US, Bellevue, WA 98004.}
\email{wotao.yin@alibaba-inc.com}
\date{\today}
\begin{document}

\begin{abstract}
Implicit models, an emerging model class, compute outputs by iterating a single parameter block to a fixed point. This architecture realizes an infinite-depth, weight-tied network that trains with constant memory, significantly reducing memory needs for the same level of performance compared to explicit models. While it is empirically known that these compact models can often match or even exceed the accuracy of larger explicit networks by allocating more test-time compute, the underlying mechanism remains poorly understood.

We study this gap through a nonparametric analysis of expressive power. We provide a strict mathematical characterization, showing that a simple and regular implicit operator can, through iteration, progressively express more complex mappings. We prove that for a broad class of implicit models, this process lets the model's expressive power scale with test-time compute, ultimately matching a much richer function class.

The theory is validated across four domains: image reconstruction, scientific computing, operations research, and LLM reasoning, demonstrating that as test-time iterations increase, the complexity of the learned mapping rises, while the solution quality simultaneously improves and stabilizes. Codes and datasets are available at: \url{https://github.com/liujl11git/IMP-Power.git}
\end{abstract}

\maketitle

\section{Introduction}
\label{sec:intro}

Many machine-learning tasks can be cast as learning a mapping $\cF$ from input $\vx$ to the desired output $\vy_\ast$, i.e., $\vy_\ast = \cF(\vx)$. An emerging alternative is the \textit{implicit models} (or named deep equilibrium models, also fixed-point models): train an operator $\cG$ whose fixed point matches the target, $\vy_\ast = \cG(\vy_\ast, \vx)$ \cites{bai2019deep,el2021implicit}. At inference time, the fixed point is obtained via a root-finding solver. While advanced algorithms (e.g., Anderson acceleration or Broyden's method) exist, the canonical approach is the Picard iteration:
\begin{equation}
\label{eq:implicit-iterate}
\vy_1 = \cG(\vy_0, \vx), ~~ \vy_2 = \cG(\vy_1, \vx), ~~ \vy_3 = \cG(\vy_2, \vx), ~~ \cdots, 
\end{equation}
and expect $\vy_t(\vx) \to \vy_\ast(\vx)= \cF(\vx)$ for all $\vx$. Rather than producing $\vy_\ast$ in a single feedforward pass, implicit models reach the target through gradual equilibrium-seeking updates by reusing the operator $\cG$ over and over.
Here, ``\textit{test-time compute}" refers to the computational budget spent at inference—primarily the number of iterations; increasing this budget increases runtime but not the number of learned parameters.
By tailoring the structure of $\cG$, implicit models have shown strong results across many domains (e.g., imaging \cite{gilton2021deep}, scientific computing \cite{marwah2023deep}, generative modeling \cite{pokle2022deep,geng2023one}, LLM reasoning \cite{geiping2025scaling}, etc.).

Behind these successes, the advantages of implicit models include: \textbf{(i)} they realize an infinite-depth, weight-tied network trainable with constant memory, which yields efficient training \cite{fung2022jfb,geng2021training}; \textbf{(ii)} they allow us to explicitly impose or ``implicitly bake in" domain constraints and structure (e.g., physics, geometry, safety), see \cite{xie2022optimization,gungor2023deq,oshin2024differentiable}; and, \textit{most surprisingly,} \textbf{(iii)} \textit{they can often match or even exceed the accuracy of larger explicit networks by allocating more iterations} \cite{marwah2023deep,wang2024lion,geiping2025scaling,yang2024looped}. Point (i) stems from the weight-tied architecture and avoiding full back-propagation. Point (ii) arises from the inherently implicit nature of many real-world, equation-based constraints. In contrast, the mechanism underlying the surprising effectiveness of (iii) remains less well understood.

We study this through the lens of {expressive power}—the set of input–output maps a model family can represent. We ask two questions. First, as a baseline:
\textbf{(Q1)} \textit{Do implicit models (at least) match the expressive power of explicit ones?} Concretely, for a target map \(\cF:\vx\mapsto \vy_\ast\), does there always exist an implicit operator \(\cG\) such that the iterates of \eqref{eq:implicit-iterate} satisfy $\vy_t(\vx) \to \cF(\vx)$ for all $\vx$? 
If yes, a more insightful question follows:
\textbf{(Q2)} \textit{Do implicit models offer an expressive advantage?} In particular, can a relatively \textit{simple} implicit operator $\cG$, through iteration, represent a \textit{complex} explicit map $\cF$? A positive answer to (Q2) would directly explain phenomenon (iii).

To our knowledge, these questions remain largely open. While universality has been touched upon in specific settings \cite{bai2019deep,marwah2023deep} and separation results have demonstrated advantages over explicit models \cite{wu2024separation}, a complete characterization of the representable function class of implicit models (and hence a direct answer to questions (Q1) and (Q2)) is still missing.
Unlike studies focusing on infinite-width limits and kernel connections \cite{gao2022a,feng2023neural,ling2024deep}, our work fills this gap from a \textit{nonparametric, function-space perspective}, establishing that an implicit model's expressive power scales with test-time compute. (See Appendix \ref{app:refs} for broader contextual discussions.)
Specifically: 
\begin{itemize}[leftmargin=5mm]
\item \textbf{Expressive boundary.} We identify locally Lipschitz mappings as a natural target class and prove: every such map $\cF$ can be expressed as the fixed point of a ``regular" (simple and well-behaved) $\cG$, and conversely, every such fixed-point map is locally Lipschitz.
\item \textbf{Emergent expressive power.} Our theory, combined with the root-finding solvers' dynamics, yields a new viewpoint on implicit models: the expressive power of a regular implicit operator is not static but \textit{progressively unlocked by iterations} (i.e., scales with test-time compute), and finally matches a much richer function class.
\item \textbf{Validation across domains.} We validate our theory with case studies in a wide range of applications (e.g., image reconstruction, scientific computing, operations research, and LLM reasoning). For a representative problem in each domain, we demonstrating that as test-time iterations increase, the empirical complexity of the iterate $\vy_t(\vx)$ rises while solution quality improves and stabilizes.
\end{itemize}
Note that, while explicit networks are capable of expressing locally Lipschitz target maps \cite{beneventano2021deep} by scaling up the model size, implicit models are able to scale expressivity with test-time iterations and represent increasingly complex functions without adding parameters.

\section{Main Results}
\label{sec:main-results}

We now return to (Q1): given a target map \(\cF\), does there exist an implicit operator \(\cG\) whose fixed-point iteration yields $\vy_t(\vx)\to\cF(\vx)$? A naive construction answers “yes”: define, for \(0<\eta<1\),
\begin{equation}
\label{eq:naive}
\cG(\vy,\vx) := (1-\eta) \vy + \eta \cF(\vx).
\end{equation}
Then the fixed-point iteration reduces to \(\vy_t=(1-\eta)\vy_{t-1}+\eta\,\cF(\vx)\), hence \(\vy_t - \cF(\vx)= (1-\eta) (\vy_{t-1}-\cF(\vx))\). As \(0<\eta<1\), it holds that, for all \(\vx\), \(\vy_t(\vx) - \cF(\vx) \to \vzero \) as \(t\to\infty\).

However, \eqref{eq:naive} is merely a trivial averaging of \(\vy\) and \(\cF(\vx)\); learning such an implicit model is no different from learning \(\cF\) directly. This prompts the natural follow-up: is there any \emph{nontrivial} implicit representation that is able to indicate the expressive benefits of implicit models?

\textbf{An illustrative example.} Let \(\cF(x)=1/x\) on \([-1,1]\setminus\{0\}\). This function is smooth (differentiable to any order) almost everywhere, but blows up near the singular point $x=0$: 
\[ |\cF(x)|= \left| \frac{1}{x} \right| \to \infty,\quad \left|\frac{\mathrm{d}\cF}{\mathrm{d}x}\right|=\left|-\frac{1}{x^2}\right| \to \infty, \quad \textup{as }x \to 0.
\]
Neural networks approximating $1/x$ on $[-1,-\delta)\cup(\delta,1]$ typically demands higher network complexity—i.e., increasing depth/width as $\delta \to 0$ to capture the growing steepness near the singularity \cite{telgarsky2017neural}.
If we adopt the naive implicit form \eqref{eq:naive}, \(\cG(y,x)=(1-\eta)y+\eta/x\), nothing is gained: the model still inherits the singular behavior \(|\partial\cG/\partial x|= \eta/x^2\to\infty\).

What would be a nontrivial implicit representation in this setting? Instead of writing $(1/x)$ explicitly, we can regard it as the solution of the equation $xy-1=0$ \textbf{(implicit representation)}. Inspired by this, we apply a fixed-point iteration to $xy-1=0$: $\cG(y,x)=y-\eta(xy-1)$. Using the general scheme in \eqref{eq:implicit-iterate}, we have $y_t = y_{t-1} - \eta (x y_{t-1}-1)$. Subtracting the true solution gives
\[ y_t -\frac{1}{x} = y_{t-1} - \frac{1}{x} - \eta x \left( y_{t-1} - \frac{1}{x} \right) = \left(1-\eta x\right)\left( y_{t-1} - \frac{1}{x} \right)
\]
For any \(0<\eta<1\) and any $x \in (0,1]$, we have $0 < (1-\eta x) < 1$ which implies $y_t \to 1/x$. (For $x<0$, simply flip the stepsize sign, $\eta$ to $-\eta$.) This implicit formulation is much simpler and more elegant: the operator $\cG(y,x)=y-\eta(xy-1)$ has \textit{no singularity} and \textit{no blow-up}.

The example indicates: intuitively, an implicit representation can realize a complicated map with singularities via a much simpler, smoother update operator \(\cG\). Next, we make it precise: we formally define what we mean by “simple” versus “complex,” and characterize—beyond the \(1/x\) example—the class of target functions for which an implicit representation admits such a simple form.

\begin{definition}[Lipschitz continuity]
\label{def:lip}
Let $(\sX,\|\cdot\|)$ and $(\sY,\|\cdot\|)$ be normed spaces, and let $\cQ:\sX\to\sY$.
We say $\cQ$ is \emph{$L$-Lipschitz} (globally Lipschitz) on $\sX$ if there exists $L>0$ such that
\[
\|\cQ(\vx_1)-\cQ(\vx_2)\| \;\le\; L\,\|\vx_1-\vx_2\|\quad\text{for all }\vx_1,\vx_2\in\sX,
\]
and the smallest such $L$ is the \emph{Lipschitz constant} (or \emph{Lipschitz modulus}), denoted as $\textup{Lip}(\cQ)$. If the Lipschitz constant $L < 1$, we say $\cQ$ is a \textit{contraction} with modulus $L$ (or $L$-\textit{contractive}) on $\sX$. 
Given $\vx\in\sX$, we say $\cQ$ is \emph{locally Lipschitz at $\vx$} if there exists a neighborhood $\sU$ of $\vx$ on which $\cQ$ is $L_\sU$-Lipschitz continuous for some $L_\sU>0$.
If $\cQ$ is locally Lipschitz at every $\vx\in\sX$, we say $\cQ$ is \emph{locally Lipschitz on $\sX$}.
\end{definition}

Intuitively, Lipschitz continuity limits how quickly a function’s value can change. When a function is differentiable, its Lipschitz modulus can be characterized by the norm of its first derivative via the mean-value theorem. For example, \(\cF(x)=1/x\) is locally Lipschitz on \([-1,1]\setminus\{0\}\) but not globally Lipschitz there, since \(|\mathrm{d}\cF/\mathrm{d}x|=1/x^2\) is unbounded as \(x\to 0\), causing local Lipschitz constants to blow up near the singularity. In contrast, the implicit update \(\cG(y,x)=y-\eta(xy-1)\) has simple partial derivatives \(|\partial\cG/\partial x|=|\eta\,y|\) and \(|\partial\cG/\partial y|=|1-\eta x|\) without singularity.

Locally Lipschitz mappings form a much richer class than globally Lipschitz ones. Typical examples (locally Lipschitz everywhere in their domains but not globally Lipschitz on the whole set) include: $\log x$ in $(0,1]$, $\tan x$ in $\bigl(-\tfrac{\pi}{2},\tfrac{\pi}{2}\bigr)$, $\sqrt{x}$ in $(0,1]$, $\Gamma(x)$ in $\sR \setminus \{0, -1, -2,\cdots\}$, etc.

For this reason, we refer to globally Lipschitz maps as “simple” operators and locally Lipschitz maps (which may exhibit large local slopes near certain inputs) as “complex.” Next, we formally state our main result: identifying a broad family of target functions for which implicit representations provide such simple update operators while expressing complex fixed-point mappings.

\begin{assumption}
    \label{asmp:f} 
    Let $\sX \subset \sR^d$ and $\cF:\sX\to\sR^n$ be locally Lipschitz on $\sX$.
\end{assumption}

We do not assume the domain $\sX$ to be bounded, compact, closed, or connected. 
For instance, $\sX=\sR \backslash \{0\}$ excludes the singular point and permits $\cF(x)=1/x$ to blow up at the interior gap $x=0$ while remaining locally Lipschitz on $\sX$. 
Another example is $\sX = \bigcup_{k \in \sZ} \bigl(k\pi -\tfrac{\pi}{2}, k\pi + \tfrac{\pi}{2}\bigr)$, where $\cF(x)=\tan x$ remains locally Lipschitz despite blowing up at the singularity points $\{k\pi + \tfrac{\pi}{2}\}_{k \in \sZ}$.

We now formalize what we mean by “simple” update rules—namely, \emph{regular implicit operators}.

\begin{definition}[Regular implicit operator]
\label{def:regular-g}
Let $\sX\subset\sR^d$ be bounded. An operator $\cG:\sR^n\times\sX\to\sR^n$ is \emph{regular} if: (i) For any $\vy \in \sR^n$, the map $\vx\mapsto \cG(\vy,\vx)$ is {globally} Lipschitz (w.r.t. $\vx$) on $\sX$, and the Lipschitz constant grows linearly w.r.t. $\|\vy\|$, and (ii) For each $\vx \in \sX$, there exists $\mu(\vx)\in (0,1)$, the map $\vy \mapsto \cG(\vy,\vx)$ is $\mu(\vx)$-{contractive} on $\sR^n$, and $\mu(\vx)$ is continuous w.r.t. $\vx$.
\end{definition}


A regular $\cG$ satisfies: (i) Fixing $\vy$, \textit{$\cG(\vy,\cdot)$ is globally Lipschitz in $\vx$}, this makes it a ``simple" operator, and (ii) Fixing $\vx$, \textit{$\cG(\cdot,\vx)$ is contractive in $\vy$}; by Banach’s theorem, this yields a unique fixed point $\vy_\ast(\vx)$ and guarantees that iterates of \eqref{eq:implicit-iterate} converge to it: $\vy_t(\vx) \to \vy_\ast(\vx)$. An example of regular $\cG$ is the aforementioned $\cG(y,x)=y-\eta(xy-1)$ on $x \in (0,1]$ with $0<\eta<1$. Moreover, regularity does not require joint Lipschitz properties.
With this definition, we present our main results.

\begin{theorem}[Sufficiency]
\label{thm:lip}
Under Assumption~\ref{asmp:f}, for any $\cF$ there exists a \emph{regular} implicit operator $\cG:\sR^n\times\sX\to\sR^n$ whose fixed-point map reproduces $\cF$: $\mathrm{Fix}\big(\cG(\cdot,\vx)\big)\;=\;\cF(\vx)$ for all $\vx\in\sX$.
\end{theorem}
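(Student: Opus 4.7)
My plan is to exhibit an explicit regular operator of \emph{variable-step averaging} form,
\begin{equation*}
\cG(\vy,\vx)\;=\;\bigl(1-\lambda(\vx)\bigr)\vy\;+\;\lambda(\vx)\cF(\vx),
\end{equation*}
where $\lambda:\sX\to(0,1)$ is a scalar function to be constructed. For any such $\lambda$, the fixed-point equation forces $\vy_\ast(\vx)=\cF(\vx)$, and $\cG(\cdot,\vx)$ is contractive in $\vy$ with modulus $\mu(\vx)=1-\lambda(\vx)\in(0,1)$, inheriting continuity from $\lambda$; part~(ii) of Definition~\ref{def:regular-g} then comes for free. Note that the illustrative $\cG(y,x)=y-\eta(xy-1)$ for $\cF(x)=1/x$ is precisely this ansatz with $\lambda(x)=\eta x$, and the reason it works is that the scalar $\lambda$ ``cancels'' the singular growth of $\cF$ so that $\lambda\cdot\cF\equiv\eta$ is trivially globally Lipschitz in $x$.

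Part~(i) of Definition~\ref{def:regular-g} is where the real work lies. Expanding the increment in $\vx$,
\[
\cG(\vy,\vx_1)-\cG(\vy,\vx_2)\;=\;\bigl(\lambda(\vx_2)-\lambda(\vx_1)\bigr)\vy\;+\;\bigl(\lambda(\vx_1)\cF(\vx_1)-\lambda(\vx_2)\cF(\vx_2)\bigr),
\]
shows that if $\lambda$ is globally $L_\lambda$-Lipschitz on $\sX$ and $\lambda\cdot\cF$ is globally $L_{\lambda\cF}$-Lipschitz on $\sX$, then $\vx\mapsto\cG(\vy,\vx)$ has Lipschitz constant at most $L_\lambda\|\vy\|+L_{\lambda\cF}$, which is linear in $\|\vy\|$. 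Thus the theorem reduces to a single task: construct a globally Lipschitz $\lambda:\sX\to(0,1)$ whose product with $\cF$ is also globally Lipschitz on $\sX$.

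To build $\lambda$ I would combine the locally Lipschitz hypothesis on $\cF$ with the paracompactness of the metric space $\sX\subset\sR^d$. Local Lipschitzness gives, for every $\vx\in\sX$, a relatively open neighborhood $V_\vx\subset\sX$ on which $\|\cF\|$ is bounded by some $B_\vx$ and $\cF$ is $L_\vx$-Lipschitz. Refine this cover to a countable locally finite open cover $\{U_k\}_{k\ge 1}$ of $\sX$ with associated constants $B_k,L_k$, and pick a subordinate globally Lipschitz partition of unity $\{\psi_k\}$ on $\sX$ (constructible from distance functions to the complements of the $U_k$), with $\mathrm{supp}\,\psi_k\subset U_k$ and global Lipschitz constant $\Lambda_k$. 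Then set
\[
\lambda(\vx)\;=\;\sum_{k\ge 1}\psi_k(\vx)\,\lambda_k,\qquad \lambda_k\;=\;\frac{2^{-k}}{1+\Lambda_k+\Lambda_k B_k+L_k}\;\in(0,1).
\]
Summability of $(\lambda_k\Lambda_k)_k$ and of $(\lambda_k(\Lambda_k B_k+L_k))_k$ then gives, essentially by bookkeeping: $\lambda$ is globally Lipschitz with constant $L_\lambda\le 1$; $0<\lambda(\vx)\le\max_k\lambda_k<1$ on $\sX$ (since $\{\psi_k\}$ is a partition of unity with each $\lambda_k\in(0,1)$); and writing $\lambda\cF=\sum_k(\psi_k\lambda_k)\cF$, each term (extended by $0$ off $U_k$) is globally $\lambda_k(\Lambda_k B_k+L_k)$-Lipschitz on $\sX$, and the tail sums to $L_{\lambda\cF}\le 1$.

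The main obstacle is not the algebraic verification above but the underlying \emph{geometric construction}: producing the locally finite cover with a Lipschitz partition of unity in situations where $\sX$ need not be open, compact, or connected, and $\cF$ may blow up near limit points of $\sX$ outside $\sX$ (e.g.\ $\sX=\sR\setminus\{0\}$ with $\cF(x)=1/x$). The radii of the $U_k$ must be shrunk as $\vx$ approaches such limit points so that $\cF$ stays bounded and Lipschitz on each $U_k$, and the Lipschitz constants $\Lambda_k$ of the partition functions will grow correspondingly; the point of the weights $\lambda_k$ chosen above is precisely to damp out both $\Lambda_k$ and $\Lambda_k B_k+L_k$ summably at once, thereby securing the two global Lipschitz properties simultaneously and completing the verification of regularity.
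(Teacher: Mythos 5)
Your ansatz $\cG(\vy,\vx)=(1-\lambda(\vx))\vy+\lambda(\vx)\cF(\vx)$ and the reduction to finding a scalar $\lambda:\sX\to(0,1)$ with $\lambda$ and $\lambda\cF$ globally Lipschitz is exactly what the paper does (there $\lambda$ is called $\varepsilon$, and $\cG=\cF+(1-\varepsilon)(\vy-\cF)$ is the same operator written differently). Your observation that the paper's $1/x$ example is this ansatz with $\lambda(x)=\eta x$ is also correct, and the algebraic verifications (contractivity in $\vy$, Lipschitz growth linear in $\|\vy\|$) match the paper's.

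Where the two arguments diverge is in constructing $\lambda$. You propose a singularity-adapted locally finite cover $\{U_k\}$ with a subordinate Lipschitz partition of unity $\{\psi_k\}$ and geometrically decaying weights $\lambda_k$ that simultaneously damp $\Lambda_k$ and $\Lambda_k B_k+L_k$. The paper instead (for bounded $\sX$) first extends $\cF$ to $\overline{\sX}$, isolates the closed ``bad set'' $\sD\subset\overline{\sX}\setminus\sX$ where the extension fails to be locally Lipschitz, and defines $\varepsilon$ as a monotone, $1$-Lipschitz transform of the $1$-Lipschitz distance $d(\cdot,\sD)$, namely $\varepsilon(\vx)=\hat\varepsilon(d(\vx,\sD))/(1+\hat\varepsilon(d(\vx,\sD)))$ with $\hat\varepsilon(r)=\int_0^r\hat h(s)\,ds$ and $\hat h(r)=1/(h_1(r)+h_2(r)+1)$, where $h_1,h_2$ track the Lipschitz constant and supremum of $\cF$ on $\{d(\cdot,\sD)\ge r\}$. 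This sidesteps any partition of unity entirely; a grid-based tent-function partition appears only in the separate step that patches the bounded construction over an unbounded $\sX$, and there the partition functions are uniformly Lipschitz by design. So the routes are genuinely different, and the paper's buys you something concrete: the two global-Lipschitz claims reduce to elementary estimates on a single distance function, whereas your route concentrates all the difficulty into one unverified assertion.

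That assertion is the real gap in your proposal, and you acknowledge it yourself: you need each $\psi_k$ to be \emph{globally} Lipschitz on $\sX$ with a finite constant $\Lambda_k$, where $\{\psi_k\}$ is subordinate to a cover adapted to the blowup of $\cF$. The naive construction $\psi_k=d(\cdot,\sX\setminus U_k)/\sum_j d(\cdot,\sX\setminus U_j)$ is only \emph{locally} Lipschitz in general: the denominator is positive on $\sX$ but need not be bounded below on $U_k$, because $\sX$ need not be open or closed and the $U_k$ can accumulate at boundary points of $\sX$ that lie outside $\sX$. Without a lower bound on the denominator over $\operatorname{supp}\psi_k$, each $\Lambda_k$ is not obviously finite, and your weights $\lambda_k$ cannot damp an infinite constant. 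The gap is fillable --- for instance by building the cover and bump functions on the open set $\sR^d\setminus\sD$ à la Whitney (side lengths $\sim d(\cdot,\sD)$, bounded overlap, $\Lambda_k\sim\ell_k^{-1}$ finite for each $k$) and then restricting to $\sX$ --- but as written the existence of the required partition is postulated rather than proved, and this is precisely the step the paper's distance-function construction was designed to avoid.
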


\begin{theorem}[Necessity]
\label{thm:lip-2}
Let $\sX\subset\sR^d$ and let $\cG:\sR^n\times\sX\to\sR^n$ be \emph{regular}. For every $\vx\in\sX$, $\cG(\cdot,\vx)$ has a unique fixed point $\vy_\ast$, and the fixed-point map $\vx\mapsto\vy_\ast(\vx)$ is locally Lipschitz on $\sX$.
\end{theorem}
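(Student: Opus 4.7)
The plan is straightforward once the two parts of regularity are separated. First, for each fixed $\vx\in\sX$, the map $\vy\mapsto\cG(\vy,\vx)$ is a $\mu(\vx)$-contraction on the complete space $(\sR^n,\|\cdot\|)$, so Banach's fixed-point theorem immediately yields existence and uniqueness of $\vy_\ast(\vx)$.

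For local Lipschitzness, I would fix an arbitrary $\vx_0\in\sX$ and analyze $\vy_\ast$ on a bounded neighborhood of it. The workhorse identity is a telescoping of $\cG$ in its two arguments: for $\vx_1,\vx_2$ near $\vx_0$,
\[
\vy_\ast(\vx_1)-\vy_\ast(\vx_2) \;=\; \bigl[\cG(\vy_\ast(\vx_1),\vx_1)-\cG(\vy_\ast(\vx_2),\vx_1)\bigr] + \bigl[\cG(\vy_\ast(\vx_2),\vx_1)-\cG(\vy_\ast(\vx_2),\vx_2)\bigr].
\]
The first bracket is bounded by $\mu(\vx_1)\|\vy_\ast(\vx_1)-\vy_\ast(\vx_2)\|$ via contractivity of $\cG(\cdot,\vx_1)$, and the second by $L(\vy_\ast(\vx_2))\|\vx_1-\vx_2\|$ via the global Lipschitz property of $\cG(\vy_\ast(\vx_2),\cdot)$, where $L(\vy)\le a+b\|\vy\|$ is the linear-growth Lipschitz bound from Definition~\ref{def:regular-g}(i). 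Rearranging yields
\[
\|\vy_\ast(\vx_1)-\vy_\ast(\vx_2)\| \;\le\; \frac{L(\vy_\ast(\vx_2))}{1-\mu(\vx_1)}\,\|\vx_1-\vx_2\|,
\]
and it remains to bound numerator and denominator uniformly on a neighborhood of $\vx_0$. By continuity of $\mu$ together with $\mu(\vx_0)<1$, there is a neighborhood $\sU$ of $\vx_0$ on which $\mu(\vx_1)\le\tfrac{1+\mu(\vx_0)}{2}<1$, so the denominator is bounded below on $\sU$.

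The main obstacle is the numerator: the linear-growth clause couples the Lipschitz modulus of $\cG$ in $\vx$ to the a priori unknown size of $\vy_\ast$. I would resolve this by a bootstrap on the same inequality. Specializing $\vx_2=\vx_0$ and using that $L(\vy_\ast(\vx_0))$ is a finite constant, one obtains on $\sU$ a bound of the form $\|\vy_\ast(\vx_1)\|\le\|\vy_\ast(\vx_0)\|+C\|\vx_1-\vx_0\|$; choosing $\sU$ bounded then gives $\sup_{\vx\in\sU}\|\vy_\ast(\vx)\|<\infty$, hence $L(\vy_\ast(\vx_2))\le a+b\sup_\sU\|\vy_\ast\|$ is uniformly bounded on $\sU$. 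Feeding both bounds back into the displayed estimate produces a single Lipschitz constant valid for all $\vx_1,\vx_2\in\sU$, establishing local Lipschitzness at $\vx_0$. Since $\vx_0\in\sX$ was arbitrary, this completes the proof, and notably no further topological assumption on $\sX$ is required.
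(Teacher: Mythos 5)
Your proof is correct, and it takes a genuinely more direct route than the paper's. The paper proves this (via its Theorem A.8) by running the Picard iteration $\vy_{k+1}(\vx)=\cG(\vy_k(\vx),\vx)$ from a fixed initialization $\vy_0$, bounding $\|\vy_\ast(\vx)\|$ with a geometric series, establishing the recursion $a_{k+1}\le \mu_r a_k + L_r h$ for $a_k=\|\vy_k(\vx_1)-\vy_k(\vx_2)\|$, and passing to the limit; for non-closed or unbounded $\sX$ it also invokes a closure/extension step and a grid-covering argument. You instead work directly with the fixed-point identity: the two-term telescope
\[
\vy_\ast(\vx_1)-\vy_\ast(\vx_2) = \bigl[\cG(\vy_\ast(\vx_1),\vx_1)-\cG(\vy_\ast(\vx_2),\vx_1)\bigr] + \bigl[\cG(\vy_\ast(\vx_2),\vx_1)-\cG(\vy_\ast(\vx_2),\vx_2)\bigr]
\]
plus contractivity in $\vy$ and the $\vx$-Lipschitz bound give the perturbation estimate in one step, and the coupling of the $\vx$-Lipschitz modulus to $\|\vy_\ast\|$ is resolved by the short bootstrap with $\vx_2=\vx_0$. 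Both arguments are instances of the contraction-mapping perturbation lemma, but yours avoids the auxiliary iterate sequence entirely, needs no extension to $\overline{\sX}$, and, because it is purely local (fix $\vx_0$, shrink $\sU$ until $\mu$ stays below $\tfrac{1+\mu(\vx_0)}{2}$ and $\sU$ is bounded), it works uniformly whether or not $\sX$ is bounded—exactly as you note. The one thing worth making explicit when writing this up is that the right-hand side $L(\vy_\ast(\vx_2))/(1-\mu(\vx_1))$ is asymmetric in $\vx_1,\vx_2$; the uniform bounds on $\sU$ (on $\sup_\sU\|\vy_\ast\|$ and on $\sup_\sU\mu$) are precisely what convert it into a single symmetric Lipschitz constant on $\sU$.
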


Proofs are deferred to Appendix \ref{sec:app-main}. Theorem~\ref{thm:lip} provides an affirmative answer to (Q1) and (Q2) posed in the introduction. It proves that for any locally Lipschitz target $\cF$ on a bounded domain, there exists a \textit{regular} implicit operator $\cG$, whose iterations converge to the target $\vy_t(\vx)\to\cF(\vx)$ for all $\vx$. This demonstrates that the expressive power of implicit models \textbf{not only matches} that of explicit models \textbf{but also provides a distinct expressive benefit}: \textit{a relatively simple (regular) implicit representation can yield a complex fixed-point mapping}. Complementarily, Theorem~\ref{thm:lip-2} shows the boundary is tight: fixed points induced by any regular $\cG$ are \emph{necessarily} locally Lipschitz. Together, the two results give an exact expressivity characterization for regular implicit models.


\textbf{What does our theory imply?}
Take a locally Lipschitz target $\cF$ (e.g., the curve in Fig.~\ref{fig:toy-diagram}). Our results guarantee the existence of a \emph{regular} implicit operator $\cG$ such that the iteration $\vy_t=\cG(\vy_{t-1},\vx)$ with $\vy_0=\vzero$ converges: $\vy_t(\vx)\to\cF(\vx)$. Consider the first iterate and its Lipschitz property:
\[\vy_1(\vx)=\cG(\vzero,\vx) \quad \implies \quad \textup{Lip}(\vy_1) = \sup_{\vx,\vx^\prime} \frac{\|\cG(\vzero,\vx)-\cG(\vzero,\vx^\prime)\|}{\|\vx-\vx^\prime\|} = \textup{Lip}(\cG(\vzero,\cdot)).
\]
Because a regular operator $\mathcal{G}$ is globally Lipschitz by definition, $\vy_1(\cdot)$ is restricted to representing ``simple," globally smooth mappings. However, as iterations progress, $\vy_t$ converges toward $\mathcal{F}$. If the target $\mathcal{F}$ features singularities (regions where local slopes become large or unbounded), the effective Lipschitz constant of the iterate $\mathbf{y}_t(\cdot)$ naturally grows with $t$ to match that complexity:
\[
\lim_{t\to\infty}\frac{\|\vy_t(\vx)-\vy_t(\vx^\prime)\|}{\|\vx-\vx^\prime\|} = \frac{\|\cF(\vx)-\cF(\vx^\prime)\|}{\|\vx-\vx^\prime\|}.
\] 
This dynamic highlights a fundamental distinction: While explicit networks scale their \textit{model size} to approximate locally Lipschitz targets \cite{beneventano2021deep}, implicit models can scale expressivity with \textit{test-time compute}. Since our theory guarantees a regular operator can define a complex equilibrium, iterating this single operator realizes this complexity \textit{without adding parameters}.

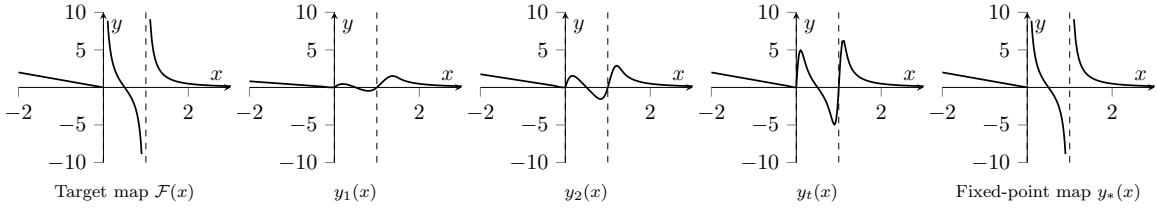
\begin{figure*}[t] 
    \centering
    \resizebox{\linewidth}{!}{\begin{tikzpicture}

  \pgfplotsset{
    every axis/.append style={
      title style={at={(0.5,-0.18)}, anchor=north, font=\footnotesize}
    }
  }

\begin{groupplot}[
    group style={group size=5 by 1, horizontal sep=0.3cm},
    width=5cm, height=4cm,
    axis lines=middle,
    xmin=-2, xmax=3,
    ymin=-10, ymax=10,
    samples=50,
    clip=false,
    xlabel={$x$}, ylabel={$y$},
    tick align=outside,
]

\nextgroupplot[title={Target map $\cF(x)$}]
\addplot[thick, domain=-2:0] {-x};
\addplot[thick, domain=0.1:0.9] {1/x + 1/(x-1)};
\addplot[thick, domain=1.1:3] {1/(x-1) - 1/x};
\addplot[dashed] coordinates {(0,-10) (0,10)};
\addplot[dashed] coordinates {(1,-10) (1,10)};

\nextgroupplot[title={$y_1(x)$}]
\addplot[thick, domain=-2:0] {(1 - (1 - min(0.1, abs(x*x*(x-1)*(x-1))))^5) * (-x)};
\addplot[thick, domain=0:1] {(1 - (1 - min(0.5, abs(x*x*(x-1)*(x-1))))^5) * (1/x + 1/(x-1))};
\addplot[thick, domain=1:3] {(1 - (1 - min(0.5, abs(x*x*(x-1)*(x-1))))^5) * (1/(x-1) - 1/x)};
\addplot[dashed] coordinates {(0,-10) (0,10)};
\addplot[dashed] coordinates {(1,-10) (1,10)};

\nextgroupplot[title={$y_2(x)$}]
\addplot[thick, domain=-2:0] {(1 - (1 - min(0.1, abs(x*x*(x-1)*(x-1))))^20) * (-x)};
\addplot[thick, domain=0:1] {(1 - (1 - min(0.5, abs(x*x*(x-1)*(x-1))))^20) * (1/x + 1/(x-1))};
\addplot[thick, domain=1:3] {(1 - (1 - min(0.5, abs(x*x*(x-1)*(x-1))))^20) * (1/(x-1) - 1/x)};
\addplot[dashed] coordinates {(0,-10) (0,10)};
\addplot[dashed] coordinates {(1,-10) (1,10)};

\nextgroupplot[title={$y_t(x)$}]
\addplot[thick, domain=-2:0] {(1 - (1 - min(0.1, abs(x*x*(x-1)*(x-1))))^100) * (-x)};
\addplot[thick, domain=0:1] {(1 - (1 - min(0.5, abs(x*x*(x-1)*(x-1))))^100) * (1/x + 1/(x-1))};
\addplot[thick, domain=1:3] {(1 - (1 - min(0.5, abs(x*x*(x-1)*(x-1))))^100) * (1/(x-1) - 1/x)};
\addplot[dashed] coordinates {(0,-10) (0,10)};
\addplot[dashed] coordinates {(1,-10) (1,10)};

\nextgroupplot[title={Fixed-point map $y_\ast(x)$}]
\addplot[thick, domain=-2:0] {-x};
\addplot[thick, domain=0.1:0.9] {1/x + 1/(x-1)};
\addplot[thick, domain=1.1:3] {1/(x-1) - 1/x};
\addplot[dashed] coordinates {(0,-10) (0,10)};
\addplot[dashed] coordinates {(1,-10) (1,10)};

\end{groupplot}
\end{tikzpicture}}
    \caption{(Conceptual diagram) A simple implicit update expresses a complex map via iteration.}
    \label{fig:toy-diagram}
\end{figure*}

\textbf{Generalization.} Someone may ask: {does a large Lipschitz constant of the fixed-point map $\vy_\ast(\vx)$ imply sensitivity or poor generalization} (cf. \cite{pabbaraju2021estimating})? Our view is that this sensitivity is \emph{inherent to the target} $\cF$, not to the implicit representation—any faithful model, explicit or implicit, must track $\cF$’s sharp variations. Our case studies in Section \ref{sec:cases} confirm this: the target $\cF$ in many tasks is indeed steep somewhere and the effective Lipschitz grows as accuracy improves. Crucially, the implicit formulation can realize such targets with a \textit{simple} operator $\cG$, which regularizes training and supports good generalization in practice.

\textbf{Insights for practitioners.} A substantial line of work (e.g., \cite{el2021implicit,winston2020monotone,jafarpour2021robust,revay2020lipschitz,havens2023exploiting}) enforces a global Lipschitz bound on the fixed-point map $\vy_\ast(\vx)$. Typically, the model is parameterized as $\cG(\vy,\vx)=\sigma(\mA\vy+\mB\vx+\vb)$, and by imposing specific algebraic structure on $\mA$ and $\mB$,
one ensures that $\vy_\ast(\vx)$ is globally Lipschitz in $\vx$. While this indeed improves robustness, our theory shows it \textit{constrains expressivity and undercuts the unique advantage of implicit models}. Our recommendation is different: rather than imposing uniform Lipschitz constraints, incorporate case-by-case \textit{domain-specific knowledge, priors, or constraints} (as illustrated in our case studies Sec. \ref{sec:cases}). This method provides effective regularization, leading to robustness and strong test performance while unlocking the full power of implicit models—representing complex maps with relatively simple operators.

\section{Case Studies}
\label{sec:cases}

In this section, we present four case studies. For the first three tasks, we (i) verify that the target satisfies Assumption~\ref{asmp:f}; (ii) specify a domain-informed architecture for $\cG$; (iii) confirm empirically that, under standard training without explicitly enforcing $\cG$ to be regular, \textit{the learned operators $\cG$ exhibit these properties}---i.e., $\cG$ is Lipschitz in $\vx$ and iterates $\vy_t$ converge (see Appendix~\ref{app:training} for training strategies and discussions regarding regularity guarantees); and (iv) demonstrate that expressive power scales with test-time iterations. Finally, we extend this analysis to LLM reasoning to validate our predictions in a domain where strict mathematical definitions are less applicable.

\subsection{Case Study 1: Image Reconstruction (Inverse problems)}

Inverse problems in imaging seek to recover an image \(\vy_\ast\in\sR^n\) from partial, noisy measurements $\vx = \mA \vy_\ast + \vn \in \sR^d ~ (d<n)$, where \(\mA\) is a known linear operator and \(\vn\) is noise. A common prior is that \(\vy_\ast\) lies near a smooth data manifold \(\sM\subset\sR^n\). To recover \(\vy_\ast\), a standard estimator solves
\begin{equation}
\label{eq:mapping-1a}
\min_{\vy\in\sR^n}\;\frac{1}{2}\|\vx-\mA\vy\|^2 \;+\; \frac{\alpha}{2}\,\mathrm{dist}^2(\vy,\sM),
\end{equation}
or, equivalently, a variable–splitting surrogate
\begin{equation}
\label{eq:mapping-1b}
\min_{\vy,\vz\in\sR^n}\;\frac{1}{2}\|\vx-\mA\vy\|^2 \;+\; \frac{\alpha}{2}\,\mathrm{dist}^2(\vz,\sM) \;+\; \frac{\beta}{2}\|\vy-\vz\|^2.
\end{equation}
Next we will show that, under mild assumptions, both \eqref{eq:mapping-1a} and \eqref{eq:mapping-1b} admit a unique minimizer for each \(\vx\) in a bounded set, and the solution map \(\vx\mapsto\hat{\vy}(\vx)\) is \emph{locally Lipschitz}. Hence the reconstruction target falls within Assumption~\ref{asmp:f} and is covered by our expressivity results in Section~\ref{sec:main-results}.

\begin{assumption}
\label{assume:M}
Let $\sM\subset\sR^n$ be a compact, $\cC^2$, embedded (possibly nonconvex) submanifold with positive reach $\tau>0$. Assume the forward operator $\mA:\sR^n\!\to\!\sR^d$ is $(\mu,L)$–bi-Lipschitz when restricted to $\sM$ and let $\sigma_{\textup{max}}$ denote the maximal singular value of $\mA$.
\end{assumption}

These assumptions are modest: they are standard in prior work and supported by existing theory. Formal definitions (reach and bi-Lipschitz continuity) and relevant literature appear in Appendix~\ref{sec:app-inv}.

\begin{definition}
\label{def:X-inv-prob}
Define the admissible set of observations $\vx$ for \eqref{eq:mapping-1a} and \eqref{eq:mapping-1b}:
\[ \sX := \left\{ \vx: \vx = \mA \vy_\ast + \vn, \quad \textup{for some }\vy_\ast \in \sM, ~~\|\vn\| < \frac{1}{80}\frac{\mu^5}{\sigma^2_{\textup{max}}L^2} \tau.  \right\}
\]
\end{definition}

\begin{theorem}
\label{thm:inverse-lip-1a}
Under Assumption \ref{assume:M}, there exists $\alpha > 0$ for all $\vx \in \sX$ such that the minimization problem \eqref{eq:mapping-1a} yields a unique minimizer $\hat{\vy}$. Let $\cF_{1a}$: $\vx \mapsto \hat{\vy}$ denote the associated solution map from input $\vx$ to the recovery $\hat{\vy}$. Then $\cF_{1a}$ is locally Lipschitz continuous on $\sX$.
\end{theorem}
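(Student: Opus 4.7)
The plan is to choose $\alpha$ so that every minimizer of $\Phi(\vy;\vx):=\frac{1}{2}\|\vx-\mA\vy\|^2+\frac{\alpha}{2}\dist^2(\vy,\sM)$ is trapped inside a tubular neighborhood of $\sM$ where positive-reach theory renders $\dist^2(\cdot,\sM)$ smooth, then to establish strong convexity of $\Phi(\cdot;\vx)$ on that region by combining the curvature bounds from the reach with the bi-Lipschitz property of $\mA|_\sM$. Uniqueness, existence, and local Lipschitz continuity of $\cF_{1a}$ then follow from strong convexity and an implicit-function-theorem argument applied to the first-order condition.

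\noindent\textbf{Step 1 (localization).} Since $\vy_\ast\in\sM$, $\Phi(\vy_\ast;\vx)=\tfrac{1}{2}\|\vn\|^2$. Any minimizer $\hat{\vy}$ therefore satisfies $\tfrac{\alpha}{2}\dist^2(\hat{\vy},\sM)\le\tfrac{1}{2}\|\vn\|^2$, so $\dist(\hat{\vy},\sM)\le\|\vn\|/\sqrt{\alpha}$. Choosing $\alpha$ of order $\sigma_{\max}^2/\mu^2$ and using the noise bound $\|\vn\|<\tfrac{1}{80}\,\mu^5/(\sigma_{\max}^2L^2)\cdot\tau$ places $\hat{\vy}$ well inside $\Utt$, where $\tfrac{1}{2}\dist^2(\cdot,\sM)$ is $\cC^2$ with gradient $\vy-P_\sM(\vy)$ and Hessian $I-\nabla P_\sM(\vy)$. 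A symmetric energy-barrier argument excludes minimizers outside $\Utt$, since there $\Phi\ge\tfrac{\alpha}{2}\tau^2>\tfrac{1}{2}\|\vn\|^2$.

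\noindent\textbf{Step 2 (strong convexity --- the main obstacle).} At $\pi:=P_\sM(\vy)$, decompose a test vector $\vv=\vt+\vn$ with $\vt\in T_\pi\sM$, $\vn\in N_\pi\sM$. Reach theory gives the block structure $(I-\nabla P_\sM)\vn=\vn$, while on tangent directions the operator reduces to a Weingarten-type expression of spectral norm at most $\dist(\vy,\sM)/(\tau-\dist(\vy,\sM))$, which is tiny on the region from Step~1. Bi-Lipschitz provides $\|\mA\vt\|\ge\mu\|\vt\|$, while $\|\mA\vn\|\le\sigma_{\max}\|\vn\|$ in general. Expanding $\vv^\top\nabla^2\Phi\vv=\|\mA\vv\|^2+\alpha\,\vv^\top(I-\nabla P_\sM)\vv$ and absorbing the cross term $\langle\mA\vt,\mA\vn\rangle$ via Young's inequality should yield $\nabla^2\Phi(\vy;\vx)\succeq cI$ uniformly for $\vy$ in the region from Step~1, with explicit $c=c(\mu,L,\sigma_{\max},\alpha,\tau)>0$. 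I anticipate this balancing act---positive $\mathcal{A}$-term on tangent, full $\alpha$-identity on normal, against the negative curvature term on tangent and the Young losses---to be the most delicate step, and the precise prefactor $\tfrac{1}{80}\mu^5/(\sigma_{\max}^2L^2)$ in Definition~\ref{def:X-inv-prob} is exactly the slack needed so that the positive contributions dominate uniformly.

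\noindent\textbf{Step 3 (uniqueness and local Lipschitz regularity).} Strong convexity on the region containing every minimizer yields uniqueness, and existence is standard from lower semicontinuity plus coercivity of $\Phi(\cdot;\vx)$. The first-order condition $\mA^\top(\mA\hat{\vy}-\vx)+\alpha(\hat{\vy}-P_\sM(\hat{\vy}))=0$ defines $\hat{\vy}$ implicitly; since $P_\sM$ is $\cC^1$ on $\Utt$ and $\nabla^2\Phi$ is invertible (by Step~2), the implicit function theorem gives that $\vx\mapsto\hat{\vy}(\vx)$ is $\cC^1$, hence locally Lipschitz, on an open neighborhood of each admissible $\vx$. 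The strong-convexity estimate also delivers the quantitative bound $\|\hat{\vy}(\vx_1)-\hat{\vy}(\vx_2)\|\le(\sigma_{\max}/c)\|\vx_1-\vx_2\|$ directly, placing $\cF_{1a}$ within the class of Assumption~\ref{asmp:f} and hence within the scope of Theorem~\ref{thm:lip}.
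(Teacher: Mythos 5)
Your outline matches the paper's overall architecture (localize, establish positive-definite Hessian via tangent/normal decomposition, conclude uniqueness and invoke the implicit function theorem), but there is a genuine gap in the passage from Step~2 to uniqueness in Step~3.

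Your Step~1 localizes every minimizer to a thin \emph{tubular neighborhood} $\overline{\sU}_s(\sM)$ of the manifold, and your Step~2 establishes $\nabla^2\Phi\succ 0$ on that region. You then assert that ``strong convexity on the region containing every minimizer yields uniqueness.'' This inference fails because $\overline{\sU}_s(\sM)$ is generally \emph{not convex} (think of an annulus around a circle). A function with positive-definite Hessian on a non-convex open set can have several isolated critical points, as long as the line segment joining two of them exits the set; the standard proof of uniqueness (integrating the Hessian along the segment) simply does not apply. The paper closes exactly this gap with its Step~2, which you do not have: it uses the on-manifold bi-Lipschitz bound on $\mA$ to show that any minimizer's projection $\vp(\hat{\vy})$ must be close to the true $\vy_\ast$ (else $\|\mA\hat{\vy}-\vx\|$ would already exceed $\|\vn\|$, contradicting optimality), hence $\hat{\vy}\in\sB(\vy_\ast,s)$. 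Since that ball is convex and contained in the tube where the Hessian is positive definite, strict convexity on a convex set finally gives uniqueness. In your proposal the bi-Lipschitz hypothesis is used only to bound the tangential Hessian block; you never use its injectivity-on-$\sM$ content, which is precisely what pins the minimizer near $\vy_\ast$ and makes the convexity argument legitimate. Incorporating this localization-to-a-ball step would repair the proof.

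A smaller point: the quantitative bound $\|\hat{\vy}(\vx_1)-\hat{\vy}(\vx_2)\|\le(\sigma_{\max}/c)\|\vx_1-\vx_2\|$ you announce at the end would be a \emph{global} Lipschitz estimate, which is stronger than what is true (the Hessian lower bound $c$ and the relevant convex neighborhood both depend on $\vx$), so the conclusion should be phrased locally, as in the theorem statement and as the implicit-function-theorem argument actually delivers.
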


\begin{theorem}
\label{thm:inverse-lip-1b}
Under Assumption \ref{assume:M}, there exist $\alpha,\beta > 0$ for all $\vx \in \sX$ such that the minimization problem \eqref{eq:mapping-1b} yields a unique minimizer $(\hat{\vy},\hat{\vz})$. Let $\cF_{1b}$: $\vx \mapsto \hat{\vy}$ denote the associated solution map from input $\vx$ to the recovery $\hat{\vy}$. Then $\cF_{1b}$ is locally Lipschitz continuous on $\sX$.
\end{theorem}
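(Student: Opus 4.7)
The plan is to reduce problem \eqref{eq:mapping-1b} to an instance of problem \eqref{eq:mapping-1a} by minimizing out $\vy$ analytically, and then invoke Theorem \ref{thm:inverse-lip-1a}. For fixed $(\vz,\vx)$, the inner minimization over $\vy$ is a strongly convex quadratic with closed form $\hat\vy(\vz,\vx) = \mH^{-1}(\mA^\top\vx + \beta\vz)$, where $\mH := \mA^\top\mA + \beta I$. Substituting back and using the push-through identity $\mA\mH^{-1} = \mK\mA$ with $\mK := (\mA\mA^\top + \beta I)^{-1}$, together with $I-\mA\mH^{-1}\mA^\top = \beta\mK$, a short calculation collapses the objective to
\[
\phi(\vz,\vx) = \tfrac{\beta}{2}(\vx-\mA\vz)^\top \mK(\vx-\mA\vz) + \tfrac{\alpha}{2}\,\mathrm{dist}^2(\vz,\sM).
\]
Setting $\mM := \sqrt{\beta}\,\mK^{1/2}$, $\tilde\mA := \mM\mA$, and $\tilde\vx := \mM\vx$ rewrites the reduced problem as $\min_\vz \tfrac12\|\tilde\vx - \tilde\mA\vz\|^2 + \tfrac{\alpha}{2}\mathrm{dist}^2(\vz,\sM)$, which is exactly an instance of \eqref{eq:mapping-1a} in the single variable $\vz$.

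Next I would verify that Assumption \ref{assume:M} transfers to the transformed data. The eigenvalues of $\mM$ lie in $\bigl[\sqrt{\beta/(\sigma_{\max}^2+\beta)},\,1\bigr]$, so $\tilde\mA$ is $(\tilde\mu,\tilde L)$-bi-Lipschitz on $\sM$ with $\tilde\mu = \mu\sqrt{\beta/(\sigma_{\max}^2+\beta)}$ and $\tilde L = L$, while $\sigma_{\max}(\tilde\mA)\le\sigma_{\max}$ and the transformed noise satisfies $\|\mM\vn\|\le\|\vn\|$. Since $\tilde\mu\nearrow\mu$ as $\beta\to\infty$, one can choose $\beta$ large enough that the original bound in Definition \ref{def:X-inv-prob} forces $\|\mM\vn\| < \tfrac{1}{80}\tfrac{\tilde\mu^5}{\sigma_{\max}(\tilde\mA)^2\tilde L^2}\tau$, placing $\tilde\vx$ in the admissible set for the reduced problem. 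Applying Theorem \ref{thm:inverse-lip-1a} to the reduced problem then supplies $\alpha>0$ such that for every $\vx\in\sX$ the minimizer $\hat\vz(\vx)$ is unique and the map $\vx\mapsto\hat\vz(\vx)$ is locally Lipschitz on $\sX$.

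To finish, the full joint minimizer is $\hat\vy(\vx) = \mH^{-1}(\mA^\top\vx + \beta\hat\vz(\vx))$: a fixed affine map composed with the locally Lipschitz map $\vx\mapsto(\vx,\hat\vz(\vx))$. Hence $\cF_{1b}:\vx\mapsto\hat\vy(\vx)$ is locally Lipschitz on $\sX$. Uniqueness of the pair $(\hat\vy,\hat\vz)$ follows from the uniqueness of $\hat\vz$ and strong convexity of the inner $\vy$-subproblem.

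The main obstacle is the parameter bookkeeping in the second step: Definition \ref{def:X-inv-prob} is stated in terms of the original constants $(\mu,L,\sigma_{\max})$, but the reduced problem's effective lower bi-Lipschitz constant $\tilde\mu$ is strictly smaller than $\mu$, so a naive application would shrink the admissible set. Choosing $\beta$ sufficiently large makes $\tilde\mu^5/\sigma_{\max}(\tilde\mA)^2$ arbitrarily close to $\mu^5/\sigma_{\max}^2$, guaranteeing that the original $\sX$ is contained in the reduced problem's admissible set, after which $\alpha$ is chosen as in Theorem \ref{thm:inverse-lip-1a}. A cleaner but longer alternative would replay the argument of Theorem \ref{thm:inverse-lip-1a} directly on the joint $(\vy,\vz)$ objective via a restricted-strong-convexity / implicit-function analysis of its first-order optimality system.
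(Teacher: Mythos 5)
Your reduction is an elegant and genuinely different route from the paper's. You eliminate $\vy$ analytically, apply the push-through identity $\mA\mH^{-1}=\mK\mA$ and $\mK\mA\mA^\top=I-\beta\mK$ to collapse the partially minimized objective to $\frac{\beta}{2}(\vx-\mA\vz)^\top\mK(\vx-\mA\vz)+\frac{\alpha}{2}\mathrm{dist}^2(\vz,\sM)$, and then absorb $\beta\mK$ into a modified forward operator $\tilde\mA=\mM\mA$ and observation $\tilde\vx=\mM\vx$. That algebra is correct, and the final composition step (recovering $\hat\vy$ from $\hat\vz$ via the fixed affine map $\mH^{-1}(\mA^\top\vx+\beta\hat\vz)$) is fine. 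By contrast, the paper works directly on the joint $(\vy,\vz)$ objective: it proves existence, bounds the minimizers, shows positive definiteness of the $2n\times 2n$ Hessian via a Schur-complement calculation, and applies the implicit function theorem. Your approach, if it closed, would be noticeably shorter because it reuses the single-variable machinery.

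However, the ``parameter bookkeeping'' step you flag as ``the main obstacle'' is a genuine gap, and the proposed fix does not resolve it. Theorem~\ref{thm:inverse-lip-1a} as stated applies only to $\tilde\vx$ lying in the admissible set $\tilde\sX$ defined (by Definition~\ref{def:X-inv-prob} applied to $\tilde\mA$) by the radius $\tilde c_0 := \frac{1}{80}\frac{\tilde\mu^5}{\sigma_{\max}(\tilde\mA)^2\tilde L^2}\tau$. Working conservatively from Assumption~\ref{assume:M} alone, the best you can guarantee is $\tilde\mu\ge\mu\lambda_{\min}(\mM)$, $\tilde L\le L\lambda_{\max}(\mM)$, and $\sigma_{\max}(\tilde\mA)=\sigma_{\max}\lambda_{\min}(\mM)$ where $\lambda_{\min}(\mM)=\sqrt{\beta/(\sigma_{\max}^2+\beta)}$ and $\lambda_{\max}(\mM)=\sqrt{\beta/(\sigma_{\min}^2+\beta)}$. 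Plugging in, the best lower bound on $\tilde c_0$ is $c_0\,\lambda_{\min}(\mM)^3/\lambda_{\max}(\mM)^2$ with $c_0:=\frac{1}{80}\frac{\mu^5}{\sigma_{\max}^2 L^2}\tau$, while the worst case noise transforms as $\|\mM\vn\|\le\lambda_{\max}(\mM)\|\vn\|$. The required inclusion $\mM\sX\subseteq\tilde\sX$ is therefore implied only by $\|\vn\|<c_0\bigl(\lambda_{\min}(\mM)/\lambda_{\max}(\mM)\bigr)^3$, which is a strictly stronger constraint than $\|\vn\|<c_0$ for every finite $\beta$ (the ratio is $<1$ unless $\mA$ has a single singular value). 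Sending $\beta\to\infty$ makes this ratio approach $1$ but never reach it, and the theorem requires a single uniform $(\alpha,\beta)$ for all $\vx\in\sX$; there is always some $\vx$ with $\|\vn\|$ close enough to $c_0$ to escape the shrunken admissible set. So ``choose $\beta$ large enough'' proves at best the weaker statement in which $\beta$ depends on $\vx$.

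The escape route is to not treat Theorem~\ref{thm:inverse-lip-1a} as a black box: the constant $\frac{1}{80}$ in Definition~\ref{def:X-inv-prob} is deliberately smaller than what the 1a proof actually uses (the proof only needs $\|\vn\|<\frac{1}{20}\frac{\mu^5}{\sigma_{\max}^2 L^2}\tau$), so there is a hidden factor-of-four slack. With that slack and a fixed moderate $\beta\gtrsim\sigma_{\max}^2$, the inclusion goes through. But that requires opening up the proof of Theorem~\ref{thm:inverse-lip-1a} to quote the sharper internal constant, which defeats the purpose of a clean reduction. Your closing alternative---replaying the 1a argument on the joint $(\vy,\vz)$ objective via its first-order optimality system---is in fact what the paper does, and is what I would recommend here. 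One smaller slip: $\tilde L\le L$, not $\tilde L=L$; the bi-Lipschitz upper constant of $\tilde\mA$ on $\sM$ need not equal $L$.
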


\begin{corollary}
\label{cor:inverse-prob}
There must be a regular implicit operator $\cG(\vy,\vx)$ such that \(\textup{Fix}(\cG(\cdot,\vx)) = \cF_{1a}(\vx)\) for all $\vx\in\sX$. The same conclusion holds for $\cF_{1b}(\vx)$.
\end{corollary}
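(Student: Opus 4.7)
The plan is to obtain the corollary as a one-step consequence of the Sufficiency theorem (Theorem~\ref{thm:lip}) applied to the two solution maps $\cF_{1a}$ and $\cF_{1b}$ already shown to be locally Lipschitz by Theorems~\ref{thm:inverse-lip-1a} and~\ref{thm:inverse-lip-1b}. No new construction is needed: the regular operator $\cG$ is produced by the existence statement in Theorem~\ref{thm:lip}, so the work reduces to verifying that its hypotheses hold in the inverse-problem setting.

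First I would check that the admissible observation set $\sX$ from Definition~\ref{def:X-inv-prob} is bounded, which is required by Definition~\ref{def:regular-g} (and hence implicitly by Theorem~\ref{thm:lip}). Since $\sM$ is compact by Assumption~\ref{assume:M} and $\mA$ is a bounded linear map with largest singular value $\sigma_{\textup{max}}$, the image $\mA(\sM)$ is a bounded subset of $\sR^d$; adding a noise term $\vn$ with $\|\vn\|<\tfrac{1}{80}\tfrac{\mu^5}{\sigma_{\textup{max}}^2 L^2}\tau$ enlarges this set by a bounded tube, so $\sX$ is bounded. Second, the local Lipschitz continuity of $\cF_{1a}$ on $\sX$ is exactly the conclusion of Theorem~\ref{thm:inverse-lip-1a}, and the same for $\cF_{1b}$ via Theorem~\ref{thm:inverse-lip-1b}. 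Together these verify Assumption~\ref{asmp:f} for each solution map.

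Having satisfied the hypotheses, I would invoke Theorem~\ref{thm:lip} with target $\cF=\cF_{1a}$ to obtain a regular $\cG:\sR^n\times\sX\to\sR^n$ with $\textup{Fix}(\cG(\cdot,\vx))=\cF_{1a}(\vx)$ for every $\vx\in\sX$, and then repeat the invocation with $\cF=\cF_{1b}$ to get the analogous operator for the splitting formulation. There is no real obstacle: the only non-cosmetic step is the boundedness check on $\sX$, which compactness of $\sM$ and the uniform noise bound in Definition~\ref{def:X-inv-prob} immediately supply; everything else is a verbatim transport of the general theorem to the two instances at hand.
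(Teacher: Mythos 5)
Your proposal is correct and follows essentially the same route as the paper, which states that Corollary~\ref{cor:inverse-prob} follows immediately from Theorems~\ref{thm:lip}, \ref{thm:inverse-lip-1a}, and~\ref{thm:inverse-lip-1b}. Your extra step of verifying that $\sX$ (as in Definition~\ref{def:X-inv-prob}) is bounded—using compactness of $\sM$ and the uniform noise bound—is a sensible sanity check against Definition~\ref{def:regular-g}, but it is the same argument the paper implicitly relies on.
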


Proofs of the theorems are deferred to Appendix \ref{sec:app-inv}, and Corollary~\ref{cor:inverse-prob} follows immediately from Theorems~\ref{thm:lip}, \ref{thm:inverse-lip-1a}, and \ref{thm:inverse-lip-1b}. This corollary guarantees the existence of regular implicit models $\cG$ for image reconstruction. Next, we present how to implement $\cG$ in this context.

\textbf{Problem-specific $\cG$.}
We adopt \emph{algorithm-inspired} designs that mirror classical solvers for \eqref{eq:mapping-1a} and \eqref{eq:mapping-1b}.
Parameterizing these iterative solvers gives {problem-tailored implicit models}. In particular,
\begin{itemize}[leftmargin=5mm]
\item {Option I (PGD-style).}
To solve \eqref{eq:mapping-1a}, if \(\sM\) were known, one would use proximal gradient descent (PGD): $\vy_{t+1} = \textup{prox}_{\sigma}\!(\vy_t - \gamma\,\mA^\top(\mA\vy_t-\vx))$, with parameters \(\sigma,\gamma>0\), where \(\textup{prox}_{\sigma}\) is the proximal map of \(({\sigma}/{2}) \mathrm{dist}^2(\vy,\sM)\) (see Appendix~\ref{sec:app-prox}). 
In practice, we replace \(\textup{prox}_{\sigma}\) by a learnable neural network denoiser \(\cH_{\theta,\sigma}\) (parameters $\theta$ and noise level input $\sigma$) and obtain
\begin{equation}
\label{eq:pgd-deq}
\cG_{\Theta}(\vy,\vx)
= \cH_{\theta,\sigma}\!\Big(\vy - \gamma\,\mA^\top(\mA\vy-\vx)\Big),
\qquad
\Theta=\{\theta,\sigma,\gamma\}.
\end{equation}
\item {Option II (HQS-style).}
For \eqref{eq:mapping-1b}, a standard solver is half–quadratic splitting (HQS, see Appendix~\ref{sec:app-pnp}). 
Similar to Option~I, we replace the proximal map by a learned module and obtain 
\begin{equation}
\label{eq:hqs-deq}
\cG_{\Theta}(\vy,\vx)
\;=\;
\cH_{\theta,\sigma}\!\left(\,\big(\mA^\top\mA+\beta\mI\big)^{-1}\!\big(\mA^\top\vx+\beta\,\vy\big)\right),
\qquad
\Theta=\{\theta,\sigma,\beta\}.
\end{equation}
\end{itemize}
Here we follow the long-standing “plug-in denoiser” idea from Plug-and-Play (PnP) methods \cite{venkatakrishnan2013plug}, which replaces a proximal operator with an off-the-shelf denoiser inside an iterative solver (see brief bibliography in Appendix \ref{sec:app-pnp}). Unlike PnP, one can also train the \emph{entire} $\cG_\Theta$ as an implicit model, in both PGD-style \cite{gilton2021deep,winston2020monotone,zou2023deep,yu2024msdc,daniele2025deep,shenoy2025recovering} and HQS-style \cite{gkillas2023optimization} formulations. We adopt the latter.

\textbf{Questions.} Given the parameterizations in \eqref{eq:pgd-deq} and \eqref{eq:hqs-deq}, we examine: (i) are these $\cG_{\Theta}$ operators Lipschitz with respect to $\vx$; and (ii) do they, as our theory predicts, realize progressively more complex input–output mappings over iterations despite having simple per-iteration operators?

\textbf{Experiment settings.} We study image deblurring, $\vx = \mA(\vy_\ast) + \vn$, where $\mA$ is a motion-blur operator and $\vn$ is additive Gaussian noise. Using BSDS500 \cite{MartinFTM01}, we construct 200 training, 100 validation, and 200 test pairs $(\vx,\vy_\ast)$, yielding datasets $\sD_{\textup{inv,train}}$, $\sD_{\textup{inv,val}}$, and $\sD_{\textup{inv,test}}$. Implementation details (data preprocessing, model choices, and training) are in Appendix~\ref{sec:app-exp-inv}.

For evaluation, we analyze 100 iterations of the learned dynamics, $\vy_{t+1}(\vx)=\cG_{\Theta}(\vy_t(\vx),\vx), 0 \leq t \leq 99$ and $\vy_0 = \vzero$, on the test set $\sD_{\textup{inv,test}}=\{(\vx_i,\vy^\ast_{i})\}_{i=1}^{200}$. For each $i$, we create 5 perturbed ground truths $\vy^\ast_{i,j}, 1 \leq j \leq 5$, and for each $\vy^\ast_{i,j}$, we apply $\mA$, add noise, and then obtain $\vx_{i,j}$. The perturbed pairs $\{(\vx_{i,j},\vy^\ast_{i,j})\}_{i,j}$ form the perturbed dataset $\sD^\prime_{\textup{inv,test}}$. Details appear in Appendix \ref{sec:app-exp-inv}. We track two metrics, including an empirical Lipschitz estimate and reconstruction quality in PSNR (i.e., Peak Signal-to-Noise Ratio, higher PSNR means more accurate reconstruction, see appendix):
\[L_t := \max_{1 \leq i \leq 200}\max_{1 \leq j \leq 5} \frac{\| \vy_t(\vx_{i}) - \vy_t(\vx_{i,j}) \|}{ \|\vx_{i} - \vx_{i,j}\| }, \quad \textup{and} \quad P_t(i,j) := \textup{PSNR}(\vy_t(\vx_{i,j}), \vy^\ast_{i,j}),
\]
for $1 \leq i \leq 200, 0 \leq j \leq 5$, 
where $j=0$ means the original (unperturbed) sample, $\vx_{i,0}:=\vx_i,\vy^\ast_{i,0}:=\vy^\ast_i$. 
Here, $L_t$ estimates how complex the t-th iterate map $\vy_t(\cdot)$ is, while $P_t$ measures the reconstruction quality on \textit{both} the original dataset $\sD_{\textup{inv,test}}$ and the perturbed set $\sD^\prime_{\textup{inv,test}}$.

\textbf{Experiment results.}
\textbf{\underline{\textit{(i)}}} Results in Figure~\ref{fig:inv-theory} support our theory. 
Figure~\ref{fig:inv-lip} plots \(L_t\) versus \(t\), while Figure~\ref{fig:inv-psnr} reports the mean \(\pm\) std of \(\{P_t(i,j)\}_{i,j}\) versus \(t\).
At \(t=1\), the mapping \(\vy_1(\vx)=\cG_{\Theta}(\vzero,\vx)\) reflects a \textit{single application of \(\cG_\Theta\) and exhibits low Lipschitz constant}: \(L_1=0.140\) for PGD and \(L_1=0.436\) for HQS.
As \(t\) increases, \(\vy_t\) approaches the fixed point and \(L_t\) grows substantially, saturating around \(\approx 5.0\) for both models (Figure~\ref{fig:inv-lip}).
Meanwhile, the PSNR rises and stabilizes, indicating that \(\vy_t(\vx)\) converges toward the ground truth (Figure~\ref{fig:inv-psnr}).
Thus, the increase in \(L_t\) does not reflect divergence or instability; rather, \textit{it captures the greater complexity of the underlying target mapping \(\vx \mapsto \vy_\ast\), which is progressively expressed through iteration}.
\textbf{\underline{\textit{(ii)}}} We also provide a comparison (both visually and
quantitatively) to an explicit model in Figure \ref{fig:inv-visualize}. 
This baseline uses the identical DRUnet and is trained on the deblurring dataset with an end-to-end MSE loss. 
A visual inspection reveals that implicit models, particularly implicit HQS \eqref{eq:hqs-deq}, produce sharper images with better-recovered textures and fewer artifacts than the explicit baseline. This perceptual advantage is corroborated by the quantitative metrics, where the DEQ-HQS model achieves a significant PSNR gain of over 2dB on average across the entire test set.
\textbf{\underline{\textit{(iii)}}} Additional experiments showing a small implicit model outperforming larger explicit ones appear in Appendix~\ref{sec:app-exp-inv}.

\begin{figure}[t]
    \centering

    \begin{subfigure}[b]{0.48\textwidth}
        \includegraphics[width=\linewidth]{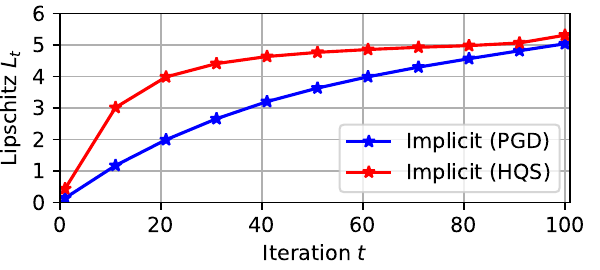}
\caption{Empirical Lipschitz \(L_t\) of \(\vy_t(\cdot)\) vs.\ iteration. \(L_t\) starts small at \(t{=}1\) and grows to a plateau (\(\sim\!5\)), indicating increasing expressivity of \(\vy_t(\cdot)\).}
        \label{fig:inv-lip}
    \end{subfigure}
    \hfill
    \begin{subfigure}[b]{0.48\textwidth}
        \includegraphics[width=\linewidth]{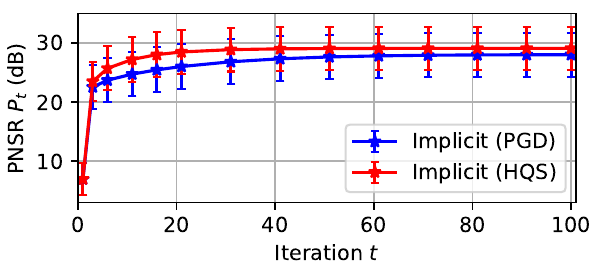}
\caption{Reconstruction quality \(P_t\) (mean \(\pm\) std over the original and perturbed test samples) increases and stabilizes: \(\vy_t(\vx)\) converges toward the truth.}
        \label{fig:inv-psnr}
    \end{subfigure}
\caption{Validation on image deblurring. Iterating a simple operator \(\cG_{\Theta}\) produces a complex fixed-point mapping: Lipschitz (a) grows, while accuracy (b) improves and stabilizes.}
    \label{fig:inv-theory}
\end{figure}

\begin{figure}[t]
    \centering

    \begin{subfigure}[b]{0.19\textwidth}
        \includegraphics[width=\linewidth]{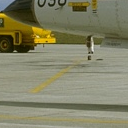}
        \centering
        Ground Truth \\ - \\ -
        \label{fig:inv-truth}
    \end{subfigure}
    \begin{subfigure}[b]{0.19\textwidth}
        \includegraphics[width=\linewidth]{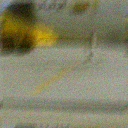}
        \centering
        Observation \\ - \\ -
        \label{fig:inv-blur}
    \end{subfigure}
    \begin{subfigure}[b]{0.19\textwidth}
        \includegraphics[width=\linewidth]{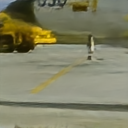}
        \centering
        Explicit:28.49dB \\ 26.97 $\pm$ 3.54 dB \\ Params: 32.64 M
        \label{fig:inv-unet}
    \end{subfigure}
    \begin{subfigure}[b]{0.19\textwidth}
        \includegraphics[width=\linewidth]{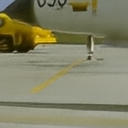}
        \centering
        Imp-PGD:30.03dB \\ 28.20 $\pm$ 3.70 dB \\ Params: 32.64 M
        \label{fig:inv-pgd}
    \end{subfigure}
    \begin{subfigure}[b]{0.19\textwidth}
        \includegraphics[width=\linewidth]{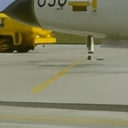}
        \centering
        Imp-HQS:31.53dB \\ 29.18 $\pm$ 3.66 dB \\  Params: 32.64 M
        \label{fig:inv-hqs}
    \end{subfigure}
    \caption{Visual results for deblurring. The top PSNR values (28.49, 30.03, or 31.53 dB) correspond to the single visualized image; the second line shows the average ($\pm$ std) over all test samples.}
    \label{fig:inv-visualize}
\end{figure}

\subsection{Case Study 2: Scientific Computing}
\label{sec:ns}

The Navier--Stokes (NS) equations are foundational to computational fluid dynamics. We focus on the 2D steady-state incompressible case on a periodic domain $\Omega:=[0,2\pi]^2$:
\begin{equation}
\label{eq:ns-velocity}
(u \cdot \nabla) u + \nabla p = \nu \Delta u + f, \quad \nabla \cdot u = 0 \quad \textup{on $\Omega$}
\end{equation}
where $u:\Omega \to \sR^2$ is the velocity field, $p:\Omega \to \sR$ is the pressure, $\nu > 0$ is the viscosity, $f:\Omega \to \sR^2$ is the external force. Solving NS equations refers to determining $u$ given $f$. 
Although global existence/smoothness of the solution given general forcings is famously open, classical results guarantee well-posedness under suitable conditions on $f$. 

\begin{theorem}[\cite{temam1995navier}]
\label{thm:ns-existence}
There exists a constant $c>0$ depending only on $\Omega$ such that, if $\|f\|_{L^2(\Omega)} \le c\,\nu^2$, then \eqref{eq:ns-velocity} admits a unique solution $u_\ast(f)$.
Let $\sH$ denote the space of admissible forcings\footnote{Details regarding the function spaces are provided in Appendix \ref{sec:app-ns}.}, and set $\sB_\nu:=\{f\in\sH:\ \|f\|_{L^2(\Omega)}\le c \nu^2\}$.
Then there exists a subset $\sH_\nu\subset\sB_\nu$ that is dense in $\sB_\nu$, on which the solution map $f\mapsto u_\ast(f)$ is locally Lipschitz.
\end{theorem}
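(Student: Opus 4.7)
The plan is to follow the classical Leray--Hopf framework for the 2D stationary Navier--Stokes equations: weakly formulate on a divergence-free Sobolev space, obtain an a priori bound via the energy identity, then absorb the nonlinearity using the 2D Ladyzhenskaya inequality to deduce both uniqueness and, with the same machinery, Lipschitz dependence on $f$. Concretely, I would work on $V:=\{v\in H^1_{\mathrm{per}}(\Omega)^2:\nabla\cdot v=0,\ \int_\Omega v\,dx=0\}$ with $\|v\|_V:=\|\nabla v\|_{L^2}$, and the weak form: find $u\in V$ with $\nu(\nabla u,\nabla v)+b(u,u,v)=(f,v)$ for all $v\in V$, where $b(u,v,w):=\int_\Omega (u\cdot\nabla)v\cdot w\,dx$. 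Two structural facts do the heavy lifting: $b(u,v,v)=0$ for $u\in V$, and the 2D Ladyzhenskaya/Gagliardo--Nirenberg bound $\|w\|_{L^4}^2\le C_L\|w\|_{L^2}\|\nabla w\|_{L^2}$, which after Poincar\'e on mean-zero fields yields $|b(u,v,w)|\le C_L\|\nabla u\|\|\nabla v\|\|\nabla w\|$ (absorbing constants into $C_L$).

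Existence and the a priori bound come from testing with $v=u$: $\nu\|\nabla u\|^2=(f,u)\le C_P\|f\|_{L^2}\|\nabla u\|$, hence $\|\nabla u\|\le (C_P/\nu)\|f\|_{L^2}$. A Galerkin approximation together with weak-compactness passage to the limit then produces a weak solution $u_\ast(f)\in V$ for every admissible $f$. For uniqueness under a shared forcing, let $u_1,u_2$ solve the weak form and set $w:=u_1-u_2$; using $b(u_2,w,w)=0$ yields $\nu\|\nabla w\|^2=-b(w,u_1,w)$, so $(\nu-C_LC_P\|f\|_{L^2}/\nu)\|\nabla w\|^2\le 0$. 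Choosing $c:=(2C_LC_P)^{-1}$ keeps the parenthesis $\ge\nu/2>0$ whenever $\|f\|_{L^2}\le c\nu^2$, forcing $w=0$ and giving uniqueness on all of $\sB_\nu$; this also explains the $\nu^2$-scaling of the threshold.

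For the local Lipschitz claim, I would repeat the argument with distinct forcings $f_1,f_2$: with $u_i:=u_\ast(f_i)$ and $w:=u_1-u_2$, subtracting the two weak equations and testing with $w$ gives $\nu\|\nabla w\|^2+b(w,u_1,w)=(f_1-f_2,w)$, hence $(\nu-C_LC_P\|f_1\|_{L^2}/\nu)\|\nabla w\|\le C_P\|f_1-f_2\|_{L^2}$. Define $\sH_\nu:=\{f\in\sH:\|f\|_{L^2}<c\nu^2\}$, the open sub-ball, which is dense in $\sB_\nu$ in the $L^2$ topology. On each closed sub-ball $\{\|f\|_{L^2}\le c'\nu^2\}$ with $c'<c$, the prefactor is uniformly bounded away from zero, so $\|\nabla w\|\le K(c')\|f_1-f_2\|_{L^2}$ for some $K(c')$ depending only on $c',\nu,\Omega$; since every $f\in\sH_\nu$ lies in such a sub-ball, $f\mapsto u_\ast(f)$ is locally Lipschitz on $\sH_\nu$. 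The main obstacle is tracking $C_L$ and $C_P$ carefully so that a \emph{single} threshold $c$ simultaneously enforces uniqueness and leaves a strictly positive absorption margin for the Lipschitz estimate; a secondary care point is that $\sH_\nu$ must exclude the boundary of $\sB_\nu$ (where the absorption margin can collapse), which is why density of $\sH_\nu$ in $\sB_\nu$ is the strongest conclusion available from this route.
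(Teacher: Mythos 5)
Your proposal is correct, and it takes a genuinely different route from the paper's. The paper's proof is essentially two citations stitched together: it invokes \cite[Theorem 10.1]{temam1995navier} for existence and uniqueness under the smallness condition $\|f\|_{\sV'}\le c_0\nu^2$ (converted to an $L^2$ condition via $\sH\subset\sV'$), and then invokes the much deeper \cite[Theorem 10.4]{temam1995navier}, a generic-smoothness result asserting that $f\mapsto u_\ast$ is $C^\infty$ on an open dense subset of forcings; the set $\sH_\nu$ is then defined as the intersection of the small ball with that open dense set. You instead derive everything from first principles: the Galerkin/energy method gives existence and the a priori bound $\|\nabla u\|\le (C_P/\nu)\|f\|_{L^2}$, and the same Ladyzhenskaya absorption step that yields uniqueness also—when applied to the difference equation with distinct forcings $f_1,f_2$—yields $(\nu - C_L C_P c'\nu)\|\nabla w\|\le C_P\|f_1-f_2\|$ directly. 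This buys two things over the paper's route: (i) it is self-contained at the level of standard PDE energy estimates rather than relying on a genericity theorem whose proof (transversality/Sard-type arguments in Temam Ch.\ II.4) is considerably heavier; and (ii) it gives a cleaner, more explicit $\sH_\nu$—the full open sub-ball $\{\|f\|_{L^2}<c\nu^2\}$ rather than the intersection with an abstract open dense set—with a concrete Lipschitz constant $K(c')$ that degenerates only as $\|f\|\uparrow c\nu^2$. What the paper's route buys in exchange is stronger regularity ($C^\infty$ rather than merely Lipschitz) on its dense set, though that extra regularity is not used anywhere downstream (Corollary~\ref{coro:ns-discrete} needs only local Lipschitz). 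One small presentational point: in your Lipschitz step you write $b(w,u_1,w)$ where the natural decomposition of $b(u_1,u_1,v)-b(u_2,u_2,v)$ after killing the skew term leaves $b(w,u_2,w)$; both decompositions are valid and the estimate is unchanged since both $\|\nabla u_1\|,\|\nabla u_2\|$ are controlled by the a priori bound, so this does not affect the argument.
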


\textbf{Vorticity form.} Let $\omega := \nabla \times u$ (and hence $\omega_\ast := \nabla \times u_\ast$). Under periodic boundary and zero-mean conditions, one can recover the velocity $u$ from vorticity $\omega$ by solving a Poisson equation \cite{majda2001vorticity}. We hence focus on the solution map in vorticity: $f \mapsto \omega_\ast$.

While Theorem \ref{thm:ns-existence} gives a local Lipschitz result in function spaces, our expressivity results (Section \ref{sec:main-results}) are stated for finite-dimensional spaces. To bridge this gap, we discretize the NS equations.

\textbf{Discretization.}
Partition $\Omega$ into $N_h$ cells $\Omega_h := \{C_i\}_{i=1}^{N_h}$ and define the cell–average restriction $\cR_h (f)|_{C} := \frac{1}{|C|}\int_{C} f(\xi) \mathrm{d}\xi$ (similarly for $\omega$). We work with the discrete forcings and vorticities:
\[\vx := \cR_h (f) \in \sR^{N_h \times 2}, \quad \vy:=\cR_h (\omega) \in \sR^{N_h}
\]
and aim to learn $\vx \mapsto \vy_\ast$ where $\vy_\ast := \cR_h(\omega_\ast)$ is the discrete solution in vorticity form. Back to the continuum setting, let the lifting operator $\cE_h$ be the piecewise–constant reconstruction $\cE_h(\vx):= \sum_{C \in \Omega_h} x_C \vone_{C}$, and let $\cP$ be the orthogonal projection onto divergence–free, zero–mean fields.

\begin{corollary}
\label{coro:ns-discrete}
$\cF_{2}:\vx \mapsto \vy_\ast$ is locally Lipschitz on $\sX_{\nu,h}:=\{\vx \in \sR^{N_h \times 2}: \cP(\cE_{h}(\vx)) \in \sH_{\nu}\}$ and there exists a \textit{regular} implicit operator $\cG(\vy,\vx)$ satisfying $\textup{Fix}(\cG(\cdot,\vx)) = \cF_{2}(\vx)$ on $\sX_{\nu,h}$. 
\end{corollary}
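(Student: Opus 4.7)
The plan is to derive local Lipschitz continuity of $\cF_{2}$ on $\sX_{\nu,h}$ from Theorem~\ref{thm:ns-existence} by a composition argument in appropriate function spaces, and then invoke Theorem~\ref{thm:lip} to produce the regular implicit operator.

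First, I would factor $\cF_{2}$ as the composition
\[
\vx \xrightarrow{\cE_h} \cE_h(\vx) \in L^2(\Omega)^2 \xrightarrow{\cP} f \in \sH \xrightarrow{u_\ast(\cdot)} u_\ast \in H^1(\Omega)^2 \xrightarrow{\nabla\times} \omega_\ast \in L^2(\Omega) \xrightarrow{\cR_h} \vy_\ast \in \sR^{N_h}.
\]
The piecewise-constant lift $\cE_h$, the Leray projector $\cP$, the curl $\nabla\times$ (which loses one spatial derivative, taking $H^1$ to $L^2$), and the cell-averaging restriction $\cR_h$ are all bounded linear operators between the indicated spaces, and are therefore globally Lipschitz. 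The only nonlinear link, $f\mapsto u_\ast(f)$, is locally Lipschitz on $\sH_\nu$ by Theorem~\ref{thm:ns-existence}. Since the composition of a locally Lipschitz map with globally Lipschitz maps is locally Lipschitz, $\cF_{2}$ inherits local Lipschitz continuity on the discrete admissible set $\sX_{\nu,h}$.

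Second, to apply Theorem~\ref{thm:lip}, I need to verify that $\sX_{\nu,h}$ is bounded in $\sR^{N_h\times 2}$. The constraint $\cP(\cE_h(\vx))\in\sH_\nu\subset\sB_\nu$ directly yields $\|\cP(\cE_h(\vx))\|_{L^2}\le c\nu^2$, which bounds only the divergence-free, zero-mean component of $\vx$; gradient-like and constant parts live in $\ker(\cP\circ\cE_h)$ and do not affect $\vy_\ast$. Following the pressure-eliminated convention that leads to the vorticity equation, I would identify inputs that differ by an element of $\ker(\cP\circ\cE_h)$ (they produce identical physical forcings and hence identical $\vy_\ast$), or equivalently restrict $\sX_{\nu,h}$ to the discrete divergence-free, zero-mean subspace. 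On this subspace $\cE_h$ is an isometry (up to a mesh-dependent constant), so $\|\vx\|$ is controlled by $\|\cP(\cE_h(\vx))\|_{L^2}\le c\nu^2$ and $\sX_{\nu,h}$ becomes bounded.

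Finally, with $\cF_{2}$ locally Lipschitz on a bounded set, Theorem~\ref{thm:lip} directly produces a regular implicit operator $\cG$ with $\textup{Fix}(\cG(\cdot,\vx))=\cF_{2}(\vx)$ for every $\vx\in\sX_{\nu,h}$, completing the proof. The main obstacle is the regularity bookkeeping in the first step: one must match the Sobolev regularity delivered by the classical 2D steady-state NS theory under small forcing with the $L^2$ input required by $\cR_h$, and verify that Theorem~\ref{thm:ns-existence}'s ``locally Lipschitz'' is asserted in compatible norms (e.g.\ $L^2$-in, $H^1$-out). Once the norms are aligned, the boundedness argument and the chain of Lipschitz bounds are essentially formal.
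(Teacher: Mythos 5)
Your decomposition $\cF_2 = \cR_h \circ (\nabla\times) \circ u_\ast(\cdot) \circ \cP \circ \cE_h$ and the chain of Lipschitz bounds (bounded linear links, with the only nonlinear link handled by Theorem~\ref{thm:ns-existence}) are exactly the paper's argument, including the same norm bookkeeping ($L^2$ for $\cR_h,\cE_h,\cP$; $H^1\to L^2$ for $\nabla\times$).

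Where you depart is the second step: you insist on verifying that $\sX_{\nu,h}$ is bounded before invoking Theorem~\ref{thm:lip}, and then propose to restrict to a ``discrete divergence-free, zero-mean subspace'' or to quotient by $\ker(\cP\circ\cE_h)$. The paper does not do this, and there is a reason it is unnecessary: although Definition~\ref{def:regular-g} is stated for bounded $\sX$, the proof of Theorem~\ref{thm:lip} runs through Theorem~\ref{thm:eps-unbdd}, which explicitly extends the construction to unbounded domains (and Assumption~\ref{asmp:f}, as the paper remarks, makes no boundedness assumption). So the corollary can be invoked directly on $\sX_{\nu,h}$ without trimming the domain. Moreover, your proposed restriction is problematic on its own terms: the piecewise-constant range of $\cE_h$ contains essentially no nontrivial divergence-free fields (a piecewise-constant field with distributional divergence zero on a periodic domain is constant, and the zero-mean condition then forces it to vanish), so ``the discrete divergence-free, zero-mean subspace'' is either trivial or has to be defined through the quotient $\sR^{N_h\times 2}/\ker(\cP\circ\cE_h)$ — which silently changes the domain on which the corollary is stated. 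Your underlying worry that the constraint $\cP(\cE_h(\vx))\in\sH_\nu$ does not control $\|\vx\|$ is correct and a fair observation about the definition of $\sX_{\nu,h}$, but the remedy is to lean on Theorem~\ref{thm:eps-unbdd}, not to shrink or re-parameterize the admissible set.
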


The corollary instantiates our expressivity theory for steady-state NS, guaranteeing the existence of a \emph{regular} implicit model $\cG$. As in the image–reconstruction case, we now (i) choose a problem-specific parameterization of $\cG$ and (ii) verify our theory numerically on this architecture.

\textbf{Problem-tailored parameterization.}
We use \cite{marwah2023deep} as our code base. In particular,
\begin{equation}
\label{eq:imp-fno}
\vz_\ast = \cG_{\Theta}\!\big(\vz_\ast,\ \cQ_{\Phi}(\vx)\big),
\qquad
\vy_\ast = \cQ_{\Psi}(\vz_\ast).
\end{equation}
The core $\cG_\Theta$ is implemented as a Fourier Neural Operator (FNO) \cite{li2021fourier}, and both the encoder $\cQ_{\Phi}$ and decoder $\cQ_{\Psi}$ use pointwise MLPs\footnote{Introducing additional encoder and decoder is common in practice. Compared to the vanilla formulation $\vy_\ast=\cG(\vy_\ast,\vx)$, it does not change our expressivity results in Section \ref{sec:main-results}. Details appear in Appendix \ref{sec:app-general}.}. Details appear in Appendix \ref{sec:app-ns-exp}.

There is a growing literature on AI for scientific computing (e.g., \cites{raissi2019pinn,lu2021deeponet,li2021fourier,rahman2022uno,kovachki2023neural,li2023geofno}; see \cites{karniadakis2021piml,kovachki2024operatorreview} for surveys). We adopt \eqref{eq:imp-fno} because it learns solution \emph{operators} rather than per-instance solutions, yielding discretization/mesh-resolution invariance, fast global mixing via Fourier layers, and strong parameter efficiency.

\textbf{Experiments.} We use the dataset of \cite{marwah2023deep} with viscosity \(\nu=0.01\), which provides \(4500\) training pairs and \(500\) test pairs \((\vx,\vy^\ast)\), where \(\vx\) is the discretized force and \(\vy^\ast\) is the corresponding vorticity; we denote these sets by \(\sD_{\textup{pde,train}}\) and \(\sD_{\textup{pde,test}}\). Details are given in Appendix~\ref{sec:app-ns-exp}.

We test iteration-wise behavior for \(50\) steps starting from \(\vz_{0}=\vzero\): $\vz_{t+1}=\cG_{\Theta}\big(\vz_t,\cQ_{\Phi}(\vx)\big)$ for $0\le t\le 49$, and $\vy_t(\vx)=\cQ_{\Psi}(\vz_t)$.
Analogous to the inverse-problem study, we augment the test set with perturbations.
For each \((\vx_i,\vy_i^\ast)\in\sD_{\textup{pde,test}}\), we construct \(15\) perturbed vorticities \(\{\vy^\ast_{i,j}\}_{j=1}^{15}\); we then compute compatible forces \(\{\vx_{i,j}\}_{j=1}^{15}\) by evaluating the NS operator (see Appendix~\ref{sec:app-ns-exp} for details).
The perturbed test set is $\sD'_{\textup{pde,test}}=\{(\vx_{i,j},\vy^\ast_{i,j}) : 1\le i\le 500,\ 1\le j\le 15\}$.
Across iterations we report an empirical Lipschitz estimate $L_t$ and relative reconstruction error $E_t$:
\[L_t := \max_{1 \leq i \leq 500} \max_{1 \leq j \leq 15} \frac{\| \vy_t(\vx_{i}) - \vy_t(\vx_{i,j}) \|}{ \|\vx_{i} - \vx_{i,j}\| }, \quad \textup{and}\quad  E_t(i,j) := \frac{\|\vy_t(\vx_{i,j})-\vy^\ast_{i,j}\|}{\|\vy^\ast_{i,j}\|+\epsilon},\]
for $1 \leq i \leq 500, 0 \leq j \leq 15$,
where $j=0$ means the original (unperturbed) sample, $\vx_{i,0}:=\vx_i,\vy^\ast_{i,0}:=\vy^\ast_i$. Therefore, \(E_t\) evaluates accuracy on both \(\sD_{\textup{pde,test}}\) and \(\sD'_{\textup{pde,test}}\).

The results in Figure~\ref{fig:ns-theory} align with our theory. 
At \(t=1\), the mapping \(\vy_1(\vx)\) reflects a single application of \(\cG_{\Theta}\) and exhibits low Lipschitz constant: \(L_1=23.1\).
As iterations proceed toward the fixed point, the complexity grows markedly: \(L_t\) increases to \(\approx 367\) by \(t=50\) (Figure~\ref{fig:ns-lip}).
Meanwhile, the relative error \(E_t\) decreases monotonically and stabilizes at \(0.078 \pm 0.028\) (Figure~\ref{fig:ns-err}), indicating convergence to a good approximation of \(\vy_\ast\).
Thus, \textit{the learned operator $\cG_\Theta$ is simple (Lipschitz in $\vx$), while additional test-time iterations let $\vy_t$ realize progressively more complex mappings.}
In addition, a comparison with an explicit baseline (vanilla FNO) in Figure~\ref{fig:ns-visualize} shows the implicit model produces more accurate solutions, both visually and quantitatively. 
Additional experiments showing a small implicit model outperforming larger explicit ones appear in Appendix~\ref{sec:app-ns-exp}.

\begin{figure}[t]
    \centering

    \begin{subfigure}[b]{0.48\textwidth}
        \includegraphics[width=\linewidth]{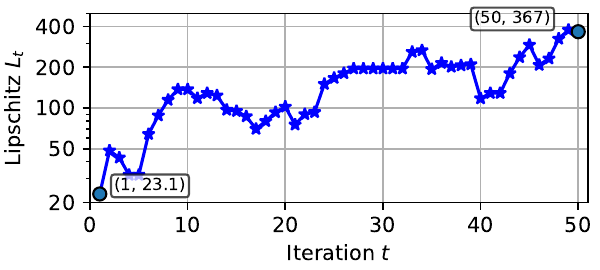}
        
        \caption{Empirical Lipschitz \(L_t\) of the \(t\)-step map \(\vy_t(\cdot)\) vs.\ iteration. \(L_t\) starts small at \(t{=}1\) (\(23.1\)) and grows substantially, reaching \(\sim 367\) by \(t{=}50\).}
        \label{fig:ns-lip}
    \end{subfigure}
    \hfill
    \begin{subfigure}[b]{0.48\textwidth}
        \includegraphics[width=\linewidth]{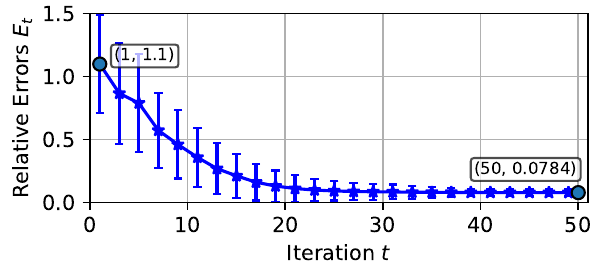}
        
        \caption{Relative error \(E_t\) (mean \(\pm\) std over original and perturbed inputs) vs.\ iteration. \(E_t\) decreases to \(0.078 \pm 0.028\)---$\vy_t$ converges towards \(\vy_\ast\).}
        \label{fig:ns-err}
    \end{subfigure}

    \caption{Validation on the steady Navier--Stokes task. Iterating a simple operator \(\cG_{\Theta}\) yields a complex fixed-point mapping: Lipschitz constant (a) increases, while error (b) decreases.}
    \label{fig:ns-theory}
\end{figure}
\begin{figure}[t]
    \centering

        \begin{subfigure}[b]{0.19\textwidth}
        \includegraphics[width=\linewidth]{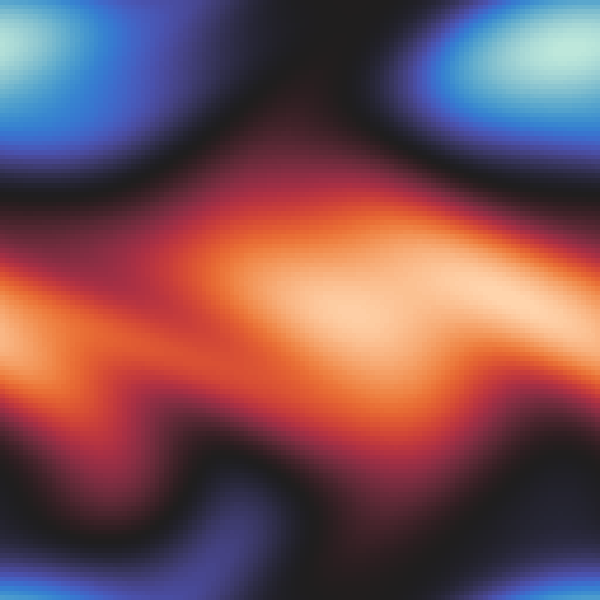}
        \centering
        Ground Truth $\vy_\ast$ \\ -
        \label{fig:ns-truth}
    \end{subfigure}
    \begin{subfigure}[b]{0.19\textwidth}
        \includegraphics[width=\linewidth]{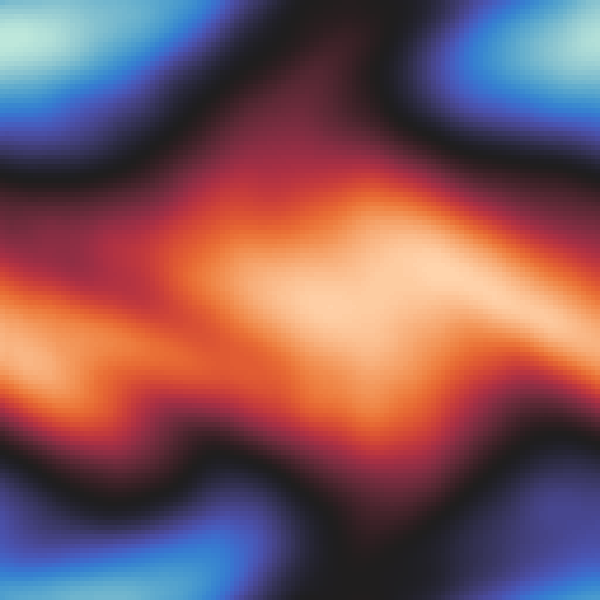}
        \centering
        Prediction by \\ FNO 
        \label{fig:ns-fno}
    \end{subfigure}
        \begin{subfigure}[b]{0.19\textwidth}
        \includegraphics[width=\linewidth]{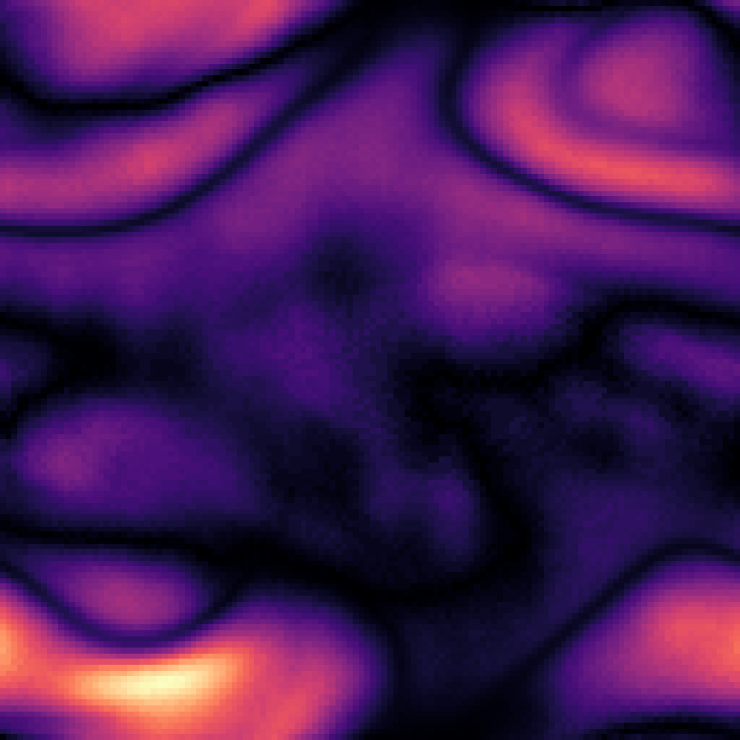}
        \centering
        Error: 0.260 \\ 0.179 $\pm$ 0.050 \\ 
        \label{fig:ns-f2}
    \end{subfigure}
    \begin{subfigure}[b]{0.19\textwidth}
        \includegraphics[width=\linewidth]{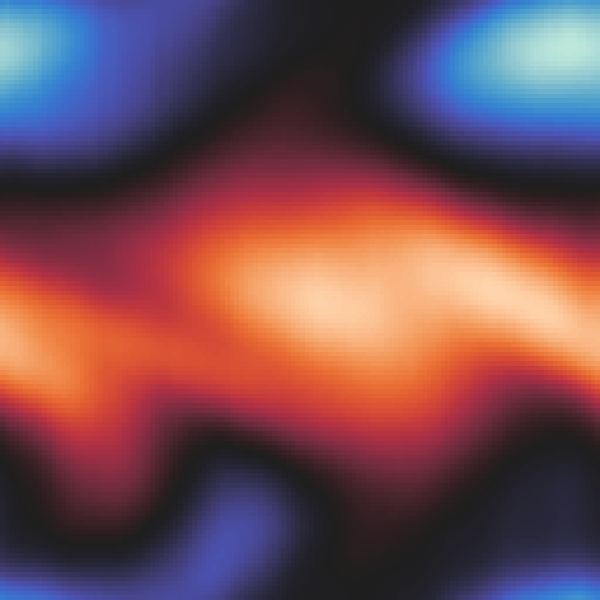}
        \centering
        Prediction by \\ implicit FNO 
        \label{fig:ns-deq}
    \end{subfigure}
            \begin{subfigure}[b]{0.19\textwidth}
        \includegraphics[width=\linewidth]{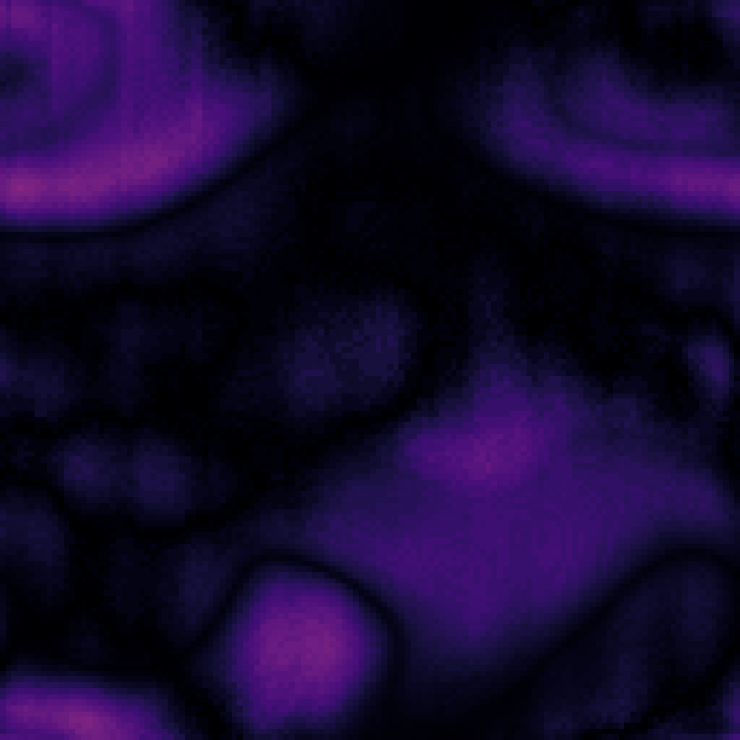}
        \centering
         Error: 0.112 \\ 0.078 $\pm$ 0.028 
        \label{fig:ns-f1}
    \end{subfigure}

    \caption{Visual results for NS equations. The top value (0.260 and 0.112) means the relative error (between the prediction and the ground truth) on the single visualized sample; the second line shows the average relative error ($\pm$ std) over all test samples.  Both models have 2.376 M parameters.}
    \label{fig:ns-visualize}
\end{figure}

\subsection{Case Study 3: Operations Research}

Linear programming (LP) is fundamental to operations research, of which a general form is given by 
\begin{equation}
\label{eq:lp}
\min_{\vy \in \sR^n} \vc^\top \vy, \quad \textup{s.t. } \mA \vy \circ \vb, ~~ \vl \leq \vy \leq \vu.
\end{equation}
Here, $\mA \in \sR^{m \times n}, \vb \in \sR^{m}, \vc \in \sR^n, \vl \in \sR^n, \vu \in \sR^n$, and $\circ\in\{=,\le\}^{m}$ denotes component-wise relations, i.e., each $\circ_i\in\{=,\le\}$ specifies whether $(\mA\vy)_i$ equals or is bounded above by $b_i$.
Let $\vx := (\mA,\vb,\vc,\circ,\vl,\vu)$ as the input that describes the LP in \eqref{eq:lp}. To define the solution mapping $\cF_3$ that maps $\vx$ to the solution of LP, we require feasibility and boundedness (which ensure an optimal solution \cite{bertsimas1997introduction}). Accordingly, let
\[ \sX:= \{(\mA,\vb,\vc,\circ,\vl,\vu): \textup{The resulting LP is feasible and bounded}\}
\]
Within $\sX$, there are some LPs where the solution mapping is not single-valued or not continuous. By excluding these LPs, it forms a subset $\sX_{\mathrm{sub}} \subset \sX$ on which $\cF_3$ is single-valued and locally Lipschitz. The strict definition of $\sX_{\mathrm{sub}}$ and the proof of Theorem \ref{thm:lp} are provided in Appendix \ref{sec:app-lp}.
\begin{theorem}
\label{thm:lp}
There is a subset $\sX_{\mathrm{sub}}\subset \sX$ that is dense in $\sX$, on which each LP admits a unique solution $\vy_\ast$, and the solution map $\cF_3:\vx\mapsto\vy_\ast$ is locally Lipschitz continuous on $\sX_{\mathrm{sub}}$.
\end{theorem}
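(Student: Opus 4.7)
The plan is to define $\sX_{\mathrm{sub}}$ using standard LP non-degeneracy and strict complementarity conditions, then separately establish density in $\sX$ and local Lipschitz continuity of $\cF_3$ on $\sX_{\mathrm{sub}}$. First I would declare $\sX_{\mathrm{sub}}\subset\sX$ to be the set of instances $\vx=(\mA,\vb,\vc,\circ,\vl,\vu)$ whose optimum is a unique vertex $\vy_\ast$ satisfying: (i) the active constraints at $\vy_\ast$ (from equality rows, tight inequality rows, and tight box bounds) are linearly independent (LICQ / primal non-degeneracy); (ii) the optimal dual multipliers are unique (dual non-degeneracy); and (iii) strict complementary slackness holds. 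These are the classical regularity conditions that yield Lipschitzian stability of LP optima under data perturbations.

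For density, I would argue that the failure of any of (i)--(iii) is a ``thin'' event in the parameter space. Non-uniqueness of the primal optimum corresponds to $\vc$ being orthogonal to an edge of the polytope, a finite union of hyperplanes in $\vc$; primal and dual degeneracy correspond to algebraic identities among the entries of $(\mA,\vb,\vl,\vu)$; each such condition cuts out a strictly lower-dimensional algebraic set in the parameter space (with the discrete selector $\circ$ held fixed, which is constant on a neighborhood anyway). Hence every $\vx\in\sX$ admits arbitrarily small perturbations landing in $\sX_{\mathrm{sub}}$; feasibility and boundedness are open conditions in the interior of $\sX$, so the perturbation stays in $\sX$.

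For local Lipschitz continuity on $\sX_{\mathrm{sub}}$, fix $\vx\in\sX_{\mathrm{sub}}$ and observe that the optimal basis $B$ and the active set are locally preserved under small data perturbations; this is exactly what (i)--(iii) guarantee. On such a neighborhood, $\vy_\ast$ solves the square linear system $\mA_B\vy_B=\vb_B$ (with the non-basic coordinates pinned at their tight bounds) whose coefficient matrix is invertible. Matrix inversion is smooth on the open set of invertible matrices, so $\cF_3$ is real-analytic---hence locally Lipschitz---on a neighborhood of $\vx$. A more direct alternative is to invoke the classical Robinson (and Mangasarian--Shiau) stability theorem, which gives Lipschitzian dependence of the LP optimum on the data under precisely these regularity conditions.

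The main obstacle will be the combinatorial/discrete nature of the LP optimum: the optimal basis is not continuous in the data globally, and basis ``flips'' occur on the lower-dimensional degeneracy varieties. The three regularity conditions defining $\sX_{\mathrm{sub}}$ are designed to rule out such flips in a whole open neighborhood, but formalizing the density step requires carefully verifying that these conditions are jointly generic (i.e., cut out as a complement of a finite union of algebraic sets) while the perturbations stay inside $\sX$ and respect the fixed selector $\circ$.
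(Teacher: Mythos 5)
Your choice of $\sX_{\mathrm{sub}}$ matches the paper's: after converting to standard form, the paper defines $\sX_{\mathrm{sub}}$ by existence of a KKT basic solution satisfying non-degeneracy ($\vy_B>\vzero$) and strict complementary slackness ($\vs_N>\vzero$), which is equivalent to your LICQ/dual-uniqueness/strict-complementarity package. For the \emph{Lipschitz} step, your two suggested routes are both viable; the paper in fact takes your second route, invoking Dontchev's reformulation of Robinson's strong regularity for parametric nonlinear programs (Lemma in Appendix E) and verifying its two hypotheses---linear independence of the active-constraint gradients and positive definiteness of the Lagrangian Hessian on the residual subspace, which becomes trivial because that subspace is $\{\vzero\}$ for an LP at a non-degenerate, strictly complementary basis. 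Your more elementary basis-stability-plus-matrix-inversion argument would also work and is a fine alternative.

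The real divergence, and the genuine gap, is in the \emph{density} step. You argue by genericity: the degenerate instances sit on a finite union of lower-dimensional algebraic varieties, hence arbitrarily small perturbations escape them, and ``feasibility and boundedness are open conditions in the interior of $\sX$.'' That last clause is tautological and does not address what happens at points of $\sX$ that are not interior. Feasibility and boundedness are \emph{not} stable under arbitrary small perturbations of $(\mA,\vb,\vc,\vl,\vu)$---e.g.\ $\min 0\cdot x$ s.t.\ $x\ge 0$ is bounded, while $\min(-\eps)x$ is unbounded for every $\eps>0$---so $\sX$ has a nontrivial boundary, and you would additionally need to prove that $\mathrm{int}(\sX)$ is dense in $\sX$, or handle boundary instances directly. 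Your dimension-counting sketch does neither. The paper circumvents this entirely with an explicit construction: from an optimal KKT tuple $(\vy,\vz,\vs)$ for a given $(\mA,\vb,\vc)\in\sX$, it sets $\vy_k:=\vy+\tfrac1k\ve_B$, $\vs_k:=\vs+\tfrac1k\ve_N$, $\vz_k:=\vz$, and then \emph{defines} the perturbed data as $\vb_k:=\mA\vy_k$, $\vc_k:=\mA^\top\vz_k+\vs_k$ with $\mA_k:=\mA$. The perturbed LP is feasible by construction (it has the feasible point $\vy_k\ge\vzero$), bounded by construction (dual feasibility of $\vz_k$), non-degenerate, and strictly complementary, and $(\vb_k,\vc_k)\to(\vb,\vc)$. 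This coordinated perturbation of the right-hand side and cost vector, rather than a generic jiggle of all data, is the key idea you would need to close the gap.
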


\begin{corollary}
\label{coro:lp}
There exists a regular implicit model $\cG(\vy,\vx)$ with $\textup{Fix}(\cG(\cdot,\vx)) = \cF_3(\vx)$ on $\sX_{\mathrm{sub}}$.
\end{corollary}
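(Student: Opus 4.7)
My plan is to deduce Corollary~\ref{coro:lp} directly from Theorem~\ref{thm:lp} together with Theorem~\ref{thm:lip}, exactly as Corollary~\ref{cor:inverse-prob} was deduced from Theorem~\ref{thm:lip} and Theorems~\ref{thm:inverse-lip-1a}--\ref{thm:inverse-lip-1b}. First I would make the ambient space explicit: the continuous coordinates $(\mA,\vb,\vc,\vl,\vu)$ of the LP data $\vx=(\mA,\vb,\vc,\circ,\vl,\vu)$ live in $\sR^{d_0}$ with $d_0 = mn+2m+3n$, while the $2^m$ choices of the combinatorial vector $\circ$ index $2^m$ disjoint slices of $\sX$. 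Restricted to any one slice, $\cF_3$ becomes a map from a subset of $\sR^{d_0}$ into $\sR^n$, which is the setting of Assumption~\ref{asmp:f}.

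Theorem~\ref{thm:lp} supplies exactly the local Lipschitz property on $\sX_{\mathrm{sub}}$ that Assumption~\ref{asmp:f} demands. Applying Theorem~\ref{thm:lip} on each slice yields a regular implicit operator whose fixed-point map coincides with $\cF_3$ on that slice. Because the slices are pairwise disjoint, gluing these per-slice operators into a single $\cG:\sR^n\times\sX_{\mathrm{sub}}\to\sR^n$ requires no compatibility check, and the assembled $\cG$ satisfies $\textup{Fix}(\cG(\cdot,\vx)) = \cF_3(\vx)$ for every $\vx\in\sX_{\mathrm{sub}}$. This gives the corollary.

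The one housekeeping item I would address is that Definition~\ref{def:regular-g} requires a bounded input domain, whereas $\sX_{\mathrm{sub}}$ is generally unbounded. I would dispatch this by stating and proving the conclusion on an arbitrary bounded subset $\sY\subset\sX_{\mathrm{sub}}$---the same implicit convention used in Corollaries~\ref{cor:inverse-prob} and~\ref{coro:ns-discrete}. This is innocuous in practice because any LP dataset in a real application can be a priori constrained to a compact box in data space, and Theorem~\ref{thm:lip} then furnishes a regular $\cG_\sY$ realizing $\cF_3$ on $\sY$.

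The main obstacle is therefore not in the corollary but in Theorem~\ref{thm:lp}. The delicate part there is specifying $\sX_{\mathrm{sub}}$ so that (i) it is dense in $\sX$ and (ii) on it the optimal basis of the LP is locally unique, which is the property that ultimately drives local Lipschitz continuity via classical LP sensitivity theory (Robinson-type stability, Mangasarian--Shiau). Careful handling of primal/dual degeneracy and of the discrete variable $\circ$ is the real work; once Theorem~\ref{thm:lp} is in place, Corollary~\ref{coro:lp} reduces to a one-line invocation of Theorem~\ref{thm:lip} slice by slice.
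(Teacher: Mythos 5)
Your deduction matches the paper's own proof, which is indeed the one-line invocation ``Corollary~\ref{coro:lp} follows immediately from Theorems~\ref{thm:lip} and~\ref{thm:lp}.'' Two housekeeping observations on the extra detail you supply.

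First, the slice-by-$\circ$ gluing is a valid route but is not actually required. Embed the discrete vector $\circ\in\{=,\le\}^m$ as $\{0,1\}^m\subset\sR^m$, so that $\sX_{\mathrm{sub}}$ sits inside a single Euclidean space $\sR^{mn+2m+3n+m}$. Because Assumption~\ref{asmp:f} explicitly permits disconnected (and non-closed, non-compact) domains, $\sX_{\mathrm{sub}}$ together with the locally Lipschitz map $\cF_3$ from Theorem~\ref{thm:lp} falls squarely under the hypothesis of Theorem~\ref{thm:lip}, which then produces a \emph{single} regular $\cG$ without any patching step. Any two points in distinct $\circ$-slices are at Euclidean distance at least $1$ in the embedded coordinates, so local Lipschitz continuity never needs to look across slices; and the per-slice Lipschitz estimates you would otherwise have to reconcile are already handled internally by the $\varepsilon$-construction in the proof of Theorem~\ref{thm:lip}. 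Your gluing version works, but you would still have to verify that the assembled operator retains the linear-in-$\|\vy\|$ bound on its $\vx$-Lipschitz constant across slices, which is an extra check you avoid by treating $\sX_{\mathrm{sub}}$ as one domain.

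Second, you are right to flag the boundedness mismatch: Definition~\ref{def:regular-g} declares $\sX$ bounded, while $\sX_{\mathrm{sub}}$ is not, and the paper never resolves this explicitly. Your fix (restrict to an arbitrary bounded $\sY\subset\sX_{\mathrm{sub}}$) is sound and consistent with how the other corollaries are implicitly read; note also that the appendix already proves the core construction on unbounded domains (Theorem~\ref{thm:eps-unbdd}), so the restriction to Definition~\ref{def:regular-g}'s bounded setting is conservative but harmless. Either reading makes the corollary go through.
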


Corollary \ref{coro:lp} follows immediately from Theorems \ref{thm:lip} and \ref{thm:lp}. It indicates the existence of implicit models with desired properties that solves LP. As in the previous case studies, we now (i) choose a problem-specific parameterization of $\cG$ and (ii) verify the theory numerically on this architecture.

\textbf{Implicit GNN parameterization.} We model the implicit operator $\cG$ for LP with a \textit{graph neural network (GNN)}. First, express an LP instance $\vx=(\mA,\vb,\vc,\circ,\vl,\vu)$ as a bipartite graph (Figure \ref{fig:gnn-diagram}). We create $n$ variable nodes $\{V_j\}_{j=1}^n$ and $m$ constraint nodes $\{W_{i}\}_{i=1}^m$. Node features collect the data of the LP: each $V_j$ stores $(c_j,l_j,u_j)$; each $W_i$ stores $(b_i,\circ_i)$. We connect $W_i$ to $V_j$ if $A_{ij}\neq 0$, and place $A_{ij}$ on that edge as its feature. 
Given this representation, an (explicit) GNN can map the LP to a solution, i.e., $\vy_\ast=\textup{GNN}(\vx)$ where $\vx$ denotes the graph-encoded LP.
This approach was proposed in \cite{gasse2019exact} in the context of mixed-integer linear programs, and \cite{chen2023lp} subsequently showed that (explicit) GNNs offer a universal framework for representing LPs. Built on this, we propose an \emph{implicit} GNN: 
\begin{equation} 
\label{eq:gnn-structure} 
\vz_\ast = \cG_{\Theta}(\vz_\ast,\cQ_{\Phi}(\vx)), \quad \vy_\ast = \cQ_{\Psi}(\vz_\ast) 
\end{equation}
where $\cG_{\Theta}$ is the core GNN, $\cQ_{\Phi}$ encodes instance-specific (static) features from $\vx$, and $\cQ_{\Psi}$ decodes per-variable states to the solution. Both $\cQ_{\Phi}$ and $\cQ_{\Psi}$ are small MLPs shared across all nodes.
At inference, we repeatedly call $\cG_{\Theta}$ with initialization $\vz_0=\vzero$: $\vz_{t}=\cG_{\Theta}(\vz_{t-1},\cQ_{\Phi}(\vx))$ for $t=1,2,\cdots,T$, and finally output $\vy_t=\cQ_{\Psi}(\vz_t)$.  Relative to prior work, our only modification is to attach to each variable node an additional \emph{dynamic} state $z_j^{(t)} \in \sR$. Details appear in Appendix \ref{sec:app-lp-exp}.

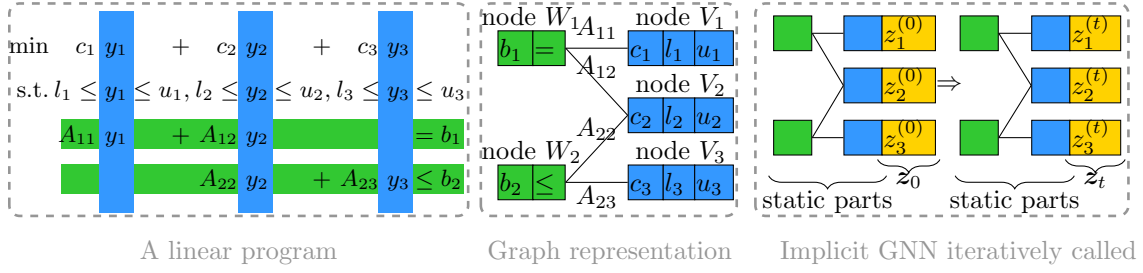
\begin{figure*}[t] 
    \centering
    \resizebox{\linewidth}{!}{\begin{tikzpicture}

\definecolor{modernblue}{RGB}{52, 152, 255}   
\definecolor{modernred}{RGB}{255, 76, 60}     
\definecolor{moderngreen}{RGB}{50, 200, 50}  
\definecolor{moderngray}{RGB}{150,150,150} 
\definecolor{modernyellow}{RGB}{255,210,0}

\newcommand{\xone}{{y_1}}
\newcommand{\xtwo}{{y_2}}
\newcommand{\xthr}{{y_3}}


\node[align=center, rounded corners] (obj) at (-2.5,1) { 
\small
    \setlength{\tabcolsep}{1pt} 
    \begin{tabular}{rrrrrrl}
    $\min$ & $c_1 ~ \xone$ & $+$ & $c_2 ~ \xtwo$ & $+$ & $c_3 ~ \xthr$ & \\ [6pt]
    s.t. & $l_1 \leq \xone $ & $\leq u_1,$ & $l_2 \leq \xtwo$ & $ \leq u_2,$ & $l_3 \leq \xthr$ & $\leq u_3$ \\ [6pt]
    & $A_{11} ~ \xone$ & $+$ & $A_{12} ~ \xtwo$ & & & $= b_1$ \\ [6pt]
    & & & $A_{22} ~ \xtwo$ & $+$ & $A_{23} ~ \xthr$ & $\leq b_2$ \\ [6pt]
    \end{tabular} 
    \\
};

    \begin{scope}[on background layer]

    \fill[fill=moderngreen] (-4.8, 0.05) rectangle (0.6, 0.45);
    \fill[fill=moderngreen] (-4.8, 0.65) rectangle (0.6,1.05);

    \fill[fill=modernblue] (-4.29, -0.2) rectangle (-3.83,2.5);
    \fill[fill=modernblue] (-2.43, -0.2) rectangle (-1.97,2.5);
    \fill[fill=modernblue] (-0.55, -0.2) rectangle (-0.09,2.5);
    \end{scope}


\tikzset{
  gridnode/.style={
    matrix of nodes,
    nodes in empty cells,
    row sep=-\pgflinewidth,
    column sep=-\pgflinewidth,
    inner sep=0pt,
    outer sep=0pt,
    nodes={
      draw,
      align=center,
      font=\normalsize,     
      inner sep=0.6pt,      
      text height=2.2ex,    
      text depth=0.6ex,     
      anchor=center
    }
  },
  gridG/.style={gridnode, nodes={fill=moderngreen}}, 
  gridB/.style={gridnode, nodes={fill=modernblue}}   
}

\matrix (c1) at (1.5,2) [gridG, label={[yshift=-0.7ex]above:node $W_1$}] {%
  $b_1$ & $=$ \\};

\matrix (c2) at (1.5,0.2) [gridG, label={[yshift=-0.7ex]above:node $W_2$}] {%
  $b_2$ & $\leq$ \\};

\matrix (v1) at (3.5,2) [gridB, label={[yshift=-0.7ex]above:node $V_1$}] {%
  $c_1$ & $l_1$ & $u_1$ \\};

\matrix (v2) at (3.5,1.1) [gridB, label={[yshift=-0.7ex]above:node $V_2$}] {%
  $c_2$ & $l_2$ & $u_2$ \\};

\matrix (v3) at (3.5,0.2) [gridB, label={[yshift=-0.7ex]above:node $V_3$}] {%
  $c_3$ & $l_3$ & $u_3$ \\};

\draw (c1.east) -- (v1.west) node[midway, above] {$A_{11}$};
\draw (c1.east) -- (v2.west) node[midway, above, yshift=-2pt] {$A_{12}$};
\draw (c2.east) -- (v2.west) node[midway, above] {$A_{22}$};
\draw (c2.east) -- (v3.west) node[midway, below, yshift=2pt] {$A_{23}$};


\matrix (c11) at (5,2.2) [gridG] {%
  $\quad$ \\};

\matrix (c12) at (5,0.8) [gridG] {%
  $\quad$ \\};

\matrix (v11) at (6.3,2.2) [gridB] {%
  $\quad$ & |[fill=modernyellow]| $z_1^{(0)}$ \\};

\matrix (v12) at (6.3,1.5) [gridB] {%
  $\quad$ & |[fill=modernyellow]| $z_2^{(0)}$ \\};

\matrix (v13) at (6.3,0.8) [gridB] {%
  $\quad$ & |[fill=modernyellow]| $z_3^{(0)}$ \\}; 

\node[inner sep=1.5pt, fit=(c11) (c12) (v11-1-1) (v12-1-1) (v13-1-1)] (staticBox) {};
\node[inner sep=1.5pt, fit=(v11-1-2) (v12-1-2) (v13-1-2)] (zBox) {};

\draw (c11.east) -- (v11.west);
\draw (c11.east) -- (v12.west);
\draw (c12.east) -- (v12.west);
\draw (c12.east) -- (v13.west);

\node[inner sep=1.5pt, fit=(c11) (c12) (v11-1-1) (v12-1-1) (v13-1-1)] (staticBox) {};
\node[inner sep=1.5pt, fit=(v11-1-2) (v12-1-2) (v13-1-2)] (zBox) {};

\draw[decorate, decoration={brace,mirror,amplitude=5pt}]
  ($(zBox.south west)$) -- ($(zBox.south east)$)
  node[midway, yshift=-8pt] {$\vz_0$};

\draw[decorate, decoration={brace,mirror,amplitude=5pt}]
  ($(staticBox.south west)+(0,-6pt)$) -- ($(staticBox.south east)+(0,-6pt)$)
  node[midway, yshift=-10pt] {static parts};

\node[inner sep=0pt, anchor=center] at (7.1,1.5) {$\Rightarrow$};

\matrix (c21) at (7.5,2.2) [gridG] {%
  $\quad$ \\};

\matrix (c22) at (7.5,0.8) [gridG] {%
  $\quad$ \\};

\matrix (v21) at (8.8,2.2) [gridB] {%
  $\quad$ & |[fill=modernyellow]| $z_1^{(t)}$ \\};

\matrix (v22) at (8.8,1.5) [gridB] {%
  $\quad$ & |[fill=modernyellow]| $z_2^{(t)}$ \\};

\matrix (v23) at (8.8,0.8) [gridB] {%
  $\quad$ & |[fill=modernyellow]| $z_3^{(t)}$ \\}; 

\draw (c21.east) -- (v21.west);
\draw (c21.east) -- (v22.west);
\draw (c22.east) -- (v22.west);
\draw (c22.east) -- (v23.west);

\node[inner sep=1.5pt, fit=(c21) (c22) (v21-1-1) (v22-1-1) (v23-1-1)] (staticBox2) {};
\node[inner sep=1.5pt, fit=(v21-1-2) (v22-1-2) (v23-1-2)] (zBox2) {};

\draw[decorate, decoration={brace,mirror,amplitude=5pt}]
  ($(zBox2.south west)$) -- ($(zBox2.south east)$)
  node[midway, yshift=-8pt] {$\vz_t$};

\draw[decorate, decoration={brace,mirror,amplitude=5pt}]
  ($(staticBox2.south west)+(0,-6pt)$) -- ($(staticBox2.south east)+(0,-6pt)$)
  node[midway, yshift=-10pt] {static parts};

\draw[draw=moderngray, dashed, rounded corners=4pt, line width=1pt] (-5.5, -0.25)  rectangle (0.65, 2.6);

\draw[draw=moderngray, dashed, rounded corners=4pt, line width=1pt] (0.82, -0.25)  rectangle (4.25, 2.6);

\draw[draw=moderngray, dashed, rounded corners=4pt, line width=1pt] (4.5, -0.25)  rectangle (9.5, 2.6);

\node[anchor=north, text=moderngray, yshift=-6pt]
  at ($(-5.5,-0.25)!0.5!(0.65,-0.25)$) {A linear program};

\node[anchor=north, text=moderngray, yshift=-6pt]
  at ($(0.9,-0.25)!0.5!(4.2,-0.25)$) {Graph representation};

\node[anchor=north, text=moderngray, yshift=-6pt]
  at ($(4.45,-0.25)!0.5!(10.0,-0.25)$) {Implicit GNN iteratively called};

\end{tikzpicture}}
    
    \caption{The graph representation of LP and implicit GNN applied on this graph}
    \label{fig:gnn-diagram}
\end{figure*}

There is a rapidly growing literature on implicit GNNs with diverse applications and theories \cite{gu2020implicit,park2022convergent,chen2022efficient,chen2022optimization,baker2023implicit,lin2024ignn,nastorg2024implicit,zhong2024efficient,yang2025implicit}. Our LP case study is complementary to this line of work: rather than adopting a particular implicit-GNN architecture, we start from a standard explicit GNN for LP and convert it into a fixed-point formulation tailored to linear programs.

In addition, recent work on ML for LP has progressed quickly (e.g., \cite{chen2023lp,fan2023smart,sakaue2024generalization,li2024on,li2024pdhgunrolled,qian2024exploring,kuang2025accelerate,liu2024learning}); for broader context, see overviews on Learning to Optimize \cite{chen2024learning} and AI for Operations Research \cite{fan2024artificial}. Our goal here is not to outperform all state-of-the-arts; rather, we aim to establish a foundational point: converting a standard LP-GNN into an implicit form yields the benefits predicted by our theory—implicit models can be simple at the update level yet expressive at the fixed-point map, and their performance improves predictably with test-time iteration.

\textbf{Experiments.} We sample LP instances \(\vx=(\mA,\vb,\vc,\circ,\vl,\vu)\), solve it to obtain an optimal solution \(\vy_\ast\), and form \(2{,}500\) training pairs and \(1{,}000\) test pairs like $(\vx,\vy_\ast)$, denoted \(\sD_{\textup{LP,train}}\) and \(\sD_{\textup{LP,test}}\). 
We also create five perturbed test sets \(\{\sD^{(j)}_{\textup{LP,test}}\}_{j=1}^{5}\) by altering exactly one block among
(\(\mA\), \(\vb\), \(\vc\), \(\vl\), or \(\vu\)).
For each \((\vx_i,\vy_i^\ast)\in\sD_{\textup{LP,test}}\) and each perturbation type \(j\), we form a perturbed instance \(\vx_{i,j}\), solve it to obtain \(\vy^\ast_{i,j}\), and collect
\(\sD^{(j)}_{\textup{LP,test}}=\{(\vx_{i,j},\vy^\ast_{i,j})\}_{i=1}^{1000}\).
Details in Appendix \ref{sec:app-lp-exp}.
We report:
\[L_t(j) := \max_{1 \leq i \leq 1000} \frac{\| \vy_t(\vx_{i}) - \vy_t(\vx_{i,j}) \|}{ \|\vx_{i} - \vx_{i,j}\| }, \quad \textup{and}\quad E_t(i,j) := \frac{\|\vy_t(\vx_{i,j})-\vy^\ast_{i,j}\|}{\|\vy^\ast_{i,j}\|+\epsilon}, 
\]
for $1 \leq i \leq 1000, 0 \leq j \leq 5$, where $j=0$ denotes the unperturbed pair  $(\vx_{i,0},\vy^\ast_{i,0}):=(\vx_i,\vy^\ast_i)$.

Results support our theory. 
{\textbf{\textit{(i)}}} Figure~\ref{fig:lp-lip} plots the five curves \(L_t(j)\) (one for each perturbation type). At \(t=1\), a single application of \eqref{eq:gnn-structure} yields relatively small empirical Lipschitz constants for \textit{all} perturbation modes. As iterations proceed toward the fixed point, Lipschitz constants grow markedly. 
{\textbf{\textit{(ii)}}} Figure~\ref{fig:lp-err} reports the mean\(\pm\)std of \(E_t(i,j)\): \(E_t\) decreases and stabilizes at \(0.146\), indicating that the growth of \(L_t\) reflects the higher intrinsic complexity of the solution mapping \(\vy_\ast(\vx)\) rather than divergence or instability.
{\textbf{\textit{(iii)}}} Table \ref{tab:gnn-compare} contrasts implicit and explicit GNNs. At matched embedding sizes, implicit GNNs match or beat explicit ones—most clearly at small/mid sizes (4/8/16). In addition, a smaller implicit model can outperform a larger explicit model on training error. For example, implicit–4 vs. explicit–8 (0.203 vs. 0.233) and implicit–8 vs. explicit–16 (0.162 vs. 0.183). This supports our theory that iterating a simple implicit operator can yield strong expressivity.

\begin{figure}[t]
    \centering

    \begin{subfigure}[b]{0.48\textwidth}
        \includegraphics[width=\linewidth]{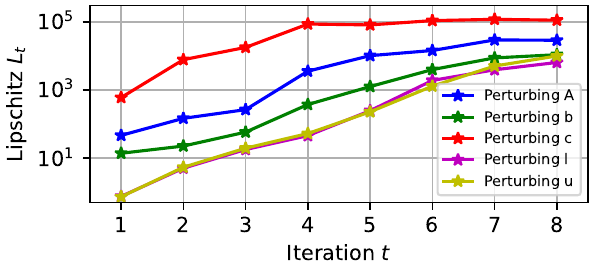}
        
        \caption{Empirical Lipschitz \(L_t\) of the \(t\)-step map \(\vy_t(\cdot)\) vs.\ iteration. \(L_t\) starts small at \(t{=}1\) and grows substantially by \(t{=}8\) for all perturbation modes.}
        \label{fig:lp-lip}
    \end{subfigure}
    \hfill
    \begin{subfigure}[b]{0.48\textwidth}
        \includegraphics[width=\linewidth]{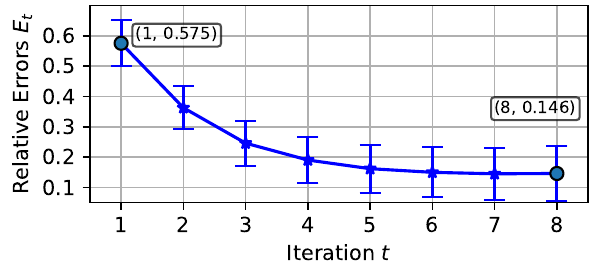}
        
        \caption{Relative error \(E_t\) (mean \(\pm\) std over original and perturbed inputs) vs.\ iteration. \(E_t\) decreases to \(0.146 \pm 0.091\)---$\vy_t$ converges towards \(\vy_\ast\).}
        \label{fig:lp-err}
    \end{subfigure}

    \caption{Numerical validation on the linear-program task.}
    \label{fig:lp-theory}
\end{figure}

\textbf{Discussion on generalization.}
While generalization is not our main focus, a trend in Table~\ref{tab:gnn-compare} is informative: explicit GNNs improve as width increases from 4 to 8 but then \textit{overfit} (test error significantly rises at 16/32), whereas implicit GNNs improve from 4 to 8 to 16 and only tick up slightly at 32. We attribute this to: (i) LP constraints $\mA \vy \circ \vb$ in \eqref{eq:lp} are specified implicitly rather than as an explicit set; implicit models align naturally with such a structure, and (ii) while fixed-point maps \(\vy_\ast(\vx)\) can be sensitive to inputs $\vx$, the implicit formulation allows us realize them via a simpler, smaller operator $\cG$, which ``implicitly" regularizes training and support good generalization in practice.

\begin{table}[t]
\centering
\caption{Comparison between explicit GNNs and implicit GNNs on the LP task.}
\label{tab:gnn-compare}
{\small
\begin{tabular}{l|l|l|l|l|l}
\hline
\multirow{4}{*}{Exp-GNNs} & Emb. size     & 4     & 8              & 16             & 32             \\ \cline{2-6} 
                               & \# Params.   & 580    & 2,088           & 7,888           & 30,624          \\ \cline{2-6} 
                               & Err (Train) & 0.387 $\pm$ 0.103 & 0.233 $\pm$ 0.084 & 0.183 $\pm$ 0.070 & 0.112 $\pm$ 0.049 \\ \cline{2-6} 
                               & Err (Test) & 0.397 $\pm$ 0.107  & 0.273 $\pm$ 0.104 & 0.283 $\pm$ 0.111 & 0.318 $\pm$ 0.122 \\ \hline\hline
\multirow{4}{*}{Imp-GNNs} & Emb. size     & 4     & 8              & 16             & 32             \\ \cline{2-6} 
                               & \# Params.   & 722    & 2,350           & 8,390           & 31,606          \\ \cline{2-6} 
                               & Err (Train) & 0.203 $\pm$ 0.107  & 0.162 $\pm$ 0.094 & 0.131 $\pm$ 0.080 & 0.118 $\pm$ 0.073 \\ \cline{2-6} 
                               & Err (Test) & 0.218 $\pm$ 0.117  & 0.177 $\pm$ 0.105 & 0.152 $\pm$ 0.098 & 0.156 $\pm$ 0.109 \\ \hline
\end{tabular}
}
\end{table}

\subsection{Case Study 4: LLM Reasoning}

While previous case studies focused on domains with strict mathematical definitions (inverse problems, PDEs, LPs), we now investigate if our theory extends to broader applications where metrics like ``smoothness" and ``Lipschitz continuity" are less formally defined. We examine the looped transformer for LLM reasoning, utilizing the pre-trained model from \cite{geiping2025scaling}. Unlike standard feed-forward transformers, this architecture recycles a shared block $\mathcal{G}_{\Theta}$ to iteratively update a latent ``thought" vector $\vz$: 
$
\vz_{t} = \mathcal{G}_{\Theta}(\vz_{t-1}, \mathcal{Q}_{\Phi}(\vx)), ~ \vy_{t} = \mathcal{Q}_{\Psi}(\vz_{t})
$
where $\mathcal{Q}_{\Phi}$ encodes the input $\vx$, and $\mathcal{Q}_{\Psi}$ decodes the latent state into the output sequence $\vy_t$ obtained after $t$ recurrent blocks.

Strictly extending our Lipschitz theory to the discrete space of language tokens is challenging, as standard norms do not apply. However, we can empirically test the core prediction of our theory: can the model express increasingly complex mappings as iterations increase? In this context, complexity implies: \textit{subtle differences in the input correspond to substantial shifts in context.} A capable model must effectively distinguish these semantic nuances and produce vastly different responses.

\textbf{Qualitative Results.} Table \ref{tab:charge_context_shift} visualizes the evolution of reasoning on a typical example. At early iterations ($t=2, 4$), the model fails to differentiate context (Physics vs. Finance), producing repetitions or shallow associations. Conversely, with more iterations ($t=6,8$ or more), the model utilizes increased test-time compute to resolve this ambiguity, correctly defining ``charge" as electric potential versus financial debt. This confirms the implicit operator's ability to progressively realize complex, context-sensitive mappings.

\begin{table}[t]
\centering
\caption{Evolution of model outputs for the inputs ``charge and voltage" vs. ``charge and pay", showing how the semantic difference emerges over iterations.}
\label{tab:charge_context_shift}
\scriptsize
\renewcommand{\arraystretch}{1.5}
\begin{tabular}{p{0.34\textwidth} p{0.34\textwidth} p{0.21\textwidth}}
\hline
\textbf{Input/Output 1} & \textbf{Input/Output 2} & \textbf{Comments} \\ \hline
\textit{$\vx$:} explain the difference between charge and \textbf{voltage}. & \textit{$\vx'$:} explain the difference between charge and \textbf{pay}. & Prompts differ in a \emph{single word}, but the context shifts from Physics to Finance. \\ \hline
\textit{$\vy_2(\vx)$:} explain the difference between charge and {voltage}. & \textit{$\vy_2(\vx')$:} explain the difference between charge and {pay}. & Both outputs merely echo inputs. \\ \hline
\textit{$\vy_4(\vx)$:} explain the difference between charge and {voltage}. explain the difference between current and voltage. & 
\textit{$\vy_4(\vx')$:} explain the difference between charge and {pay}. explain the difference between the two. & Outputs repeat the inputs with slight variations; contexts are not yet separated. \\ \hline
\textit{$\vy_6(\vx)$:} The difference between voltage and charge is that voltage is the difference between the potential difference between two points, while charge is the difference between the electric field between two points. & \textit{$\vy_6(\vx')$:} explain the difference between charge and {pay}. Explain the difference between charge and pay. & {Separation begins:} Output~1 moves into a Physics explanation (potential difference), while Output~2 still stays near the prompt. \\ \hline
\textit{$\vy_8(\vx)$:} The difference between charge and voltage is that voltage is the difference in electric potential between two points in an electric field, while charge is the amount of electric charge in a system. & \textit{$\vy_8(\vx')$:} Charge is the amount of money that a person owes to a company or organization. Pay is the amount of money that a person receives from a company or organization. & Full separation: Output~1 gives a Physics definition; Output~2 correctly adopts the Financial interpretation (money owed vs.\ received). \\ \hline
\textit{$\vy_{32}(\vx)$:} Charge is the amount of electric charge present in a body. Voltage is the difference in electric potential between two points. & \textit{$\vy_{32}(\vx')$:} Charge is the amount of money that a person or company owes to a credit card company. Pay is the amount of money that a person or company has paid to the credit card company. & Refinement: both domains have stable, concise, and accurate definitions specialized to Physics versus Finance. \\ \hline
\end{tabular}
\end{table}

\textbf{Quantitative Results.} To quantitatively measure this, we define an ``Empirical Lipschitz" constant $L_t$ using Levenshtein distance $d(\cdot, \cdot)$:
\[
L_t(i):=\frac{d(\vy_t(\vx_i),\vy_t(\vx^\prime_i))}{d(\vx_i,\vx^\prime_i)}
\]
We construct $\{(\vx_i, \vx'_i)\}_{i=1}^{200}$, a dataset of 200 pairs where inputs differ by only 1–2 words but require vastly different semantic contexts. 
Figure \ref{fig:llm-Lip} plots the geometric mean of $L_t$, which rises from $\approx 29.2$ at $t=2$ (indicating relative insensitivity) to saturate at $\approx 52.5$ by $t=16$. Consistent with our theory, this growth reflects the model's emergent capacity to map proximal inputs to semantically distinct outputs. 
Even in the discrete domain of language reasoning, iterating a fixed operator allows the model to scale its expressive power, evolving from simple surface-level processing to complex, context-aware reasoning.

\begin{figure}
  \begin{center}
    \includegraphics[width=0.48\textwidth]{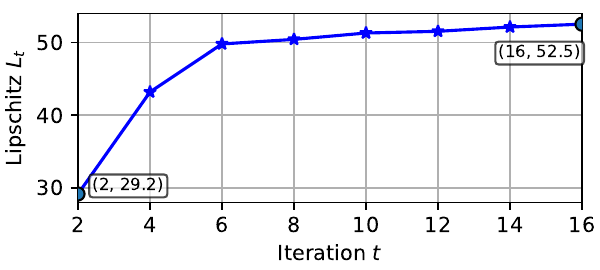}
  \end{center}
  \caption{Empirical Lipschitz of the output sequence $\vy_t(\cdot)$ generated by \cite{geiping2025scaling} using $t$ recurrent blocks. $L_t$ grows as $t$ increases.}
  \label{fig:llm-Lip}
\end{figure}

\section{Conclusions}

We establish a sharp expressivity boundary for regular implicit models: as iterations increase, their expressive power grows, and the resulting fixed points can represent exactly the class of locally Lipschitz maps. In three case studies, per-iteration Lipschitz estimates grew toward the target’s complexity while accuracy improved and stabilized. Overall, both theory and evidence show that iterating a simple operator is a principled route to powerful models, clarifying how fixed-point architectures can match or surpass large explicit networks.

\bibliography{refs}
\bibliographystyle{amsxport}

\appendix

\section{Proofs of main results}
\label{sec:app-main}

The core intuition behind our proofs is an extension of the $1/x$ example discussed in the introduction. 

For Theorem 2.4 (Sufficiency), we construct the implicit operator $\mathcal{G}$ as a dynamic interpolation: $\mathcal{G}(\vy,\vx) = (1-\varepsilon(\vx))\vy + \varepsilon(\vx)\mathcal{F}(\vx)$, which iteratively pulls the state $\vy$ toward the target $\mathcal{F}(\vx)$ with a step size $\varepsilon(\vx)$. The key theoretical innovation is making this step size adaptive: we construct $\varepsilon(\vx)$ to be inversely proportional to the local steepness (Lipschitz constant) of the target $\vy_\ast(\vx)$. In regions where the target function becomes extremely steep or singular (like $x \to 0$ for $1/x$), our constructed $\varepsilon(\vx)$ naturally vanishes. This effectively ``slows down" the dynamics, ensuring the operator $\mathcal{G}$ itself remains globally smooth and contractive. 

Theorem 2.5 (Necessity) establishes the converse: we show that for any regular operator, the local steepness of the fixed point is mathematically bounded by the operator's parameters ($y$-contraction modulus $\mu(\vx)$); and the fixed point map $\vy_\ast(\vx)$ can only become singular if the convergence rate slows down (contraction modulus $\to 1$), perfectly matching the mechanism used in our sufficiency construction.

\subsection{Proof of sufficiency}

\begin{proof}[Proof of Theorem \ref{thm:lip}]
    Given any $\cF$ satisfying Assumption \ref{asmp:f}, the existence of $\cG$ is proved by the following construction:
    \begin{equation}
        \label{eq:g-definition}
        \cG(\vy,\vx) = \cF(\vx) + (1-\varepsilon(\vx)) (\vy - \cF(\vx)).
    \end{equation}
    The proof will be done by choosing a function $\varepsilon: \sX\to\sR$ such that
    \begin{itemize}
        \item Functions $\varepsilon(\vx)$ and $\varepsilon(\vx) \cF(\vx)$ are both globally Lipschitz continuous on $\sX$.
        \item $0 < \varepsilon(\vx) < 1$ for any $\vx \in \sX$.
    \end{itemize}
    The existence of such a $\varepsilon$ function is deferred to Theorem \ref{thm:eps-unbdd}. Now, let's suppose such a $\varepsilon(\vx)$ is given and finish the whole proof. First, let's check the contractivity of $\cG$ in \eqref{eq:g-definition} as $\vx$ fixed. For any $\vy,\hat{\vy}\in\sR^n$, it holds that
    \[  \cG(\vy,\vx) - \cG(\hat{\vy}, \vx)  =  (1-\varepsilon(\vx)) (\vy - \cF(\vx)) - (1-\varepsilon(\vx)) (\hat{\vy} - \cF(\vx)) = (1-\varepsilon(\vx)) (\vy - \hat{\vy}).  \]
    Since $0<\varepsilon(\vx) < 1$ for $\vx \in \sX$, we conclude that $\cG(\cdot,\vx)$ is contractive for $\vx \in \sX$. In addition, the continuity of the contraction modulus $(1-\varepsilon(\vx))$ is a direct result of the continuity of $\varepsilon(\vx)$. 
    Finally, we check the Lipschitz continuity as $\vy$ fixed. For any $\vx,\hat{\vx}\in\sX$ and any $\vy \in \sR^n$, it holds that
    \begin{align*}
            & \cG(\vy,\vx) - \cG(\vy, \hat{\vx})\\
        = & \Big(\cG(\vy,\vx) - \vy\Big) - \Big(\cG(\vy, \hat{\vx}) - \vy\Big) \\
        = & \Big(\cF(\vx) - \vy + (1-\varepsilon(\vx)) (\vy - \cF(\vx)) \Big) - \Big(\cF(\hat{\vx}) - \vy + (1-\varepsilon(\hat{\vx})) (\vy - \cF(\hat{\vx}))\Big) \\
        = & - \varepsilon(\vx) (\vy - \cF(\vx)) + \varepsilon(\hat{\vx}) (\vy - \cF(\hat{\vx}))\\
        = & \Big( - \varepsilon(\vx) + \varepsilon(\hat{\vx}) \Big) \vy + \Big( \varepsilon(\vx)\cF(\vx) - \varepsilon(\hat{\vx}) \cF(\hat{\vx}) \Big)
    \end{align*}
    With a fixed $\vy \in \sR^n$, the Lipschitz continuity of $\cG(\vy,\cdot)$ follows from the Lipschitz continuity of $\varepsilon(\vx)$ and $\varepsilon(\vx) \cF(\vx)$. In particular, by denoting the Lipschitz constants of $\varepsilon(\vx)$ and $\varepsilon(\vx) \cF(\vx)$ as $L_{\varepsilon}$ and $L_{\varepsilon \cF}$ respectively, we have
    \[ \|\cG(\vy,\vx) - \cG(\vy, \hat{\vx})\| \leq L_{\varepsilon} \|\vx - \hat{\vx}\| \cdot \|\vy\| + L_{\varepsilon \cF} \|\vx - \hat{\vx}\| \leq \Big(L_{\varepsilon} \|\vy\| + L_{\varepsilon \cF}\Big) \|\vx - \hat{\vx}\| \]
    where the Lipschitz constant of $\cG$, $L:=L_{\varepsilon} \|\vy\| + L_{\varepsilon \cF}$, grows linearly w.r.t. $\|\vy\|$, which finishes the whole proof.
\end{proof}



Below we provide the core theorems used in the proof of Theorem \ref{thm:lip}. We first consider $\sX$ to be bounded (Theorem \ref{thm:eps}) and then extend the results to the unbounded domain (Theorem \ref{thm:eps-unbdd}).

\begin{theorem}
    \label{thm:eps}
For any $\cF$ satisfying Assumption \ref{asmp:f} defined on a bounded domain $\sX \subset \sR^d$, there exists a function $\varepsilon: \sX\to\sR$ such that $0 < \varepsilon(\vx) < 1$ for $\vx \in \sX$, and $\varepsilon(\vx)$ and $\varepsilon(\vx) \cF(\vx)$ are both globally Lipschitz continuous on $\sX$.
\end{theorem}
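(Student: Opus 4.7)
The plan is to construct a weight $\varepsilon(\vx)$ that vanishes at exactly the rate needed to tame the blow-up of $\cF$'s local Lipschitz modulus while remaining Lipschitz itself. First I would locate the singular set as $B:=\overline{\sX}\setminus\sX$ (the ``missing'' points in the closure where $\cF$ can blow up) and form the $1$-Lipschitz distance $d(\vx):=\mathrm{dist}(\vx,B)\wedge 1$, which is strictly positive on $\sX$ (if $B=\emptyset$ then $\overline{\sX}=\sX$ is compact, $\cF$ is already globally Lipschitz, and a constant $\varepsilon$ works). Next I would exhaust $\sX$ by the compact sets $K_n:=\{\vx\in\overline{\sX}:d(\vx)\ge 2^{-n}\}$; since locally Lipschitz on a compact set upgrades to globally Lipschitz, $\cF$ is $L_n$-Lipschitz on $K_n$ with $\|\cF\|\le F_n$ there, and both sequences are non-decreasing in $n$.

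The core design is $\varepsilon(\vx):=g(d(\vx))$, where $g:[0,1]\to[0,\tfrac12]$ is piecewise linear with
\[
g(0):=0,\qquad g(2^{-n}):=\frac{1}{2M_n},\qquad M_n:=\max\bigl(L_{n+1},\ 2^nF_{n+1},\ 2^n\bigr),
\]
and linear interpolation between consecutive dyadics. The three ingredients of $M_n$ do distinct jobs: the floor $2^n$ caps the slope of $g$ on $[2^{-n-1},2^{-n}]$ by $1$, so $\varepsilon=g\circ d$ is globally $1$-Lipschitz; the term $L_{n+1}$ forces $\varepsilon(\vx)\,L_n\le\tfrac12$, controlling the first summand of the product rule; and the term $2^nF_{n+1}$ makes the local slope of $g$ at scale $2^{-n}$ at most $1/F_n$, so a $g$-slope paired with an $F_n$ weight is $O(1)$.

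To finish I would take a pair $\vx,\vy\in\sX$ (WLOG $d(\vx)\le d(\vy)$, with $d(\vx)\in[2^{-n},2^{-n+1})$ and $d(\vy)\in[2^{-m},2^{-m+1})$, $m\le n$) and apply
\[
\|\varepsilon(\vx)\cF(\vx)-\varepsilon(\vy)\cF(\vy)\|\le\varepsilon(\vx)\|\cF(\vx)-\cF(\vy)\|+|\varepsilon(\vx)-\varepsilon(\vy)|\|\cF(\vy)\|.
\]
Since both points lie in $K_n$, the first term is bounded by $L_n\cdot(1/(2M_{n-1}))\|\vx-\vy\|\le\tfrac12\|\vx-\vy\|$ using $M_{n-1}\ge L_n$. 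For the second, monotonicity of $F_k$ makes the maximum slope of $g$ on $[d(\vx),d(\vy)]$ at most $1/F_m$, which combined with $\|\cF(\vy)\|\le F_m$ yields a bound of $\|\vx-\vy\|$. Hence $\varepsilon\cF$ is globally Lipschitz with constant $\tfrac32$, while $\varepsilon(\vx)\in(0,1/2)$ on $\sX$ is immediate from $d(\vx)>0$ and positivity of $g$ on $(0,1]$.

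The main obstacle will be simultaneously satisfying the two Lipschitz demands: forcing $\varepsilon$ to decay fast enough to absorb the blow-up of $\cF$ near $B$ tends to break the Lipschitzness of $\varepsilon$ itself, and vice versa. The dyadic coupling $g(2^{-n})\sim 1/M_n$ with $M_n$ absorbing all three obligations is essentially the minimal calibration making both cancellations hold at every scale. A secondary issue is that for very irregular bounded $\sX$ where $B$ fails to be closed and $d$ can vanish on $\sX$ (e.g.\ $\sQ\cap[0,1]$), the distance ansatz breaks; in that case I would instead take a countable Lipschitz partition of unity subordinate to a Lindel\"of-extracted cover of $\sX$ by open balls on which $\cF$ is Lipschitz, weighting patch $i$ by $1/(1+L_i+F_i)$ to realize the same dyadic cancellations pointwise.
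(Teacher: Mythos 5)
Your primary construction is close in spirit to the paper's proof---both locate a ``bad'' set, measure the distance to it, and calibrate the decay rate of $\varepsilon$ against the local Lipschitz modulus and local sup of $\cF$---and the dyadic piecewise-linear interpolation with $M_n=\max(L_{n+1},2^nF_{n+1},2^n)$ is a clean, verifiable alternative to the paper's integral smoothing $\hat\varepsilon(r)=\int_0^r \hat h(s)\,ds$. For domains $\sX$ that are relatively open in $\overline{\sX}$, your estimates check out and you obtain the Lipschitz constant $3/2$ with a correct product-rule split.

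However, there is a genuine gap that you identified but did not close: your singular set $B:=\overline{\sX}\setminus\sX$ need not be closed, and Assumption \ref{asmp:f} (even with the added boundedness hypothesis of Theorem \ref{thm:eps}) places no openness restriction on $\sX$. In your own example $\sX=\mathbb{Q}\cap[0,1]$, one has $d\equiv0$ on $\sX$, so $\varepsilon=g\circ d\equiv0$ and the positivity requirement fails outright. The paper avoids this entirely by using a \emph{different} singular set: after continuously extending $\cF$ to $\overline{\sX}$ wherever possible, it sets $\sD:=\sD(\cF)$, the set of points of $\overline{\sX}$ at which the extension is \emph{not} locally Lipschitz. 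The key structural facts are that $\sD$ is always closed (Lemma \ref{lemma:local-lip-closed}, which holds because the ``locally Lipschitz'' points form a relatively open subset of the closed set $\overline{\sX}$) and that $\sD\subset\overline{\sX}\setminus\sX$ (because local Lipschitzness of $\cF$ on $\sX$ propagates to the extension). Hence $\dist(\vx,\sD)>0$ for all $\vx\in\sX$, with no openness assumption on $\sX$. Your fallback---a Lipschitz partition of unity subordinate to a Lindel\"of cover, weighted by $1/(1+L_i+F_i)$---does not obviously rescue the argument: a Lipschitz partition of unity subordinate to an arbitrary countable cover will generally have Lipschitz constants for the $\rho_i$ that scale like the inverse radius of $\sB_i$, and you have supplied no coupling between these radii and the quantities $L_i,F_i$ that would let the weight $1/(1+L_i+F_i)$ absorb those gradients; as written it is a plan rather than a proof. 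If you simply replace $B$ by $\sD(\cF)$ throughout your primary construction, everything else you wrote goes through verbatim, with $L_n,F_n$ taken as the Lipschitz constant and sup bound of the extended $\cF$ on $\sD_r$ for $r=2^{-n}$, and the fallback becomes unnecessary.
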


\begin{proof} Let $\overline{\sX}$ be the closure of set $\sX$. In this proof, we will first extend $\cF$ to $\overline{\sX}$, construct the $\varepsilon$ function on $\overline{\sX}$, and finally prove the global Lipschitz continuity of $\varepsilon(\vx)$ and $\varepsilon(\vx) \cF(\vx)$ on $\overline{\sX}$.

\textbf{Step 1: Extension to $\overline{\sX}$.} First we extend $\cF$ to $\bar \vx \in \overline{\sX} \setminus \sX$ by the limit relative to $\sX$:
\[
\cF(\bar \vx )=\left\{
\begin{aligned}
    & \lim_{\sX\ni \vx\to\bar \vx}\cF(\vx),  \quad &&  \textup{if $\lim_{\sX\ni \vx\to\bar \vx}\cF(\vx)$ exists}, \\
& 0,  && \textup{otherwise}.
\end{aligned}\right.
\]
Note that even if $\cF$ is continuously extendable to $\bar \vx$, it is still possible that $\cF$ is not locally Lipschitz continuous at the point $\bar \vx$. A simple example is the function $\sqrt{x}$, which is continuous as $x\geq 0$ and locally Lipschitz continuous for all points $x >0$ but NOT locally Lipschitz at $x=0$. We collect all these points (where $\cF$ is not locally Lipschitz) into the set $\sD(\cF)$:
\[ \sD(\cF):= \left\{ \vx \in \overline{\sX}: \textup{$\cF$ is not locally Lipschitz continuous at $\vx$} \right\} \]
For brevity, we will use $\sD$ to denote $\sD(\cF)$. It holds that $\sD$ is a closed set (ref. to Lemma \ref{lemma:local-lip-closed}) and $\sD \subset \overline{\sX}\setminus\sX$.
    
\textbf{Step 2: Constructing a function $\varepsilon: \overline{\sX} \to \sR_{\ge 0}$.}Now let's define a set including all points that are very ``safe", i.e., sufficiently far from the discontinuity set $\sD$. In particular, given a positive real number $r>0$, the set $\sD_r$ is define by 
\[\sD_r:= \big\{ \vx \in \overline{\sX}: d(\vx, \sD) \geq r \big\},\]
where $d(\vx, \sD)$ means the distance of $\vx$ and $\sD$, and the closedness of $\sD_r$ can be derived from the continuity of the distance function. Since $\sD_r \subset \overline{\sX}$ and $\overline{\sX}$ is compact, $\sD_r$ must be compact. Note that $\sD_r$ and $\sD$ are disjoint, hence $\cF$ is locally Lipschitz continuous everywhere on $\sD_r$. Thanks to the fact that local Lipschitz continuity on a compact set implies global Lipschitz continuity (ref to Lemma \ref{lemma:local-Lip}), we can conclude that $\cF$ is bounded and globally Lipschitz continuous on $\sD_r$ for all $r > 0$. Therefore, the following two supremums exist, as long as the cardinality (number of elements) of $\sD_r$ is large enough:
\begin{align*}
         & h_1(r)=\left\{
\begin{aligned}
    & \sup_{\vx_1,\vx_2\in\sD_r,\vx_1\not=\vx_2}\frac{ \|\cF(\vx_1)-\cF(\vx_2) \| }{\|\vx_1-\vx_2\|},  \quad &&  \mathrm{card}(\sD_r) \geq 2, \\
& 0,  && \textup{otherwise}.
\end{aligned}\right.
\\ 
& h_2(r)= \left\{
\begin{aligned}
& \sup_{\vx\in\sD_r}\|\cF(\vx)\|, \quad && \mathrm{card}(\sD_r) \geq 1,\\
& 0, && \textup{otherwise}.
\end{aligned}\right.  
\end{align*}
    Here, both $h_1$ and $h_2$ are non-negative and monotone non-increasing on $(0,+\infty)$. Then we define:
    \[ \hat{h}(r) = \frac{1}{h_1(r) + h_2(r) + 1}. \]
    It has the following properties:
    \begin{itemize}[leftmargin=5mm]
        \item Bounded: $ 0 < \hat{h}(r) \leq 1$ as $r > 0$.
        \item Monotone : $\hat{h}(r_1) \leq \hat{h}(r_2)$ as $0 < r_1 \leq r_2$. (Due to the monotonicity of $h_1$ and $h_2$)
        \item Naturally extended to $r=0$: $\lim_{r\to 0_{+}}\hat{h}(r)$ exists. (Due to the monotonicity of $\hat{h}$)
        \item $\hat{h}(r) h_i(r) < 1$ for $r \geq 0$ and $i=1,2$.
    \end{itemize}
    These properties implies that $\hat{h}$ is Riemann integrable on $[0,+\infty)$. Then we can define the following function: 
   \[ \hat{\varepsilon}(r) := \int_{0}^r \hat{h}(s) \mathrm{d}s \]
    with the following properties:
    \begin{itemize}[leftmargin=5mm]
        \item $\hat{\varepsilon}(0)=0$.
        \item Monotone increasing. This is a straightforward result of the fact that $\hat{h}(s) > 0$ for $s > 0$.
        \item Strictly positive as $r>0$. This is also straightforward as $\hat{h}(s) > 0$ for $s > 0$.
        \item $1$-Lipschitz continuous on $[0,+\infty)$. For any $r_1,r_2$ with $0 \leq r_1 \leq r_2 < +\infty$, we have
        \[ | \hat{\varepsilon}(r_1) - \hat{\varepsilon}(r_2) | = \hat{\varepsilon}(r_2) - \hat{\varepsilon}(r_1) = \int_{r_1}^{r_2} \hat{h}(s)\mathrm{d}s \leq \left( \sup_{r \geq 0} \hat{h}(r) \right) |r_1 - r_2| = |r_1 - r_2|. \]
    \end{itemize}
With such a $\hat{\varepsilon}(r)$, we can define $\varepsilon(\vx)$ by
\[ \varepsilon(\vx) = \frac{\hat{\varepsilon} \Big( d(\vx,\sD) \Big)}{1+\hat{\varepsilon} \Big( d(\vx,\sD) \Big)} . \]
It holds that $\varepsilon(\vx)=0$ for $\vx \in \sD$ and $0 < \varepsilon(\vx) < 1$ for $\vx \in \overline{\sX} \setminus \sD$. As $\sD \subset \overline{\sX}\setminus\sX$, we have $0 < \varepsilon(\vx) < 1$ for $\vx \in \sX$.

\textbf{Step 3: Establishing Lipschitz continuity.} Since the distance function $d(\vx,\sD)$ is $1$-Lipschitz continuous \cite[Theorem 4.8 (1)]{federer1959curvature}, the Lipschitz continuity of $\hat{\varepsilon}$ implies the Lipschitz continuity of $\varepsilon$. In particular, for all $\vx_1,\vx_2 \in \overline{\sX}$, it holds that
\begin{align*}
& \Big|\varepsilon(\vx_1) - \varepsilon(\vx_2)\Big| \\
= & \left| \frac{\hat{\varepsilon} \Big( d(\vx_1,\sD) \Big)}{1+\hat{\varepsilon} \Big( d(\vx_1,\sD) \Big)} - \frac{\hat{\varepsilon} \Big( d(\vx_2,\sD) \Big)}{1+\hat{\varepsilon} \Big( d(\vx_2,\sD) \Big)} \right| && \left(\textup{$\frac{x}{1+x}$ is 1-Lipschitz as $\left(\frac{x}{1+x}\right)^\prime=\frac{1}{(1+x)^2}$}\right) \\
\leq & \left| \hat{\varepsilon} \Big( d(\vx_1,\sD) \Big) - \hat{\varepsilon} \Big( d(\vx_2,\sD) \Big) \right| && (\textup{Lipschitz continuity of $\hat{\varepsilon}$}) \\
\leq & \Big|  d(\vx_1,\sD) - d(\vx_2,\sD) \Big| \leq \|\vx_1 - \vx_2\|  && (\textup{Lipschitz continuity of $d$})
\end{align*}
Therefore, to complete the whole proof, it's enough to show the global Lipschitz continuity of $\varepsilon \cF$ on $\overline{\sX}$. As $\overline{\sX}$ is compact, and thanks to Lemma \ref{lemma:local-Lip}, it's enough to show $\varepsilon \cF$ is locally Lipschitz everywhere on $\overline{\sX}$. 

First, we consider local Lipschitz continuity of $\varepsilon\cF$ on $\overline{\sX}\setminus\sD$. Due to Lemma \ref{lemma:local-lip-closed}, $\overline{\sX}\setminus\sD$ must be open relative to $\overline{\sX}$.
For any $\vx \in \overline{\sX}\setminus\sD$, there must be a small enough $r>0$ such that $\sU:=\sB(\vx,r) \cap \overline{\sX} \subset \overline{\sX}\setminus\sD$. Pick $\vx_1,\vx_2 \in \sU$. For any $\vx_1,\vx_2$, it holds that
\begin{equation}
\label{eq:proof-lip-epsf}
\begin{aligned}
   & \|\varepsilon(\vx_1)\cF(\vx_1) - \varepsilon(\vx_2)\cF(\vx_2)\| \\
   = & \|\varepsilon(\vx_1)\cF(\vx_1) - \varepsilon(\vx_1)\cF(\vx_2) + \varepsilon(\vx_1)\cF(\vx_2) -\varepsilon(\vx_2)\cF(\vx_2)\| \\
   \leq & \varepsilon(\vx_1) \big\|\cF(\vx_1) - \cF(\vx_2)\big\| + |\varepsilon(\vx_1) -\varepsilon(\vx_2)| \cdot \big \|\cF(\vx_2)\big \|. 
\end{aligned}
\end{equation}
Since both $\varepsilon$ and $\cF$ are locally Lipschitz and locally bounded everywhere on $\overline{\sX}\setminus\sD$, they must be Lipschitz and bounded within $\sU$. Then the local Lipschitz continuity of $\varepsilon \cF$ at $\vx$ immediately follows from \eqref{eq:proof-lip-epsf}. Note that $\vx$ is arbitrarily picked from $\overline{\sX}\setminus\sD$, hence $\varepsilon \cF$ is locally Lipschitz everywhere on $\overline{\sX}\setminus\sD$. 

Next, we consider local Lipschitz continuity of $\varepsilon\cF$ on $\sD$. For any $\vx\in\sD$, we consider its neighborhood $\sU:=\sB(\vx,1)\cap\overline{\sX}$ and pick $\vx_1,\vx_2\in \sU$. Then we need to consider three cases. The first case is both $\vx_1,\vx_2$ belong to the discontinuity set $\sD$: $\vx_1,\vx_2\in \sD$. In this case, it holds that $\varepsilon(\vx_1)=\varepsilon(\vx_2)=0$ and hence
$$
\Big\| \varepsilon(\vx_1)\cF(\vx_1) - \varepsilon(\vx_2)\cF(\vx_2) \Big\| = 0 \leq \|\vx_1 - \vx_2\|.
$$
The second case is that one of the point is in $\sD$ while the other is not, we suppose $\vx_1\in\sD,\vx_2\in \overline{\sX}\setminus\sD$, then
\begin{align*}
& \Big\| \varepsilon(\vx_1)\cF(\vx_1) - \varepsilon(\vx_2)\cF(\vx_2) \Big\| \\
= & \Big\|\varepsilon(\vx_2)\cF(\vx_2) \Big\|  \leq \hat{\varepsilon}\Big(d(\vx_2,\sD)\Big)\|\cF(\vx_2)\| \\
 =&  \left(\int_{0}^{d(\vx_2,\sD)}\hat{h}(s) \mathrm{d}s \right) \|\cF(\vx_2)\|\\
 \leq &\hat{h}(d(\vx_2,\sD)) \cdot d(\vx_2,\sD) \cdot \|\cF(\vx_2)\|  && (\textup{Monotonicity of $\hat{h}$}) \\
 \leq& \hat{h}(d(\vx_2,\sD)) \cdot d(\vx_2,\sD) \cdot h_2(d(\vx_2,\sD)) && (\textup{Definition of $h_2$}) \\  
<&  d(\vx_2,\sD) && (\hat{h}(r) \cdot h_2(r) < 1 \textup{ as } r \geq 0)\\
 =& d(\vx_2,\sD) - d(\vx_1,\sD) \leq \|\vx_1-\vx_2\|
\end{align*}
Finally, we consider the last case where $\vx_1,\vx_2\in\overline{\sX}\setminus\sD$. Without loss of generality, we assume \[0<d(\vx_1,\sD)\leq d(\vx_2,\sD).\]
Then the definition of $h_1$ and $h_2$ implies that
\[ 
\begin{aligned}
& \|\cF(\vx_1) - \cF(\vx_2)\|  \\
\leq & \max\Big( h_1(d(\vx_1,\sD)), h_1(d(\vx_2,\sD)) \Big) \cdot \|\vx_1- \vx_2\| \\
= & h_1(d(\vx_1,\sD)) \cdot \|\vx_1- \vx_2\| ,
\end{aligned}
\]
and 
\[ 
\|\cF(\vx_2)\| \leq h_2(d(\vx_2,\sD)).
\]
Consequently, applying \eqref{eq:proof-lip-epsf} and the above inequalities, we have
\begin{align*}
   & \|\varepsilon(\vx_1)\cF(\vx_1) - \varepsilon(\vx_2)\cF(\vx_2)\| \\
   \leq & \varepsilon(\vx_1) \big\|\cF(\vx_1) - \cF(\vx_2)\big\| + |\varepsilon(\vx_1) -\varepsilon(\vx_2)| \cdot \big \|\cF(\vx_2)\big \| \\
   \leq & \varepsilon(\vx_1) \cdot h_1(d(\vx_1,\sD)) \cdot \|\vx_1- \vx_2\| + |\varepsilon(\vx_1) -\varepsilon(\vx_2)| \cdot  h_2(d(\vx_2,\sD)) \\
   \leq & \hat{\varepsilon}\Big(d(\vx_1,\sD)\Big) \cdot h_1(d(\vx_1,\sD)) \cdot \|\vx_1- \vx_2\| + \left |\hat{\varepsilon}\Big(d(\vx_1,\sD)\Big) -\hat{\varepsilon}\Big(d(\vx_2,\sD)\Big) \right| \cdot  h_2(d(\vx_2,\sD)) \\
   = & \left( \int_{0}^{d(\vx_1,\sD)} \hat{h}(s) \mathrm{d}s \right) \cdot h_1(d(\vx_1,\sD)) \cdot \|\vx_1- \vx_2\| + \left( \int_{d(\vx_1,\sD)}^{d(\vx_2,\sD)} \hat{h}(s) \mathrm{d}s \right)  \cdot  h_2(d(\vx_2,\sD))\\
   \leq & d(\vx_1,\sD) \cdot \hat{h}(d(\vx_1,\sD)) \cdot h_1(d(\vx_1,\sD)) \cdot \|\vx_1- \vx_2\| \\
    & \qquad + \Big| d(\vx_1,\sD) - d(\vx_2,\sD) \Big| \cdot \hat{h}(d(\vx_2,\sD)) \cdot h_2(d(\vx_2,\sD)) \\
    < & d(\vx_1,\sD) \cdot \|\vx_1- \vx_2\| + \Big| d(\vx_1,\sD) - d(\vx_2,\sD) \Big| \\
   \leq & d(\vx_1,\sD) \cdot \|\vx_1- \vx_2\| + \|\vx_1- \vx_2\|
\end{align*}
The last inequality results from $\hat{h}(r) \cdot (h_1(r) + h_2(r)) < 1$ for all $r > 0$. And the above inequalities imply
\[ \|\varepsilon(\vx_1)\cF(\vx_1) - \varepsilon(\vx_2)\cF(\vx_2)\| \leq (\textup{diam}(\overline{\sX})+1)\cdot
\|\vx_1-\vx_2\|.\]
Combining all the results together, we have $\varepsilon \cF$ is locally $(\textup{diam}(\overline{\sX})+1)$-Lipschitz at any $\vx \in \overline{\sX}$. Then the compactness of $\overline{\sX}$ concludes the global Lipschitz continuous of $\varepsilon \cF$, which finishes the whole proof.
\end{proof}

Follows are some lemmas (as well as their proofs) that we used in the proof of Theorem \ref{thm:eps}.

\begin{lemma}
\label{lemma:local-lip-closed}
Let $\sT \subset\sR^d$ be closed and let $\cF:\sT\to\sR^n$. 
Denote by $\sD(\cF)\subset\sT$ the set of points at which $\cF$ is \emph{not}
locally Lipschitz. Then $\sD(\cF)$ is closed (in $\sT$, hence in $\sR^d$).
\end{lemma}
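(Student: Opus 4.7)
The plan is to prove this by showing the complement $\sT \setminus \sD(\cF)$ is (relatively) open in $\sT$, which is the direct dual formulation of closedness. The key observation is that ``locally Lipschitz at $\vx$'' is witnessed by a single Lipschitz constant on an open neighborhood, and that same neighborhood serves as a witness for every point inside it.

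Concretely, I would proceed as follows. First, take an arbitrary $\vx_0 \in \sT \setminus \sD(\cF)$; by definition there exist a neighborhood $\sU$ of $\vx_0$ (open in the subspace topology of $\sT$) and a constant $L_{\sU} > 0$ such that
\[
\|\cF(\vx_1) - \cF(\vx_2)\| \;\le\; L_{\sU}\,\|\vx_1 - \vx_2\| \qquad \text{for all }\vx_1,\vx_2 \in \sU.
\]
Second, I would pick any $\vx \in \sU$. Since $\sU$ is open in $\sT$, it is a neighborhood of $\vx$ as well, and the inequality above (restricted to that neighborhood) certifies that $\cF$ is $L_{\sU}$-Lipschitz there. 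Hence $\cF$ is locally Lipschitz at $\vx$, i.e., $\vx \in \sT \setminus \sD(\cF)$. This shows $\sU \subset \sT \setminus \sD(\cF)$, so $\sT \setminus \sD(\cF)$ is open in $\sT$, and therefore $\sD(\cF)$ is closed in $\sT$. Since $\sT$ itself is closed in $\sR^d$ by hypothesis, $\sD(\cF)$ is also closed in $\sR^d$.

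There is essentially no obstacle here; the statement is almost an unpacking of definitions once one notes the crucial point that the \emph{same} open set $\sU$ simultaneously witnesses local Lipschitz continuity for each of its points with the \emph{same} constant $L_{\sU}$. The only mild care required is to use neighborhoods that are open in $\sT$ (one may always shrink a given neighborhood to an open one), so that the property propagates from $\vx_0$ to all nearby points rather than being tied to $\vx_0$ alone.
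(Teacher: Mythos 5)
Your proposal is correct and takes essentially the same approach as the paper: show that $\sT\setminus\sD(\cF)$ is relatively open by observing that a single Lipschitz neighborhood $\sU$ of $\vx_0$ simultaneously witnesses local Lipschitz continuity at every point of $\sU$. The paper phrases this with concrete balls (shrinking the radius by half so that $\sB(\vx',r/2)\subset\sB(\vx,r)$), whereas you avoid the shrinkage by taking $\sU$ open to begin with; these are the same argument in substance.
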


\begin{proof}
Recall that $\cF$ is locally Lipschitz (relative to $\sT$) at $\vx\in\sT$ if there exist $r>0$ and $L>0$ such that
\[
\|\cF(\vu)-\cF(\vv)\|\le L\,\|\vu-\vv\|\qquad\text{for all }\vu,\vv\in \sT\cap \sB(\vx,r).
\]
Let $\sG:=\sT\setminus\sD(\cF)$ be the set of points where $\cF$ \emph{is} locally Lipschitz. 
We first show that $\sG$ is relatively open in $\sT$. Fix $\vx\in\sG$ and choose $r,L$ as above.
If $\vx'\in\sT\cap \sB(\vx,r/2)$, then $\sB(\vx',r/2)\subset \sB(\vx,r)$; hence the same $L$ works on
$\sT\cap \sB(\vx',r/2)$, so $\cF$ is locally Lipschitz at $\vx'$. Therefore
$\sT\cap \sB(\vx,r/2)\subset\sG$, proving that $\sG$ is open in $\sT$.
Consequently, $\sD(\cF)=\sT\setminus\sG$ is closed in $\sT$. 
Since $\sT$ is closed in $\sR^d$, every set closed in $\sT$ is also closed in $\sR^d$.
Hence $\sD(\cF)$ is closed in $\sR^d$ as well.
\end{proof}

\begin{lemma}
\label{lemma:local-Lip}
Let $\sT$ be a compact set. If $\cF$ is locally Lipschitz everywhere on $\sT$, then it must be globally Lipschitz on $\sT$.
\end{lemma}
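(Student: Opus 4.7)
The plan is to reduce the local hypothesis to a uniform (global) bound via a standard covering-plus-Lebesgue-number argument, together with boundedness of $\cF$ on the compact set $\sT$ to handle pairs of points that are not covered by the same local neighborhood. A subtlety is that $\sT$ is not assumed to be connected, so we cannot simply chain local bounds along a path; instead, the argument will treat ``nearby'' and ``far apart'' pairs separately.

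First, invoke local Lipschitz continuity at each $\vx\in\sT$ to produce $r_\vx>0$ and $L_\vx>0$ such that $\cF$ is $L_\vx$-Lipschitz on $\sT\cap \sB(\vx,r_\vx)$. The open balls $\{\sB(\vx,r_\vx/2)\}_{\vx\in\sT}$ form an open cover of $\sT$, and compactness yields a finite subcover $\{\sB(\vx_i,r_i/2)\}_{i=1}^N$ with associated constants $L_i$. Set $L_{\max}:=\max_{1\le i\le N} L_i$. By the Lebesgue number lemma applied to this finite open cover, there exists $\delta>0$ such that every pair $\vu,\vv\in\sT$ with $\|\vu-\vv\|<\delta$ lies in a common set $\sB(\vx_i,r_i)$ (indeed, take $\delta$ smaller than the Lebesgue number of the refined cover $\{\sB(\vx_i,r_i/2)\}$ so that any ball of radius $\delta$ meeting $\sT$ sits inside some $\sB(\vx_i,r_i)$). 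For such $\vu,\vv$, the local Lipschitz bound gives
\[
\|\cF(\vu)-\cF(\vv)\|\;\le\; L_i\,\|\vu-\vv\|\;\le\; L_{\max}\,\|\vu-\vv\|.
\]

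Next, handle the ``far apart'' case. Local Lipschitz continuity implies continuity of $\cF$ on $\sT$, and $\sT$ is compact, so $\cF(\sT)$ is bounded; set $M:=\sup_{\vx\in\sT}\|\cF(\vx)\|<\infty$. For any $\vu,\vv\in\sT$ with $\|\vu-\vv\|\ge\delta$,
\[
\|\cF(\vu)-\cF(\vv)\|\;\le\; 2M\;\le\;\frac{2M}{\delta}\,\|\vu-\vv\|.
\]
Combining the two regimes, $\cF$ is globally Lipschitz on $\sT$ with constant $L:=\max\!\left(L_{\max},\,\tfrac{2M}{\delta}\right)$, which completes the proof.

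The main (and really only) obstacle is the Lebesgue-number step: one must verify that the finite subcover admits a uniform ``closeness'' threshold $\delta$ valid simultaneously for all pairs in $\sT$. This is where compactness is essential; without it one could have local Lipschitz constants blowing up or neighborhoods shrinking to zero, as illustrated by the $1/x$ example in the introduction, and the bounded/far-apart trick above would fail because $\cF$ need not be bounded.
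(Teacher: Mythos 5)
Your proof is correct, but it follows a genuinely different route from the one in the paper. The paper argues by contradiction using \emph{sequential} compactness: assume the difference quotient $\|\cF(\vx_k)-\cF(\vy_k)\|/\|\vx_k-\vy_k\|$ blows up along sequences $\{\vx_k\},\{\vy_k\}\subset\sT$; boundedness of $\cF$ on $\sT$ forces $\|\vx_k-\vy_k\|\to 0$; pass to a convergent subsequence with common limit $\vx$; and contradict the local Lipschitz bound at $\vx$. You instead give a direct argument via \emph{covering} compactness: extract a finite subcover of half-radius balls, use a Lebesgue-number argument to produce a uniform closeness threshold $\delta$ so that nearby pairs land in a single patch where a local constant $L_i$ applies, and handle pairs at distance $\ge\delta$ by the trivial bound $2M/\delta$ coming from boundedness. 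Both proofs correctly exploit the same two facts (local Lipschitz constants on a finite cover, and global boundedness of $\cF$ by continuity plus compactness), but your version is constructive and yields an explicit candidate Lipschitz constant $L=\max(L_{\max},\,2M/\delta)$, whereas the paper's contradiction argument is a bit shorter and avoids the near/far case split. One small imprecision in your write-up: the parenthetical justification of the Lebesgue step is loosely phrased. The cleanest statement is that the Lebesgue number $\lambda$ of the cover $\{\sT\cap\sB(\vx_i,r_i/2)\}_{i=1}^N$ already does the job with $\delta=\lambda$, since any pair $\vu,\vv\in\sT$ with $\|\vu-\vv\|<\lambda$ is a set of diameter $<\lambda$ and hence lies in a single $\sB(\vx_i,r_i/2)\subset\sB(\vx_i,r_i)$; alternatively, $\delta=\min_i r_i/2$ works by a one-line triangle-inequality argument without invoking the Lebesgue number lemma at all.
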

\begin{proof}
Assume, to the contrary, that $\cF$ is not globally Lipschitz on $\sT$. Then we can choose sequences $\{\vx_k\}_{k\ge1},\{\vy_k\}_{k\ge1}\subset\sT$ such that
\begin{equation}\label{eq:blowup}
\frac{\|\cF(\vx_k)-\cF(\vy_k)\|}{\|\vx_k - \vy_k\|}\xrightarrow[]{k\to\infty}\infty.
\end{equation}
Local Lipschitzness implies continuity of $\cF$ on $\sT$, so by compactness $\cF$ is bounded:
there exists $C<\infty$ with $\|\cF(\vz)\|\le C$ for all $\vz\in\sT$. Consequently,
\[
\|\cF(\vx_k)-\cF(\vy_k)\|\le 2C\qquad\text{for all }k,
\]
and therefore \eqref{eq:blowup} forces $\|\vx_k - \vy_k\|\to 0$.

By sequential compactness of $\sT$, passing to a subsequence (not relabeled) we may assume $\vx_k\to \vx\in\sT$; since $\|\vx_k - \vy_k\|\to0$, we also have $\vy_k\to \vx$. Since $\cF$ is locally Lipschitz at $\vx$, for $k$ large enough we have 
\[
\frac{\|\cF(\vx_k)-\cF(\vy_k)\|}{\|\vx_k - \vy_k\|}\le L,
\]
for some $L>0$, which contradicts \eqref{eq:blowup}. Therefore $\cF$ must be globally Lipschitz on $\sT$. 
\end{proof}

Note that Lemmas \ref{lemma:local-lip-closed} and \ref{lemma:local-Lip} are standard in real analysis, we provide concise proofs of them for the sake of completeness. 

Next, we relax the condition in Theorem \ref{thm:eps} and extend it to unbounded domains.
\begin{theorem}
\label{thm:eps-unbdd}
For any $\sX\subset \sR^d$ (not necessarily bounded) and any locally Lipschitz function $\cF:\sX\to \sR$, there exists a function $\varepsilon: \sX\to\sR$ such that $0 < \varepsilon(\vx) < 1$ for $\vx \in \sX$, and $\varepsilon(\vx)$ and $\varepsilon(\vx) \cF(\vx)$ are both globally Lipschitz continuous on $\sX$.
\end{theorem}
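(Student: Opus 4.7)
The plan is to re-run the construction in the proof of Theorem \ref{thm:eps} on the unbounded domain $\sX$, replacing the distance-to-singularity $d(\vx,\sD)$ by a globally Lipschitz quantity that also controls the ``distance to infinity.'' Specifically, after extending $\cF$ to $\overline{\sX}$ and identifying the discontinuity set $\sD\subset\overline{\sX}\setminus\sX$ exactly as before, I would define
\[
\rho(\vx)\;:=\;\min\!\Bigl(\,d(\vx,\sD),\;\tfrac{1}{1+\|\vx\|}\,\Bigr).
\]
This $\rho$ is globally $1$-Lipschitz (each argument of the minimum is), strictly positive on $\sX$ (since $\sD\cap\sX=\emptyset$), and uniformly bounded above by $1$. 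Crucially, its super-level sets $\sD_r:=\{\vx\in\overline{\sX}:\rho(\vx)\ge r\}$ are now compact for every $r>0$: closedness comes from continuity of $\rho$, and boundedness from the constraint $\tfrac{1}{1+\|\vx\|}\ge r$. This is precisely the feature that the bounded-case argument needed from $\overline{\sX}$ itself.

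With compactness of each $\sD_r$ restored, I would reuse the bounded-case machinery almost verbatim. On $\sD_r$, $\cF$ is locally Lipschitz and thus, by Lemma \ref{lemma:local-Lip}, globally Lipschitz and bounded, so the quantities
\[
h_1(r)=\sup_{\vx_1\neq\vx_2\in\sD_r}\frac{\|\cF(\vx_1)-\cF(\vx_2)\|}{\|\vx_1-\vx_2\|},\qquad h_2(r)=\sup_{\vx\in\sD_r}\|\cF(\vx)\|
\]
are finite, non-negative, and monotone non-increasing in $r$. Setting $\hat h(r)=1/(h_1(r)+h_2(r)+1)$, $\hat\varepsilon(r)=\int_0^r\hat h(s)\,ds$, and
\[
\varepsilon(\vx)\;=\;\frac{\hat\varepsilon(\rho(\vx))}{1+\hat\varepsilon(\rho(\vx))},
\]
yields $0<\varepsilon(\vx)<1$ on $\sX$. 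Global Lipschitz continuity of $\varepsilon$ on $\sX$ then follows by composition: $\rho$ is $1$-Lipschitz, $\hat\varepsilon$ is $1$-Lipschitz (same argument as before, since $\hat h\le 1$), and $t\mapsto t/(1+t)$ is $1$-Lipschitz on $[0,\infty)$.

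The main remaining step---and the one I expect to be the chief obstacle---is upgrading the bound on $\varepsilon\cF$ from \emph{local} Lipschitz (which is all the bounded proof really verified pointwise) to a \emph{global} Lipschitz estimate, since the compactness-based shortcut via Lemma \ref{lemma:local-Lip} is no longer available. My plan is to replay the three-case split (both points in $\sD$, one in each, both outside $\sD$) with $\rho$ in place of $d(\cdot,\sD)$ and turn each case into a uniform bound. The ordering $\rho(\vx_1)\le\rho(\vx_2)$ still places both points in $\sD_{\rho(\vx_1)}$, so the estimates $\|\cF(\vx_1)-\cF(\vx_2)\|\le h_1(\rho(\vx_1))\|\vx_1-\vx_2\|$ and $\|\cF(\vx_2)\|\le h_2(\rho(\vx_2))$ remain valid; the telescoping chain of inequalities then proceeds unchanged, using the key properties $\hat h\cdot h_1<1$ and $\hat h\cdot h_2<1$. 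The decisive new ingredient is the uniform bound $\rho(\vx)\le 1$, which replaces the bounded-case factor $\operatorname{diam}(\overline{\sX})$ by the universal constant $1$ and yields, directly and globally,
\[
\|\varepsilon(\vx_1)\cF(\vx_1)-\varepsilon(\vx_2)\cF(\vx_2)\|\;\le\;2\,\|\vx_1-\vx_2\|.
\]
Conceptually, the summand $1/(1+\|\vx\|)$ in $\rho$ is exactly what forces $\varepsilon$ to shrink at infinity and thereby tames any potential blow-up of $\cF$ there, which is what fails in a naive extension of Theorem \ref{thm:eps} to unbounded $\sX$.
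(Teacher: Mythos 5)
Your proof is correct, and it takes a genuinely different route from the paper's. The paper covers $\sR^d$ by a lattice of overlapping cubes $\cC_\vk$ of side length $3$, invokes the bounded-case Theorem \ref{thm:eps} on each $\cC_\vk\cap\sX$, and glues the resulting local $\varepsilon_\vk$ via a piecewise-linear partition of unity $\rho^{(d)}_\vk$. You instead modify the scalar quantity fed into the bounded-case construction, replacing $d(\cdot,\sD)$ by $\rho(\vx)=\min\bigl(d(\vx,\sD),\tfrac{1}{1+\|\vx\|}\bigr)$, which restores compactness of the super-level sets $\sD_r$ (closed by continuity, bounded because $\tfrac{1}{1+\|\vx\|}\ge r$ forces $\|\vx\|\le\tfrac1r-1$) and caps the one unbounded factor in the three-case estimate by $\rho\le 1$. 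I checked the details you labeled as ``the chief obstacle'' and they do go through: $\rho$ is $1$-Lipschitz as the min of two $1$-Lipschitz functions; $\rho(\vx)=0$ iff $\vx\in\sD$ (since $\tfrac{1}{1+\|\vx\|}>0$), so the case split is unchanged; the ordering $\rho(\vx_1)\le\rho(\vx_2)$ indeed places both points in $\sD_{\rho(\vx_1)}$, validating the $h_1,h_2$ estimates; and using $\hat\varepsilon(r)\le r\hat h(r)$ (from $\hat h$ non-decreasing) together with $\hat h\,h_1<1$, $\hat h\,h_2<1$ yields the advertised global bound of $2\|\vx_1-\vx_2\|$ — notably better (and dimension-free) compared to the paper's constant of order $2^d\sqrt d$. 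Your approach is arguably cleaner, collapsing the whole proof into a single instance of the bounded-case machinery; the paper's gluing argument is more modular (and reuses Theorem \ref{thm:eps} verbatim rather than re-proving a variant of it), but pays for it with a heavier construction and a worse constant.
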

\begin{proof}
We consider the following grids in $\sR^d$: 
$$
\vk = (2k_1,2k_2,\ldots,2k_d),\quad k_i\in \sZ, \, i\in [d].
$$
We define the closed d-dimensional cubic centered at $\vk$ by
$$
\cC_\vk =\left[ 2k_1-\frac{3}{2}, 2k_1+\frac{3}{2}  \right]\times\left[ 2k_2-\frac{3}{2}, 2k_2+\frac{3}{2}  \right]\times \cdots \times \left[ 2k_d-\frac{3}{2}, 2k_d+\frac{3}{2}  \right].
$$
According to Theorem \ref{thm:eps}, there exists a function $\varepsilon_\vk$ defined on $\cC_\vk\cap \sX$, so that 
\begin{itemize}[leftmargin=5mm]
    \item $0<\varepsilon_\vk(\vx)<1$ on $\cC_\vk\cap \sX$.
    \item $\varepsilon_\vk(\vx)$ is 1-Lipschitz on $\cC_\vk\cap \sX$.
    \item $\varepsilon_\vk(\vx)\cF(\vx)$ is $\left(\mathrm{diam}(\cC_\vk) + 1\right)$-Lipschitz on $\cC_\vk\cap \sX$.
\end{itemize}
Next, we concatenate all these $\varepsilon_\vk$ functions and get a global $\varepsilon:\sX\to\sR$. We define the following concatenation function in 1-dimensional space:
\begin{equation*}
\rho(x) = \left\{
\begin{aligned}
    & x + \frac{3}{2},  \quad &&  x\in\left[ -\frac{3}{2},-\frac{1}{2} \right], \\
& 1,  && x\in\left[ -\frac{1}{2},\frac{1}{2} \right],\\
& -x+\frac{3}{2}, && x\in\left[ \frac{1}{2},\frac{3}{2} \right],\\
& 0, && \textup{otherwise}.
\end{aligned}\right.
\end{equation*}
Then we define the d-dimensional concatenation function. Let $\vx=(x_1, x_2,\ldots, x_d)$.
$$
\rho^{(d)}(\vx) = \prod_{i=1}^d \rho(x_i).
$$
We define the shifting function $\rho^{(d)}_\vk(\vx)=\rho^{(d)}(\vx-\vk)$. On $\sX$, we construct
$$
\varepsilon(\vx) = \sum_{\vk\in\{\vk:\vx\in\cC_\vk\}} \rho^{(d)}_\vk(\vx)\varepsilon_\vk(\vx).
$$
Given the constructed $\varepsilon(\vx)$, it's enough to prove that: $0<\varepsilon(\vx)<1$, $\varepsilon(\vx)$ and $\varepsilon(\vx)\cF(\vx)$ are globally Lipschitz over $\sX$. We will show these claims one by one and finish the proof.

First, let's show $0<\varepsilon(\vx)<1$. As $\rho^{(d)}$ is non-negative and $0<\varepsilon_\vk(\vx)<1$, we have
\[ 0 < \varepsilon(\vx) < \sum_{\vk} \rho^{(d)}_\vk(\vx) = 1 \]
where $\sum_{\vk} \rho^{(d)}_\vk(\vx) = 1$ comes from the fact that each term $\rho(x_i - 2k_i)$ depends only on its specific index $k_i$ and not on the others, and hence we can distribute the summation as $\sum_i \sum_j a_i b_j = (\sum_i a_i)(\sum_j b_j)$. That is,
\[
\begin{aligned}
\sum_{\mathbf{k}} \rho^{(d)}_\mathbf{k}(\vx) 
= & \left( \sum_{k_1 \in \mathbb{Z}} \rho(x_1 - 2k_1) \right) \times \left( \sum_{k_2 \in \mathbb{Z}} \rho(x_2 - 2k_2) \right) \times \cdots \times \left( \sum_{k_d \in \mathbb{Z}} \rho(x_d - 2k_d) \right) \\
= & 1 \times 1 \times \cdots \times 1 = 1.
\end{aligned}
\]
Second, we prove the Lipschitz continuity of $\varepsilon(\vx)$. In particular, it holds that
$$\begin{aligned}
|\varepsilon(\vx) - \varepsilon(\hat{\vx})| 
&= \left| \sum_{\mathbf{k}} \rho^{(d)}_\mathbf{k}(\vx)\varepsilon_\mathbf{k}(\vx) - \sum_{\mathbf{k}} \rho^{(d)}_\mathbf{k}(\hat{\vx})\varepsilon_\mathbf{k}(\hat{\vx}) \right| \\
&= \left| \sum_{\mathbf{k}} \rho^{(d)}_\mathbf{k}(\vx)(\varepsilon_\mathbf{k}(\vx) - \varepsilon_\mathbf{k}(\hat{\vx})) + \sum_{\mathbf{k}} (\rho^{(d)}_\mathbf{k}(\vx) - \rho^{(d)}_\mathbf{k}(\hat{\vx}))\varepsilon_\mathbf{k}(\hat{\vx}) \right| \\
&\le \sum_{\mathbf{k}} \rho^{(d)}_\mathbf{k}(\vx) \underbrace{|\varepsilon_\mathbf{k}(\vx) - \varepsilon_\mathbf{k}(\hat{\vx})|}_{\le \|\vx-\hat{\vx}\|} + \sum_{\vk\in\{\vk:\vx\in\cC_\vk\}} \underbrace{|\rho^{(d)}_\mathbf{k}(\vx) - \rho^{(d)}_\mathbf{k}(\hat{\vx})|}_{\le \sqrt{d}\|\vx-\hat{\vx}\|} \underbrace{|\varepsilon_\mathbf{k}(\hat{\vx})|}_{\le 1}\\
&\leq \|\vx-\hat{\vx}\| + 2^d \sqrt{d} \|\vx-\hat{\vx}\| = (1+2^d \sqrt{d}) \|\vx-\hat{\vx}\|
\end{aligned}$$
where $2^d$ comes from the fact at most $2^d$ grid cubes overlap at each point $\mathbf{x}$.

Third, using the same argument, we can show that $\varepsilon(\vx)\cF(\vx)$ is globally Lipschitz over $\sX$, and the Lipschitz constant is bounded by $\left(3\sqrt{d} + 1 + 2^d \sqrt{d}\right)$ (Recall that $\mathrm{diam}(\cC_\vk) = 3\sqrt{d}$), which finishes the proof.
\end{proof}

\subsection{Proof of necessity}

For Theorem \ref{thm:lip-2}, we adopt a similar idea: first considering a bounded domain and then extending the results to unbounded domains.

\begin{theorem}
\label{thm:lip-2-bdd}
Let $\sX\subset\sR^d$ be a \emph{bounded} domain and let $\cG:\sR^n\times\sX\to\sR^n$ be \emph{regular}. Then, for every $\vx\in\sX$, the map $\vy\mapsto \cG(\vy,\vx)$ has a unique fixed point $\vy_\ast(\vx)$, and the resulting fixed-point map $\vy_\ast(\vx)$ must be locally Lipschitz on $\sX$.
\end{theorem}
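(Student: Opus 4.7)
The plan is to first settle existence and uniqueness via Banach's contraction mapping theorem, and then derive local Lipschitz continuity through a standard two-point splitting argument that matches the contraction in $\vy$ against the linear-growth Lipschitz bound in $\vx$. For any $\vx\in\sX$, regularity property (ii) makes $\cG(\cdot,\vx)$ a $\mu(\vx)$-contraction on the complete space $\sR^n$, so Banach's theorem produces a unique fixed point $\vy_\ast(\vx)$, settling the first claim.

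The next step is to prove that $\vy_\ast$ is locally bounded near any $\vx_0\in\sX$. By continuity of $\mu$ and $\mu(\vx_0)<1$, I pick an open neighborhood $\sU\subset\sX$ of $\vx_0$ and a constant $\mu_0\in(0,1)$ such that $\mu(\vx)\le\mu_0$ for all $\vx\in\sU$. Inserting $\cG(\vzero,\vx)$ into the fixed-point identity yields
\begin{equation*}
\|\vy_\ast(\vx)\|\le \|\cG(\vy_\ast(\vx),\vx)-\cG(\vzero,\vx)\|+\|\cG(\vzero,\vx)\|\le \mu(\vx)\|\vy_\ast(\vx)\|+\|\cG(\vzero,\vx)\|,
\end{equation*}
so $\|\vy_\ast(\vx)\|\le \|\cG(\vzero,\vx)\|/(1-\mu(\vx))$. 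By property (i) applied at $\vy=\vzero$, the map $\vx\mapsto\cG(\vzero,\vx)$ is globally Lipschitz on the bounded set $\sX$, hence bounded; combined with $1-\mu(\vx)\ge 1-\mu_0$ on $\sU$, this gives $\|\vy_\ast(\vx)\|\le M$ on $\sU$ for some finite $M$.

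For the Lipschitz estimate, take $\vx_1,\vx_2\in\sU$ and split
\begin{equation*}
\vy_\ast(\vx_1)-\vy_\ast(\vx_2)=\bigl[\cG(\vy_\ast(\vx_1),\vx_1)-\cG(\vy_\ast(\vx_2),\vx_1)\bigr]+\bigl[\cG(\vy_\ast(\vx_2),\vx_1)-\cG(\vy_\ast(\vx_2),\vx_2)\bigr].
\end{equation*}
The first bracket is bounded by $\mu_0\|\vy_\ast(\vx_1)-\vy_\ast(\vx_2)\|$ using the contraction in $\vy$, while the second is bounded by $C(1+\|\vy_\ast(\vx_2)\|)\|\vx_1-\vx_2\|\le C(1+M)\|\vx_1-\vx_2\|$ using the linear growth of the $\vx$-Lipschitz constant in $\|\vy\|$. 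Rearranging delivers $\textup{Lip}(\vy_\ast|_\sU)\le C(1+M)/(1-\mu_0)$. The main obstacle is the interplay between the $\vy$-dependent Lipschitz constant of $\cG(\vy,\cdot)$ and the need to keep $\|\vy_\ast\|$ bounded where the estimate is applied: both $\mu(\vx)\to 1$ and $\|\vy_\ast(\vx)\|\to\infty$ are precisely the mechanisms that can create a singularity in $\vy_\ast$, which is why only \emph{local} (and not global) Lipschitz continuity survives—mirroring the expressive class identified in Theorem~\ref{thm:lip}.
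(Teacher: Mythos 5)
Your proof is correct and, while it arrives at the same place, it takes a cleaner route than the paper's argument. Both you and the paper begin with Banach's fixed-point theorem for existence and uniqueness, and both ultimately rely on the two-term decomposition $\cG(\vy_\ast(\vx_1),\vx_1)-\cG(\vy_\ast(\vx_2),\vx_1)$ plus $\cG(\vy_\ast(\vx_2),\vx_1)-\cG(\vy_\ast(\vx_2),\vx_2)$. The difference is where this split is deployed. The paper first extends $\cG$ to the closure $\overline{\sX}$ (checking the Lipschitz constant is preserved under limits), defines a ``bad set'' $\sD=\{\mu=1\}$ and the nested compact sets $\sD_r$, runs the Picard iterates $\vy_k(\vx)$ starting from a fixed $\vy_0$, proves a geometric-series bound $\|\vy_k\|\le M_r$ and a recursive Lipschitz bound $a_{k+1}\le \mu_r a_k + L_r h$ on each $\sD_r$, and then passes to the limit $k\to\infty$. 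You instead apply the split directly to the fixed point, using the identity $\vy_\ast(\vx)=\cG(\vy_\ast(\vx),\vx)$ to bound $\|\vy_\ast\|$ on a small neighborhood $\sU$ (where $\mu\le\mu_0<1$ and $\cG(\vzero,\cdot)$ is bounded because it is Lipschitz on a bounded set), and then rearranging $\|\vy_\ast(\vx_1)-\vy_\ast(\vx_2)\|\le\mu_0\|\vy_\ast(\vx_1)-\vy_\ast(\vx_2)\|+C(1+M)\|\vx_1-\vx_2\|$. This avoids the closure extension, the $\sD_r$ machinery, and the iterate-then-limit step entirely, and is the textbook ``contraction-with-parameter'' argument. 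What the paper's heavier scaffolding buys is an explicit, uniform Lipschitz constant $C_r=3L_r/(1-\mu_r)$ valid on each compact slab $\sD_r$, not just on a small ball around each point; your estimate $C(1+M)/(1-\mu_0)$ is local by construction but is exactly what the statement (local Lipschitz) asks for. Your closing observation—that the only mechanisms for a singularity are $\mu(\vx)\to 1$ or $\|\vy_\ast(\vx)\|\to\infty$—is accurate and mirrors the paper's remark that these bounds are non-uniform, which is precisely why only local (not global) Lipschitz continuity follows.
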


\begin{proof}
Let $\overline{\sX}$ be the closure of $\sX$. In this proof, we will first extend the operator $\cG$ to $\sR^n \times \overline{\sX}$, and then analyze its properties on this closed domain.

\textbf{Step 1: Extension to $\overline{\sX}$.} For any $\vy \in \sR^n$, $\cG(\vy,\vx)$ is globally Lipschitz continuous on $\sX$, hence its extension is naturally define by
\[ \cG(\vy,\bar \vx) := \lim_{\sX\ni \vx\to\bar \vx}\cG(\vy, \vx) ,\quad \textup{for all }\bar \vx \in \overline{\sX} \setminus \sX. \]
Different from the proof of Theorem \ref{thm:eps} where $\cF$ might be not locally Lipschitz at $\bar \vx$ even if it is continuous at $\bar \vx$, here the extended $\cG$ must be Lipschitz at $\bar \vx$ and hence Lipschitz on the overall set $\overline{\sX}$. This can be verified by examining the difference quotient for $\vx_1 \neq \vx_2$ and $\vy \in \sR^n$:
\[ \Delta \cG[\vy; \vx_1,\vx_2]:= \frac{\|\cG(\vy,\vx_1)-\cG(\vy,\vx_2)\|}{\|\vx_1-\vx_2\|} \]
Let $\cG(\vy,\cdot)$'s Lipschitz constant on $\sX$ be $L(\vy):= \sup_{\vx_1\neq\vx_2 \in \sX}\Delta \cG[\vy; \vx_1,\vx_2]$. For any $\vx_1 \in \sX$ and $\bar{\vx}_2 \in \overline{\sX}\setminus\sX$, it holds that
\[
\Delta \cG[\vy; \vx_1,\bar{\vx}_2] =  \lim_{\sX\ni \vx_2\to \bar{\vx}_2}\Delta \cG[\vy; \vx_1,\vx_2] \leq \sup_{\vx_2 \in \sX:\vx_2 \neq \vx_1}\Delta \cG[\vy; \vx_1,\vx_2] \leq L(\vy)
\]
For any $\bar{\vx}_1 \neq \bar{\vx}_2 \in \overline{\sX}\setminus\sX$, we have
\[
\Delta \cG[\vy; \bar{\vx}_1,\bar{\vx}_2] =  \lim_{\sX\ni \vx_1\to \bar{\vx}_1}\lim_{\sX\ni \vx_2\to \bar{\vx}_2}\Delta \cG[\vy; \vx_1,\vx_2] \leq \sup_{\vx_1,\vx_2 \in \sX:\vx_2 \neq \vx_1}\Delta \cG[\vy; \vx_1,\vx_2] = L(\vy)
\]
Therefore, we obtain an upper bound for $\cG(\vy,\cdot)$'s Lipschitz constant on $\overline{\sX}$:
\begin{align*}
& \sup_{\vx_1\neq\vx_2\in\overline{\sX}} \Delta \cG[\vy; \vx_1,\vx_2] \\
=& \max\left( 
\sup_{\vx_1\neq\vx_2 \in \sX} \Delta \cG[\vy; \bar{\vx}_1,\bar{\vx}_2],
\sup_{\vx_1\in\sX,\vx_2\in\overline{\sX}\setminus\sX} \Delta \cG[\vy; \vx_1,\bar{\vx}_2], 
\sup_{\vx_1\neq\vx_2 \in \overline{\sX}\setminus\sX} \Delta \cG[\vy; \bar{\vx}_1,\bar{\vx}_2] 
\right) \\
\leq & \max\left( L(\vy),L(\vy),L(\vy) \right) = L(\vy)
\end{align*}
That is, for any $\vy \in \sR^n$, $\cG(\vy,\cdot)$ is globally Lipschitz on $\overline{\sX}$, and the Lipschitz constant is the same as that of $\sX$.

On the other hand, let's consider the Lipschitz constant (contraction modulus) w.r.t. $\vy$ when fixing $\bar \vx \in \overline{\sX}\setminus\sX$:
\[\mu(\bar\vx) = \lim_{\sX \ni \vx\to \bar\vx} \mu(\vx)\]
Since $0 < \mu(\vx) < 1$ for $\vx \in \sX$, by taking limit, we have $0 \leq \mu(\bar\vx) \leq 1$. For those $\bar\vx$ with $\mu(\bar\vx) < 1$, the operator $\cG(\cdot,\bar\vx)$ is still contractive. But if $\mu(\bar\vx) =1$, the operator $\cG(\cdot,\bar\vx)$ is not contractive.

\textbf{Step 2: Defining $\sD$ and $\sD_r$.} We collect all points $\vx\in \overline{\sX}$ where the operator $\cG(\cdot,\vx)$ is not contractive:
\[ \sD:=\left\{ \vx \in \overline{\sX}: \mu(\vx)=1 \right\} \]
and define a ``safe" set that is sufficiently far from $\sD$:
\[\sD_r:= \big\{ \vx \in \overline{\sX}: d(\vx, \sD) \geq r \big\}.\]
Note that $\overline{\sX}\setminus\sD = \bigcup_{r > 0} \sD_r$ and $\sX \subset \overline{\sX}\setminus\sD$. We obtain
\[\sX \subset \bigcup_{r > 0} \sD_r.\]
For any $\sD_r$ with $r > 0$, we can obtain a uniform contraction modulus of the operator $\cG(\cdot,\vx)$: There is a constant $\mu_r \in (0,1)$ such that
\begin{equation}
\label{eq:local-contract}
\|\cG(\vy_1, \vx) - \cG(\vy_2,\vx)\| \leq \mu_r \|\vy_1 - \vy_2\|
\end{equation}
for all $\vy_1,\vy_2 \in \sR^n$ and $\vx \in \sD_r$, which follows immediately from the continuity of $\mu(\vx)$ and the compactness of $\sD_r$. By the Banach fixed-point theorem, the operator $\cG(\cdot,\vx)$ must have a unique fixed point $\vy_\ast$ for each $\vx\in\sD_r$. 

To complete the proof of Theorem \ref{thm:lip-2}, thanks to the fact that $\sX \subset \bigcup_{r > 0} \sD_r$, it's enough to show that: For any $\sD_r$ with $r > 0$, there is a constant $C_r$ such that
\begin{equation}
\label{eq:local-lip}
\| \vy_\ast(\vx_1) - \vy_\ast(\vx_2) \| \leq C_r \| \vx_1 - \vx_2 \|
\end{equation}
holds for all $\vx_1, \vx_2 \in \sD_r$. In the following steps, we will show \eqref{eq:local-lip}.

\textbf{Step 3: A controllable sequence.} Fix $\vx \in \sD_r$.
By defining a sequence $\{\vy_k(\vx)\}_{k \geq 0} \subset \sR^n$:
\[ \vy_{k+1}(\vx) = \cG(\vy_{k}(\vx),\vx), \quad \vy_0 \textup{ is constant for all }\vx, \]
we are able to estimate the upper bound of $\|\vy_\ast(\vx)\|$. In particular, we decompose $\vy_0 - \vy_\ast$ by a series:
\[ \vy_0 - \vy_\ast = \lim_{k \to \infty} (\vy_0 - \vy_k) = \sum_{k=0}^{\infty} (\vy_k - \vy_{k+1}) \]
Thanks to \eqref{eq:local-contract}, we have
\begin{align*}
\|\vy_k(\vx) - \vy_{k+1}(\vx)\| \leq \mu_r \|\vy_{k-1}(\vx) - \vy_{k}(\vx)\| \cdots \leq \mu_r^k \|\vy_0 - \vy_{1}(\vx)\| = \mu_r^k \|\vy_0 - \cG(\vy_0,\vx)\|
\end{align*}
for all $\vx \in \sD_r$. Therefore, it holds that
\begin{align*}
\|\vy_0 - \vy_\ast(\vx)\|
\leq& \sum_{k=0}^{\infty} \|\vy_k(\vx) - \vy_{k+1}(\vx)\| \\
\leq& \left(\sum_{k=0}^{\infty} \mu_r^k \right) \|\vy_0 - \cG(\vy_0,\vx)\| 
= \frac{1}{1-\mu_r} \|\vy_0 - \cG(\vy_0,\vx)\|
\end{align*}
Now we can conclude the boundedness of $\|\vy_\ast(\vx)\|$ for $\vx \in \sD_r$ by the compactness of $\sD_r$:
\[ \|\vy_\ast(\vx)\| \leq \underbrace{\|\vy_0\| + \frac{1}{1-\mu_r} \sup_{\vx \in \sD_r}\|\vy_0 - \cG(\vy_0,\vx)\|}_{\textup{defined as $M_r \geq 0$.}}  \]
With the same argument, we have $\|\vy_k(\vx)\| \leq M_r$ for all $k \geq 0$ and $\vx \in \sD_r$. It implies that 
\[ L(\vy_k(\vx)) \leq L_1 + L_2 M_r \] 
for some $L_1,L_2>0$ as $L(\vy)$ grows linearly w.r.t. $\|\vy\|$. 
Consequently, we can estimate an upper bound for the Lipschitz constant of $\vy_k(\vx)$. In particular, for $\vx_1,\vx_2 \in \sD_r$, it holds that
\begin{align*}
& \| \vy_{k+1}(\vx_1) - \vy_{k+1}(\vx_2) \|\\
= & \| \cG(\vy_{k}(\vx_1),\vx_1) - \cG(\vy_{k}(\vx_2),\vx_2) \| \\
= & \| \cG(\vy_{k}(\vx_1),\vx_1) - \cG(\vy_{k}(\vx_2),\vx_1) + \cG(\vy_{k}(\vx_2),\vx_1) - \cG(\vy_{k}(\vx_2),\vx_2) \| \\
\leq & \| \cG(\vy_{k}(\vx_1),\vx_1) - \cG(\vy_{k}(\vx_2),\vx_1)\| + \| \cG(\vy_{k}(\vx_2),\vx_1) - \cG(\vy_{k}(\vx_2),\vx_2) \|\\
\leq & \mu_r \| \vy_{k}(\vx_1) - \vy_{k}(\vx_2)\| + (L_1 + L_2 M_r) \| \vx_1 - \vx_2 \|
\end{align*}
For simplicity, let $L_r:=L_1 + L_2 M_r$, $a_k := \| \vy_{k}(\vx_1) - \vy_{k}(\vx_2)\|$, and $h := \| \vx_1 - \vx_2 \|$. By recursively applying $a_{k+1} \leq \mu_r a_{k} + Lh$ and $a_0 = 0$, we have
\[ \| \vy_{k}(\vx_1) - \vy_{k}(\vx_2)\| = a_k \leq (\mu_r)^k a_0 + (\mu_r^{k-1} + \cdots + \mu_r + 1)L_r h \leq \frac{1}{1-\mu_r}L_r h = \frac{L_r}{1-\mu_r} \| \vx_1 - \vx_2 \|.\]

\textbf{Step 4: Final proof.}
As $\cG(\cdot,\vx)$ is contractive in $\vy$ for any $\vx \in \sD_r$, it holds that $\vy_k(\vx) \to \vy_\ast(\vx)$ for any $\vx \in \sD_r$. (Here, as for the ``convergence," we mean the pointwise convergence, which is enough here. We don't need stronger conditions like the uniform convergence.) For the above $\vx_1,\vx_2$, there is a $K$ such that
\[ \|\vy_k(\vx_1) - \vy_\ast(\vx_1)\| \leq \frac{L_r}{1-\mu_r} \| \vx_1 - \vx_2 \|, \quad \|\vy_k(\vx_2) - \vy_\ast(\vx_2)\| \leq \frac{L_r}{1-\mu_r} \| \vx_1 - \vx_2 \| \]
for $k \geq K$. Combining the above results, we obtain
\begin{align*}
\| \vy_\ast(\vx_1) - \vy_\ast(\vx_2) \| 
\leq & \|\vy_\ast(\vx_1) - \vy_k(\vx_1)\| + \| \vy_{k}(\vx_1) - \vy_{k}(\vx_2)\| + \|\vy_k(\vx_2) - \vy_\ast(\vx_2)\| \\
\leq & \frac{3 L_r}{1-\mu_r} \| \vx_1 - \vx_2 \| 
\end{align*}
By letting $C_r = 3L_r / (1-\mu_r)$, we get \eqref{eq:local-lip}, which completes the proof.
\end{proof}

\begin{remark}
    Our result relaxes two uniformity requirements in \cite[Thm. 1A.4]{dontchev2009implicit}: (i) the contraction modulus $\mu(\vx)$ is allowed to vary with $\vx$ (it only needs to be continuous in $\vx$), rather than being a single global constant; and (ii) for each $\vy$, the mapping $\vx \mapsto \cG(\vy,\vx)$ is Lipschitz on $\sX$ with a constant that may grow linearly in $\|\vy\|$, instead of being uniformly bounded in $\vy$. Because these bounds are not uniform, we conclude only local (as opposed to global) Lipschitz continuity of the fixed-point map $\vx \mapsto \vy_\ast(\vx)$ on $\sX$.
\end{remark} 

Now we relax the condition in Theorem \ref{thm:lip-2-bdd} to unbounded domains and prove Theorem \ref{thm:lip-2} based on Theorem \ref{thm:lip-2-bdd}.

\begin{proof}[Proof of Theorem \ref{thm:lip-2}] We cover the domain $\sR^d$ using the grid $\vk = (2k_1,\ldots,2k_d)$ for $k_i\in \sZ$, defining closed cubic regions $\cC_\vk$ of side length 3 centered at each $\vk$:
$$
\cC_\vk =\left[ 2k_1-\frac{3}{2}, 2k_1+\frac{3}{2}  \right]\times\left[ 2k_2-\frac{3}{2}, 2k_2+\frac{3}{2}  \right]\times \cdots \times \left[ 2k_d-\frac{3}{2}, 2k_d+\frac{3}{2}  \right].
$$
By applying Theorem \ref{thm:lip-2-bdd} to the bounded set $\cC_\vk\cap \sX$, we guarantee the existence of a unique fixed-point map $\vy_{\vk,*}: \cC_\vk\cap \sX \to \sR^n$ which is locally Lipschitz continuous on its domain.

Consider any $\vx$ in the intersection of two regions $\cC_\vk \cap \cC_{\vk'}$. Since $\cG(\cdot,\vx)$ is a contraction, it admits a unique fixed point in $\sR^n$. Therefore, the local solutions must coincide:
$$\vy_{\vk,*}(\vx)= \vy_{\vk',*}(\vx).$$
This consistency allows us to define a global fixed-point map $\vy_*: \sX \to \sR^n$ by setting $\vy_*(\vx) = \vy_{\vk,*}(\vx)$ for any $\vk$ such that $\vx \in \cC_\vk$. Since $\vy_*$ coincides with a locally Lipschitz function $\vy_{\vk,*}$ on every compact neighborhood $\cC_\vk$, $\vy_*$ is locally Lipschitz continuous on $\sX$. \end{proof}

\section{A variant architecture}
\label{sec:app-general}

In practice, many works use a variant of the vanilla model $\vy_\ast=\cG(\vy_\ast,\vx)$:
\begin{equation}
\label{eq:imp-pq}
\vz_\ast = \cG(\vz_\ast,\cQ_1(\vx)),\quad \vy_\ast=\cQ_2(\vz_\ast)
\end{equation}
where $\cG$ is the core implicit model, $\cQ_1$ is a encoding network and $\cQ_2$ is a decoding (readout). 

At inference, one iterates $\vz_{t}=\cG(\vz_{t-1},\cQ_1(\vx))$ for $1 \leq t \leq T$ and finally $\vy_T = \cQ_{2}(\vz_T)$. This often improves empirical performance but does not alter the expressivity in Theorems~\ref{thm:lip}–\ref{thm:lip-2}.
\begin{corollary}
\label{coro:imp-pq}
Under Assumption~\ref{asmp:f}, for any $\cF$ there exists a {regular} implicit operator $\cG$ and globally Lipschitz maps $\cQ_1,\cQ_2$ such that $\cQ_2\left(\mathrm{Fix}\big(\cG(\cdot,\cQ_1(\vx))\big)\right)=\cF(\vx)$ for all $\vx\in\sX$. Conversely, for any {regular} implicit operator $\cG$ any globally Lipschitz $\cQ_1,\cQ_2$, the fixed point $\vz_\ast$ defined by \eqref{eq:imp-pq} exists uniquely and the induced map $\vx \mapsto \vy_\ast$ must be locally Lipschitz on $\sX$.
\end{corollary}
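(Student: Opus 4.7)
The plan is to reduce both claims to the vanilla setting already covered by Theorems~\ref{thm:lip} and \ref{thm:lip-2}, by absorbing the encoder $\cQ_1$ into a composed operator and handling the decoder $\cQ_2$ via postcomposition. In both directions the key point is that global Lipschitz continuity of $\cQ_1$ and $\cQ_2$ preserves the ``regular'' structure of Definition~\ref{def:regular-g} and the local Lipschitz continuity of the induced fixed-point map on $\sX$.

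For sufficiency, the simplest choice works: take $\cQ_1 := \Id_\sX$ and $\cQ_2 := \Id_{\sR^n}$, both trivially globally $1$-Lipschitz. Theorem~\ref{thm:lip} then furnishes a regular $\cG:\sR^n\times\sX\to\sR^n$ whose fixed-point map equals $\cF$, and $\cQ_2\!\left(\textup{Fix}(\cG(\cdot,\cQ_1(\vx)))\right) = \textup{Fix}(\cG(\cdot,\vx)) = \cF(\vx)$ follows by direct substitution. No further compatibility check between domains is required, since $\cQ_1$ is just the identity here.

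For necessity, given a regular $\cG$ and globally Lipschitz $\cQ_1,\cQ_2$, I would introduce the composed operator $\tilde{\cG}(\vz,\vx) := \cG(\vz,\cQ_1(\vx))$ on $\sR^n\times\sX$ and verify that $\tilde\cG$ is itself regular. Fixing $\vz$, the chain rule for Lipschitz moduli yields $\textup{Lip}(\tilde\cG(\vz,\cdot)) \le \textup{Lip}(\cG(\vz,\cdot))\cdot\textup{Lip}(\cQ_1)$, which still grows linearly in $\|\vz\|$ because $\textup{Lip}(\cQ_1)$ is a fixed scalar independent of $\vz$. Fixing $\vx$, the map $\vz\mapsto \tilde\cG(\vz,\vx)$ is $\mu(\cQ_1(\vx))$-contractive, and $\vx\mapsto \mu(\cQ_1(\vx))$ is continuous with values in $(0,1)$ as a composition of continuous maps. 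Theorem~\ref{thm:lip-2} then yields a unique fixed point $\vz_\ast(\vx)$ depending locally Lipschitz on $\vx$, and postcomposing with the globally Lipschitz $\cQ_2$ preserves local Lipschitz continuity, giving the claim for $\vy_\ast(\vx) = \cQ_2(\vz_\ast(\vx))$.

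The main obstacle is not deep but does require careful bookkeeping against Definition~\ref{def:regular-g}. The one substantive point is that the linear-growth condition on the $\vx$-Lipschitz constant of $\cG(\vz,\cdot)$ survives composition with $\cQ_1$ only because $\textup{Lip}(\cQ_1)$ is a constant; any $\vz$-dependence in the Lipschitz bound on $\cQ_1$ would break the argument. A minor auxiliary check is that $\cQ_1(\sX)$ lies inside a bounded set on which $\cG$ is regular, which follows automatically from the boundedness of $\sX$ required in Definition~\ref{def:regular-g} together with the global Lipschitz continuity of $\cQ_1$.
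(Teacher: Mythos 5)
Your proof is correct and follows essentially the same reduction as the paper's: sufficiency by taking $\cQ_1,\cQ_2$ to be identities, and necessity by absorbing the encoder into the composed operator $\tilde\cG(\vz,\vx)=\cG(\vz,\cQ_1(\vx))$, verifying regularity of $\tilde\cG$, invoking Theorem~\ref{thm:lip-2}, and then postcomposing with $\cQ_2$. The only difference is that you spell out the verification that $\tilde\cG$ remains regular (including the observation that linear growth in $\|\vz\|$ survives because $\textup{Lip}(\cQ_1)$ is a $\vz$-independent constant), which the paper asserts without detail.
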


\begin{proof}
The claim follows directly from Theorems~\ref{thm:lip}–\ref{thm:lip-2}.

\emph{Sufficiency.} Given any locally Lipschitz target $\cF$ on $\sX$, Theorem~\ref{thm:lip} ensures the existence of a regular $\cG$ whose fixed-point map equals $\cF$. Taking $\cQ_1,\cQ_2$ as both identity maps recovers the sufficiency statement with globally Lipschitz $\cQ_1,\cQ_2$.

\emph{Necessity.} Suppose $\cG$ is regular and $\cQ_1,\cQ_2$ are globally Lipschitz. Then the composite update $\cG(\vz,\cQ_1(\vx))$ is still regular in $\vz$ and $\vx$. By Theorem~\ref{thm:lip-2}, for every $\vx\in\sX$, there is a unique fixed point $\vz_\ast(\vx)$ and the map $\vx \mapsto \vz_\ast(\vx)$ is locally Lipschitz on $\sX$. Finally, applying the globally Lipschitz readout $\cQ_2$ preserves local Lipschitz continuity, so $\vx \mapsto \vy_\ast$ is locally Lipschitz as claimed.
The proof is finished.
\end{proof}

\section{Proofs of Theorems for Inverse Problems}
\label{sec:app-inv}

This section proves that the target solution mappings, $\cF_{1a}$ and $\cF_{1b}$, are single-valued and locally Lipschitz on their domain, as stated in Theorems \ref{thm:inverse-lip-1a} and \ref{thm:inverse-lip-1b}. Before the proofs, we first provide some definitions that used in Assumption \ref{assume:M}.

Given a close subset $\sM \subset \sR^n$, its \textit{reach} $\tau$ is defined in \cite{federer1959curvature}:
\begin{align*}
\tau := \sup \{  r>0: & \forall \vy \in \sR^n \textup{ with dist}(\vy,\sM)<r,  \\
& \textup{there exists a unique }\vz \in \sM \textup{ such that }\|\vy - \vz\| = \textup{dist}(\vy,\sM) \}.
\end{align*}
A set with positive reach is also called a ``prox-regular" set in the literature \cite{poliquin2000local}.

The bi-Lipschitz condition refers to: for some $0 < \mu \leq L < +\infty$, it holds that
\begin{equation}
\label{eq:JL-condition}
\mu \|\vy_1-\vy_2\| \leq \|\mA \vy_1 - \mA \vy_2\| \leq L \|\vy_1-\vy_2\| \quad \forall \vy_1,\vy_2 \in \sM.
\end{equation}
According to the definition, it holds that $0 < \mu \leq L \leq \sigma_{\textup{max}} < +\infty$. 
This condition ensures $\mA$ can be viewed as an injective mapping when restricted to $\sM$, which is important for the recovery guarantee.

\textit{Remark for Assumption \ref{assume:M}.} The assumption that data (particularly images) lies on a smooth manifold has a long and influential history \cite{roweis2000nonlinear,donoho2005image}, and it is still widely used in recent literature. The compactness of the data manifold can be achieved by standard techniques like normalization. In addition, reach is an important concept for manifold to ensure the uniqueness of its projection \cite{federer1959curvature,aamari2019estimating}. The overall assumptions on manifolds, smoothness, compactness, and positive reach, is typically used in recent literature regarding image and signal processing \cite{tang2024adaptivity,potaptchik2024linear,azangulov2024convergence}.
The on-manifold bi-Lipschitz condition does \emph{not} require $\mA$ to be globally invertible; it merely rules out ill-posedness \emph{restricted to} $\sM$.
This is closely related to Johnson–Lindenstrauss (JL)–type embeddings in compressive sensing: e.g., \cite{baraniuk2009random} shows that random matrices are bi-Lipschitz on low-dimensional manifolds with high probability, and JL-style conditions are widely analyzed and used \cite{candes2006near,clarkson2008tighter,wakin2010manifold,iwen2013approximation,hegde2012signal}.

\begin{proof}[Proof of Theorem \ref{thm:inverse-lip-1a}] 
For simplicity, we first denote the objective functions in \eqref{eq:mapping-1a} as $F_{1a}(\vy)$:
\[ 
F_{1a}(\vy):= \frac{1}{2}\|\vx - \mA \vy\|^2 + \frac{\alpha}{2} \textup{dist}^2(\vy,\sM) 
\]
Then we introduce some definitions that will be useful in our proof:
\[ \sU_r(\sM) := \{\vy \in \sR^n: \textup{dist}(\vy,\sM) < r\}, \quad \overline{\sU}_r(\sM) := \{\vy \in \sR^n: \textup{dist}(\vy,\sM) \leq r\} \]
Here, $\sU_r(\sM)$ is an open tubular neighborhood of the manifold $\sM$ and $\overline{\sU}_r(\sM)$ is its closure. As $r = \tau$, the open set $\sU_r(\sM)$ is named as the \textit{reach tube} of $\sM$, denoted as $\sU_{\tau}(\sM)$. As introduced in \cite{federer1959curvature}, within the reach tube, some nice properties of the distance function and projection mapping can be utilized. For any $\vy \in \sU_{\tau}(\sM)$ or any $\vy \in \overline{\sU}_r(\sM)$ with $r < \tau$, the projection mapping 
\[\vp(\vy):=\argmin_{\vz \in \sM} \|\vz-\vy\|\]
is single valued and well defined, and $\textup{dist}(\vy,\sM)=\|\vy - \vp(\vy)\|$. 

\textbf{Step 1: Existence of minimizers of $F_{1a}$.} As $\vx \in \sX$, there must be an underlying $\vy_\ast \in \sM$ (hence $\vy_\ast \in \overline{\sU}_r(\sM)$) and $\vn$ such that $\|\vx - \mA\vy_\ast\| = \|\vn\|$. Therefore, it holds that
\[F_{1a}(\vy_\ast) = \frac{1}{2}\|\vx - \mA \vy_\ast\|^2 + \frac{\alpha}{2} \textup{dist}^2(\vy_\ast,\sM) = \frac{1}{2} \|\vn\|^2 + 0 = \frac{1}{2} \|\vn\|^2 \]
In the other hand, for any point outside the tube: $\vy \not\in \overline{\sU}_r(\sM)$, the objective value is lower bounded by:
\[ F_{1a}(\vy) \geq 0 + \frac{\alpha}{2} \textup{dist}^2(\vy,\sM) > \frac{\alpha}{2} r^2 \]
As long as we have large enough $\alpha$:
\begin{equation}
\label{eq:proof-alpha}
\alpha \geq \frac{\|\vn\|^2}{r^2},
\end{equation}
we can ensure $F_{1a}(\vy)>F_{1a}(\vy_\ast)$ for all $\vy \not\in \overline{\sU}_r(\sM)$, which implies $\inf_{\vy\in\sR^n}F_{1a}(\vy)=\inf_{\vy \in \overline{\sU}_r(\sM)}F_{1a}(\vy)$. As $\sM$ is compact, $\overline{\sU}_r(\sM)$ must be compact as well. Consequently, the infimum of $F$ is attainable, which concludes the existence of the minimizer of $F_{1a}$, denoted by $\hat{\vy}$, and $\hat{\vy} \in \overline{\sU}_r(\sM)$. Finally, we have the conclusion: {It holds for all $r > 0$ that, condition \eqref{eq:proof-alpha} ensures the existence of $\hat{\vy}$ and $\hat{\vy} \in \overline{\sU}_r(\sM)$.}

\textbf{Step 2: Bound of minimizers of $F_{1a}$.} For any $\vy \in \sU_{\tau}(\sM)$, the projection $\vp(\vy)$ is uniquely defined, hence we have
\[
\begin{aligned}
\Big\|\mA\vy - \vx\Big\| 
= \Big\|\mA\vy - \mA\vy_\ast - \vn\Big\| 
= & \Big\|\mA\vy - \mA \vp(\vy) + \mA \vp(\vy) - \mA\vy_\ast - \vn\Big\|  \\
\geq & \Big\|\mA \vp(\vy) - \mA\vy_\ast \Big\| - \Big\|\mA\vy - \mA \vp(\vy) \Big\| - \|\vn\| \\
\geq & \mu \|\vp(\vy) - \vy_\ast \| - \sigma_{\textup{max}} \|\vy - \vp(\vy) \| - \|\vn\|
\end{aligned}
\]
According to the conclusion in Step 1, as long as 
\begin{equation}
\label{eq:proof-alpha-2}
\alpha \geq \frac{\|\vn\|^2}{r^2} > \frac{\|\vn\|^2}{\tau^2},
\end{equation}
it holds that the minimizer $\hat{\vy}$ exists and $\hat{\vy} \in \overline{\sU}_r(\sM)$ for some $r < \tau$ and hence $\hat{\vy} \in \sU_\tau(\sM)$, which allows us to use the above inequalities at the beginning of Step 2. Now we aim to establish an upper bound for $\|\vp(\hat{\vy}) - \vy_\ast \|$ by contradiction. Suppose 
\[ \mu \|\vp(\hat{\vy}) - \vy_\ast \| > \sigma_{\textup{max}} \|\hat{\vy} - \vp(\hat{\vy}) \| + 2 \|\vn\|\]
we will obtain
\[\|\mA\hat{\vy} - \vx\| \geq \mu \|\vp(\hat{\vy}) - \vy_\ast \| - \sigma_{\textup{max}} \|\hat{\vy} - \vp(\hat{\vy}) \| - \|\vn\| > \|\vn\|,\]
which implies
\[ F_{1a}(\hat{\vy}) = \frac{1}{2} \|\mA\hat{\vy} - \vx\|^2 + \frac{\alpha}{2} \textup{dist}^2(\hat{\vy},\sM) > \frac{1}{2} \|\vn\|^2 + 0 = F_{1a}(\vy_\ast). \]
This contradicts with the definition of $\hat{\vy}$: the minimizer of function $F_{1a}$. Therefore, we obtain: 
\[
\mu \|\vp(\hat{\vy}) - \vy_\ast \| \leq \sigma_{\textup{max}} \|\hat{\vy} - \vp(\hat{\vy}) \| + 2 \|\vn\| \leq \sigma_{\textup{max}} r + 2 \|\vn\|
\]
which is equivalent to 
\[
\|\vp(\hat{\vy}) - \vy_\ast \| \leq \frac{\sigma_{\textup{max}}}{\mu} r + \frac{2}{\mu} \|\vn\| 
\]
and implies that
\begin{equation}
\label{eq:proof-error-bound}
\|\hat{\vy} - \vy_\ast\| \leq \|\hat{\vy} - \vp(\hat{\vy}) \| + \|\vp(\hat{\vy}) - \vy_\ast \| \leq \left(1 + \frac{\sigma_{\textup{max}}}{\mu} \right) r + \frac{2}{\mu} \|\vn\|
\end{equation}
holds for all $\hat{\vy}$ that minimizes $F_{1a}(\vy)$.

\textbf{Step 3: Positive definiteness of the Hessian of $F_{1a}$.} 
To prove the uniqueness of the solution, we will establish the strict convexity of the objective function $F_{1a}(\vy)$ within a neighborhood around any point of $\sM$. To achieve this, we establish the positive definiteness of the Hessian of $F_{1a}(\vy)$ in this step.

For any $\vy \in \sU_{\tau}(\sM)$, the projection mapping is single valued and the objective function can be written as \[F_{1a}(\vy) = \frac{1}{2} \underbrace{\|\vx - \mA \vy\|^2}_{f(\vy)} + \frac{\alpha}{2}\underbrace{\|\vy - \vp(\vy)\|^2}_{g(\vy)} \]
The smoothness of $\sM$ implies the smoothness of $g$ and of the projection mapping, and hence we can take first and second orders of derivatives on $g$ \cite[Theorem 2]{leobacher2021existence}. Thanks to \cite[Theorem 4.8]{federer1959curvature}, the gradient and Hessian of $g$ are given by
\[ \nabla g(\vy) = 2 \left( \vy - \vp(\vy) \right), \quad \nabla^2 g(\vy) = 2 \left( \mI - D \vp(\vy) \right), \]
where $D \vp$ denotes the Jacobian of the projection mapping. The overall Hessian of $F_{1a}$ is provided by
\begin{equation}
\label{eq:proof-hessian}
\nabla^2 F_{1a}(\vy) = \mA^\top\mA + \alpha \left( \mI - D \vp(\vy) \right).
\end{equation}
To further present the properties of the above Hessian, we introduce a space decomposition according to $\vp(\vy)$:
\[\sR^n = \sT_{\vp(\vy)}(\sM) \oplus \sN_{\vp(\vy)}(\sM)\] 
where $\sT_{\vp(\vy)}(\sM)$ denotes the tangent space of $\sM$ at the point $\vp(\vy) \in \sM$, and $\sN_{\vp(\vy)}(\sM)$ represents the normal space. According to \cite[Theorem C and Definition 7]{leobacher2021existence}, the matrix $D \vp(\vy)$ is actually restricted to the tangent space. In other words, for any decomposition $\vh$ with $\vh = \vh_{\mathrm{T}} + \vh_{\mathrm{N}}$ where $\vh_{\mathrm{T}} \in \sT_{\vp(\vy)}(\sM)$ and $\vh_{\mathrm{N}} \in \sN_{\vp(\vy)}(\sM)$, it holds that
\begin{equation}
\label{eq:proof-Dp-block-diag}
D \vp(\vy) \vh_{\mathrm{N}} = \vzero, \quad D \vp(\vy) \vh_{\mathrm{T}} \in \sT_{\vp(\vy)}(\sM).
\end{equation}
In addition, function $g(\vy)$ is $(\frac{s}{\tau - s})$-weakly convex where $\tau$ is the reach of $\sM$ and $s=\textup{dist}(\vy,\sM)$ \cite[Section 5]{nacry2022distance}, and hence the spectrum of $\nabla^2 g$ can be lower bounded by
\begin{equation}
\label{eq:proof-weak-convex}
\langle \vh_{\mathrm{T}}, \nabla^2 g(\vy) \vh_{\mathrm{T}} \rangle \geq - \frac{2 s}{\tau - s} \|\vh_{\mathrm{T}} \|^2,
\end{equation}
Now, let's turn to the first term in the Hessian: $\mA^\top\mA$. It can be shown using the JL condition \eqref{eq:JL-condition} that, the spectrum of $\mA^\top\mA$ restricted to the tangent space can also be lower bounded. In particular, we pick an arbitrary tangent vector $\vh_{\mathrm{T}} \in \sT_{\vp(\vy)}(\sM)$. According to the definition of tangent space, there must be a curve $\gamma: (-\delta, \delta) \to \sM$ with $\delta>0$, $\gamma(0) = \vp(\vy)$, and $\gamma^\prime(0) = \vh_{\mathrm{T}}$. For any $0 \leq t < \delta$, $\gamma(t) \in \sM$. By applying condition \eqref{eq:JL-condition} with the pair $(\gamma(t),\gamma(0))$ and divide by $t^2$, we have
\[ \mu^2 \frac{\|\gamma(t)-\gamma(0)\|^2}{t^2} \leq \frac{\|\mA \gamma(t) - \mA\gamma(0)\|^2}{t^2} \leq L^2 \frac{\|\gamma(t)-\gamma(0)\|^2}{t^2} \]
By differentiability and the continuity of the operator $\mA$, it holds that 
\[\lim_{t \to 0} \frac{\gamma(t)-\gamma(0)}{t} = \vh_{\mathrm{T}} , \quad \lim_{t \to 0} \frac{\mA\gamma(t)-\mA\gamma(0)}{t} = \mA\vh_{\mathrm{T}}\]
which implies
\begin{equation}
\label{eq:proof-A-local-bound}
\mu^2 \|\vh_{\mathrm{T}} \|^2 \leq \| \mA\vh_{\mathrm{T}} \|^2 \leq L^2 \| \vh_{\mathrm{T}} \|^2.
\end{equation}
Combining \eqref{eq:proof-hessian}, \eqref{eq:proof-Dp-block-diag}, \eqref{eq:proof-weak-convex}, and \eqref{eq:proof-A-local-bound}, we have
\begin{align*}
& \langle \vh, \nabla^2 F_{1a}(\vy) \vh \rangle \\
= & \underbrace{\langle \vh_{\mathrm{T}}, \mA^\top\mA \vh_{\mathrm{T}} \rangle}_{\geq \mu^2 \|\vh_{\mathrm{T}} \|^2}  
+ 2 \langle \vh_{\mathrm{T}}, \mA^\top\mA \vh_{\mathrm{N}} \rangle 
+ \underbrace{\langle \vh_{\mathrm{N}}, \mA^\top\mA \vh_{\mathrm{N}} \rangle}_{\geq 0} \\
& + \alpha \underbrace{\langle \vh_{\mathrm{T}}, \left( \mI - D \vp(\vy) \right) \vh_{\mathrm{T}} \rangle}_{\geq - \frac{s}{\tau - s} \|\vh_{\mathrm{T}} \|^2 }  
+ 2 \alpha \underbrace{ \langle \vh_{\mathrm{T}}, \left( \mI - D \vp(\vy) \right) \vh_{\mathrm{N}} \rangle}_{=\langle \vh_{\mathrm{T}}, \vh_{\mathrm{N}} \rangle = 0 } 
+ \alpha \underbrace{ \langle \vh_{\mathrm{N}}, \left( \mI - D \vp(\vy) \right) \vh_{\mathrm{N}} \rangle}_{=\|\vh_{\mathrm{N}} \|^2} \\
\geq & \left(\mu^2 - \alpha \frac{s}{\tau - s} \right) \|\vh_{\mathrm{T}} \|^2 + \alpha \|\vh_{\mathrm{N}} \|^2 - 2 \|\mA\vh_{\mathrm{T}}\| \cdot \|\mA\vh_{\mathrm{N}}\| \\
\geq & \left(\mu^2 - \alpha \frac{s}{\tau - s} \right) \|\vh_{\mathrm{T}} \|^2 + \alpha \|\vh_{\mathrm{N}} \|^2 - 2  L \|\vh_{\mathrm{T}}\| \cdot \sigma_{\textup{max}} \|\vh_{\mathrm{N}}\| \\
= & \begin{bmatrix}
     \|\vh_{\mathrm{T}}\| &  \|\vh_{\mathrm{N}}\|
\end{bmatrix}
\begin{bmatrix}
     \mu^2 - \alpha \frac{s}{\tau - s} & - \sigma_{\textup{max}} L \\ 
     - \sigma_{\textup{max}} L & \alpha
\end{bmatrix}
\begin{bmatrix}
     \|\vh_{\mathrm{T}}\| \\  \|\vh_{\mathrm{N}}\|
\end{bmatrix}
\end{align*}
Therefore, to ensure $\langle \vh, \nabla^2 F_{1a}(\vy) \vh \rangle > 0$ for any $\vh \neq \vzero$, it's enough to ensure the $2\times 2$ matrix to be positive definite:
\begin{equation}
\label{eq:proof-param-strict-convex}
\mu^2 - \alpha \frac{s}{\tau - s} > 0 \quad \textup{and} \quad \alpha \left(\mu^2 - \alpha \frac{s}{\tau - s}\right) - \sigma^2_{\textup{max}}L^2 > 0. 
\end{equation}
In other words, \eqref{eq:proof-param-strict-convex} will guarantee the positive definiteness of $\nabla^2 F_{1a}(\vy)$ for all $\vy \in \overline{\sU}_s(\sM)$ and any $s < \tau$.

\textbf{Step 4: Uniqueness of minimizers of $F_{1a}$.} In this step, we will combine the results from Steps 2 and 3. Then we are able to prove that the objective function $F_{1a}(\vy)$ is strictly convex in a neighborhood of its minimizers, which implies the uniqueness of the minimizer. To achieve this, it's enough to ensure
\begin{equation}
\label{eq:proof-error-bound-2}
\|\hat{\vy} - \vy_\ast\| \leq s
\end{equation}
for all $\hat{\vy} \in \argmin_{\vy} F_{1a}(\vy)$, where $s$ satisfies \eqref{eq:proof-param-strict-convex}. With this condition \eqref{eq:proof-error-bound-2}, it holds that 
\[ \hat{\vy} \in \sB(\vy_\ast, s) \subset \overline{\sU}_s(\sM). \]
Along with the fact that $\sB(\vy_\ast, s)$ is convex and that $\nabla^2 F_{1a}(\vy)$ is positive definite for all $\vy \in \overline{\sU}_s(\sM)$, $F_{1a}$ is strictly convex within $\sB(\vy_\ast, s)$ \cite[Section 3.1.4]{boyd2004convex}. As all minimizers of the strict convex function belong to this convex set, $\sB(\vy_\ast, s)$, the minimizer $\hat{\vy}$ must be unique. 

Now the question is: How to guarantee \eqref{eq:proof-error-bound-2}? According to \eqref{eq:proof-error-bound}, Condition \eqref{eq:proof-alpha-2} along with 
\begin{equation}
\label{eq:proof-param}
\left(1 + \frac{\sigma_{\textup{max}}}{\mu} \right) r + \frac{2}{\mu} \|\vn\| \leq s
\end{equation}
can guarantee \eqref{eq:proof-error-bound-2}. Finally, it's enough to choose $\alpha$, $s$, and $r$ such that \eqref{eq:proof-alpha-2}, \eqref{eq:proof-param-strict-convex}, and \eqref{eq:proof-param} are satisfied together. In particular, we choose
\[ s = \frac{4}{\mu} \|\vn\|, \quad r = \frac{1}{\sigma_{\textup{max}}} \|\vn\| , \quad \alpha = \frac{2\sigma^2_{\textup{max}}L^2}{\mu^2}\]
where $\alpha$ merely depends on $\mA$ and $\sM$ but is independent of $\vx$. Such a parameter choice implies \eqref{eq:proof-param}:
\[  \left(1 + \frac{\sigma_{\textup{max}}}{\mu} \right) r + \frac{2}{\mu} \|\vn\| \leq 2 \frac{\sigma_{\textup{max}}}{\mu} r + \frac{2}{\mu} \|\vn\| = \frac{2}{\mu} \|\vn\| + \frac{2}{\mu} \|\vn\| = s. \]
As $\|\vn\| < \frac{1}{20}\frac{\mu^5}{\sigma^2_{\textup{max}}L^2} \tau$, it holds that
\[ s = \frac{4}{\mu} \|\vn\| < \frac{\mu^4}{5 \sigma^2_{\textup{max}}L^2} \tau 
~~ \implies ~~
\frac{s}{\tau-s}  
< \frac{\frac{\mu^4}{5 \sigma^2_{\textup{max}}L^2} \tau}{\tau - \frac{\mu^4}{5 \sigma^2_{\textup{max}}L^2} \tau} 
\leq \frac{\frac{\mu^4}{5 \sigma^2_{\textup{max}}L^2} \tau}{\tau - \frac{1}{5} \tau} = \frac{\mu^4}{4 \sigma^2_{\textup{max}}L^2}
\]
and therefore \eqref{eq:proof-param-strict-convex} is satisfied:
\[ 
\mu^2 - \alpha \frac{s}{\tau - s} > \mu^2 - \frac{2\sigma^2_{\textup{max}}L^2}{\mu^2} \frac{\mu^4}{4 \sigma^2_{\textup{max}}L^2} = \frac{1}{2}\mu^2 > 0 \]
and
\[\alpha \left(\mu^2 - \alpha \frac{s}{\tau - s}\right) > \frac{2\sigma^2_{\textup{max}}L^2}{\mu^2} \cdot \frac{1}{2}\mu^2 = \sigma^2_{\textup{max}}L^2.
\]
Finally, by choosing $\alpha$ as before, condition \eqref{eq:proof-alpha-2} is satisfied:
\[ \alpha = 2\sigma^2_{\textup{max}} \cdot \frac{L^2}{\mu^2} \geq 2\sigma^2_{\textup{max}} = \frac{2 \|\vn\|^2}{r^2},
\quad 
r = \frac{1}{\sigma_{\textup{max}}} \|\vn\| < \frac{1}{\sigma_{\textup{max}}} \cdot \frac{1}{20}\frac{\mu^5}{\sigma^2_{\textup{max}}L^2} \tau < \tau,
\]
which finishes the proof of the uniqueness of minimizers of $F_{1a}$.

\textbf{Step 5: Local Lipschitz continuity of $\cF_{1a}$.} Previous results from Steps 1-4 indicate that, for any $\vx \in \sX$, there is a unique $\hat{\vy}(\vx)$ that minimizes $F_{1a}$, but the continuity of $\hat{\vy}$ w.r.t. $\vx$ has not been established. In this step, we will show this continuity via the implicit function theorem.
Firstly, as $\hat{\vy}$ minimizes $F_{1a}$, by first-order optimality conditions for smooth minimization, it holds that
\[ \nabla F_{1a}(\hat{\vy}) = \underbrace{\mA^\top(\mA\hat{\vy}-\vx) + \alpha ( \hat{\vy} - \vp(\hat{\vy}) )}_{=:\cH(\vx,\hat{\vy})}  = \vzero \]
Now, let's pick a point $\vx_0$ from $\sX$. Previous results from Steps 1-4 indicate that, operator $\cH(\vx,\vy)$ is continuously differentiable within a neighborhood of $(\vx_0,\hat{\vy}(\vx_0))$, and its Jacobian matrix w.r.t. $\vy$
\[D_{\vy}\cH(\vx,\vy) = \nabla^2 F_{1a}(\vy)\]
is positive definite within that neighborhood of $(\vx_0,\hat{\vy}(\vx_0))$. Therefore, we are able to apply the implicit function theorem \cite[Theorem 3.9]{Folland2023-AdvCalcPDF} and conclude that $\hat{\vy}(\vx)$ is Lipschitz continuous within a neighborhood of $\vx_0$. This argument applies for any points $\vx_0$ in $\sX$. Therefore, $\hat{\vy}=\cF_{1a}(\vx)$ is locally Lipschitz continuous on $\sX$.
\end{proof}

The proof line of Theorem \ref{thm:inverse-lip-1b} largely follows the proof of Theorem \ref{thm:inverse-lip-1a}. Here we will highlight the difference of proofs between the two theorems, so that Theorem \ref{thm:inverse-lip-1b} will be rigorously proved without too much redundancy.

\begin{proof}[Proof of Theorem \ref{thm:inverse-lip-1b}] 
For simplicity, we denote the objective function in \eqref{eq:mapping-1b} as $F_{1b}(\vy,\vz)$:
\[ 
F_{1b}(\vy,\vz) := \frac{1}{2}\|\vx - \mA \vy\|^2 + \frac{\alpha}{2} \textup{dist}^2(\vz,\sM) + \frac{\beta}{2} \|\vz-\vy\|^2,
\]
and we will study its properties analogously to $F_{1a}$.

\textbf{Step 1: Existence of minimizers of $F_{1b}$.} For any $r>0$, as 
\[ \alpha \geq \frac{\|\vn\|^2}{r^2}, \quad \beta \geq \frac{\|\vn\|^2}{r^2}, \]
it holds that
\begin{equation}
\label{eq:proof-thm-mapping-1b-step1}
\inf_{\vy,\vz}F_{1b}(\vy,\vz) = \inf_{ (\vy,\vz): ~ \textup{dist}(\vz,\sM)\leq r \textup{ and } \|\vz-\vy\|\leq r }F_{1b}(\vy,\vz).
\end{equation}
This can be proved by contradiction: (I) Suppose $F_{1b}(\hat{\vy},\hat{\vz})$ is lower than the right-hand-side of \eqref{eq:proof-thm-mapping-1b-step1} and $\textup{dist}(\hat\vz,\sM) > r$, we have
\[ F_{1b}(\hat{\vy},\hat{\vz}) \geq 0 + \frac{\|\vn\|^2}{2 r^2} \textup{dist}^2(\hat{\vz},\sM) + 0 > \frac{1}{2} \|\vn\|^2 = F_{1b}(\vy_\ast,\vy_\ast)\]
which contradicts with the hypothesis regarding $(\hat{\vy},\hat{\vz})$. (II)  Suppose $F_{1b}(\hat{\vy},\hat{\vz})$ is lower than the right-hand-side of \eqref{eq:proof-thm-mapping-1b-step1} and $\|\hat{\vz}-\hat{\vy}\| > r$, we have
\[ F_{1b}(\hat{\vy},\hat{\vz}) \geq 0 + 0 + \frac{\|\vn\|^2}{2 r^2}\|\hat{\vz}-\hat{\vy}\|^2  > \frac{1}{2}\|\vn\|^2 = F_{1b}(\vy_\ast,\vy_\ast)\]
which also derives a contradiction. Arguments in (I) and (II) together prove \eqref{eq:proof-thm-mapping-1b-step1}. Similar to the proof of Theorem \ref{thm:inverse-lip-1a}, \eqref{eq:proof-thm-mapping-1b-step1} implies the existence of minimizers of $F_{1b}$ (i.e., minimizers are attainable.)

\textbf{Step 2: Bound of minimizers of $F_{1b}$.} To extend the proof regarding $F_{1a}$ to $F_{1b}$, we consider the following inequality that holds for all $\vy,\vz \in \sU_{\tau}(\sM)$
\[ \|\vy-\vp(\vy)\| \leq \|\vy-\vp(\vz)\| \leq \|\vy-\vz\| + \|\vz-\vp(\vz)\| = \|\vy-\vz\| + \textup{dist}(\vz,\sM) \leq 2r. \]
Therefore, we need $2r < \tau$ and
\begin{equation}
\label{eq:proof-alpha-3}
  \alpha \geq \frac{\|\vn\|^2}{r^2} > \frac{4\|\vn\|^2}{\tau^2}, \quad \beta \geq \frac{\|\vn\|^2}{r^2}  > \frac{4\|\vn\|^2}{\tau^2} 
\end{equation}
to ensure $\hat{\vy},\hat{\vz} \in \sU_{\tau}(\sM)$. Following the same argument as the proof of Theorem \ref{thm:inverse-lip-1a}, the above condition \eqref{eq:proof-alpha-3} implies
\[
\|\vp(\hat{\vy}) - \vy_\ast \| \leq \frac{\sigma_{\textup{max}}}{\mu} (2r) + \frac{2}{\mu} \|\vn\| 
\]
and hence
\begin{equation}
\label{eq:proof-error-bound-1b}
\|\hat{\vy} - \vy_\ast\| \leq \|\hat{\vy} - \vp(\hat{\vy}) \| + \|\vp(\hat{\vy}) - \vy_\ast \| \leq 2 \left(1 + \frac{\sigma_{\textup{max}}}{\mu} \right) r + \frac{2}{\mu} \|\vn\|
\end{equation}
and
\begin{equation}
\label{eq:proof-error-bound-1b-z}
\|\hat{\vz} - \vy_\ast\| \leq \|\hat{\vz} - \hat{\vy} \| + \|\hat{\vy} - \vy_\ast \| \leq \left(3 + 2 \frac{\sigma_{\textup{max}}}{\mu} \right) r + \frac{2}{\mu} \|\vn\|
\end{equation}
holds for all $(\hat{\vy},\hat{\vz})$ that minimizes $F_{1b}(\vy,\vz)$.

\textbf{Step 3: Positive definiteness of the Hessian of $F_{1b}$.} Function $F_{1b}(\vy,\vz)$'s Hessian matrix is of size $2n \times 2n$ and can be written as a $2\times 2$ block w.r.t. $\vy$ and $\vz$:
\[ \nabla^2F_{1b}(\vy,\vz) = 
\begin{bmatrix}
\mA^\top \mA & \vzero \\ 
\vzero & \vzero
\end{bmatrix} + 
\alpha \begin{bmatrix}
\vzero & \vzero \\ 
\vzero & \mI - D\vp(\vz)
\end{bmatrix} + 
\beta \begin{bmatrix}
\mI & -\mI \\ 
-\mI & \mI
\end{bmatrix}
\]
For any $\vh = [\vu^\top~~ \vv^\top]^\top \in \sR^{2n}$, the quadratic form $\langle \vh, \nabla^2F_{1b}(\vy,\vz) \vh\rangle$ can be calculated through:
\[
\langle \vh, \nabla^2F_{1b}(\vy,\vz) \vh\rangle
= \vu^\top\mA^\top \mA\vu + \alpha \vv^\top(\mI - D\vp(\vz))\vv + \beta \|\vu-\vv\|^2
\]
Decompose $\vu = \vu_{\mathrm{T}} + \vu_{\mathrm{N}}$ and $\vv = \vv_{\mathrm{T}} + \vv_{\mathrm{N}}$ in $\sT_{\vp(\vz)}(\sM) \oplus \sN_{\vp(\vz)}(\sM)$. Using the same argument as the proof of Theorem \ref{thm:inverse-lip-1a}, we have
\begin{align*}
& \langle \vh, \nabla^2F_{1b}(\vy,\vz) \vh\rangle \\
\geq & \bigg( \mu^2 \|\vu_{\mathrm{T}} \|^2 - 2 \sigma_{\textup{max}} L\|\vu_{\mathrm{T}}\|\|\vu_{\mathrm{N}}\| \bigg) 
+ \alpha \left(- \frac{s}{\tau - s} \|\vv_{\mathrm{T}} \|^2 + \|\vv_{\mathrm{N}} \|^2  \right)
+ \beta \|\vu-\vv\|^2
\end{align*}
which implies 
\begin{align*}
& \langle \vh, \nabla^2F_{1b}(\vy,\vz) \vh\rangle  \\
\geq & \bigg( \mu^2 \|\vu_{\mathrm{T}} \|^2 - 2 \sigma_{\textup{max}} L\|\vu_{\mathrm{T}}\|\|\vu_{\mathrm{N}}\| \bigg) \\
& \quad + \alpha \left(- \frac{s}{\tau - s} \|\vv_{\mathrm{T}} \|^2 + \|\vv_{\mathrm{N}} \|^2  \right)
+ \beta \bigg( \|\vu_{\mathrm{T}}-\vv_{\mathrm{T}}\|^2 + \|\vu_{\mathrm{N}}-\vv_{\mathrm{N}}\|^2 \bigg) \\
\geq & \bigg( \mu^2 \|\vu_{\mathrm{T}} \|^2 - 2 \sigma_{\textup{max}} L\|\vu_{\mathrm{T}}\|\|\vu_{\mathrm{N}}\| \bigg) \\
& \quad + \alpha \left(- \frac{s}{\tau - s} \|\vv_{\mathrm{T}} \|^2 + \|\vv_{\mathrm{N}} \|^2  \right)
+ \beta \bigg( ( \|\vu_{\mathrm{T}}\| - \|\vv_{\mathrm{T}}\| )^2 + ( \|\vu_{\mathrm{N}}\| - \| \vv_{\mathrm{N}}\| )^2 \bigg) \\
= & \begin{bmatrix}
\|\vu_{\mathrm{T}}\| & \|\vu_{\mathrm{N}}\| & \|\vv_{\mathrm{T}}\| & \|\vv_{\mathrm{N}}\|
\end{bmatrix}
\underbrace{\begin{bmatrix}
\mu^2 + \beta & - \sigma_{\textup{max}} L & - \beta & \\
- \sigma_{\textup{max}} L & \beta & & -\beta \\
-\beta & & \beta - \alpha \frac{s}{\tau-s} & \\
& - \beta & & \alpha + \beta 
\end{bmatrix}}_{=:\mB}
\begin{bmatrix}
\|\vu_{\mathrm{T}}\| \\ \|\vu_{\mathrm{N}}\| \\ \|\vv_{\mathrm{T}}\| \\ \|\vv_{\mathrm{N}}\|
\end{bmatrix}
\end{align*}
To ensure the positive definiteness of $\nabla^2F_{1b}(\vy,\vz)$, it's enough to ensure $\mB \succ \vzero$. For simplicity, we define
\[
\theta := \alpha \frac{s}{\tau-s}, \quad 
\mB_{1}:= \begin{bmatrix}
\mu^2 + \beta & - \sigma_{\textup{max}} L \\
- \sigma_{\textup{max}} L & \beta 
\end{bmatrix}
\quad \mB_{2}:= \begin{bmatrix}
 - \beta & \\
 & -\beta 
\end{bmatrix}
\quad \mB_{3}:= \begin{bmatrix}
 \beta - \theta & \\
 & \alpha + \beta 
\end{bmatrix}\]
Then $\mB = \begin{bmatrix}
\mB_1 & \mB_2 \\ 
\mB_2^\top & \mB_3
\end{bmatrix}$ is positive definite if and only if $\mB_3$ and its Schur complement $\mS$ are both positive definite:
\[ \mB_3 \succ \vzero ,\quad \mS = \mB_1 - \mB_2 \mB_3^{-1} \mB_2^\top \succ \vzero\] 
As $\mB_2$ and $\mB_3$ are both diagonal, so $\mB_2 \mB_3^{-1} \mB_2^\top$ is straight forward to calculate: $\mB_2 \mB_3^{-1} \mB_2^\top = \textup{diag}\left(\frac{\beta^2}{\beta-\theta}, \frac{\beta^2}{\alpha+\beta}\right)$. Then the Schur complement can be calculated:
\[ \mS =  \begin{bmatrix}
\mu^2 + \beta - \frac{\beta^2}{\beta-\theta} & - \sigma_{\textup{max}} L \\
- \sigma_{\textup{max}} L & \beta - \frac{\beta^2}{\alpha+\beta}
\end{bmatrix} = 
\begin{bmatrix}
\mu^2 - \frac{\beta \theta}{\beta-\theta} & - \sigma_{\textup{max}} L \\
- \sigma_{\textup{max}} L & \frac{\alpha\beta}{\alpha+\beta}
\end{bmatrix}
\]
Note that $\mB_3 \succ \vzero$ if.f $\beta > \theta$. Therefore, $\mB \succ \vzero$ if.f.
\begin{equation}
\label{eq:proof-B-PD}
\beta > \theta, \quad \mu^2 > \frac{\beta \theta}{\beta-\theta}, \quad \left(\mu^2 - \frac{\beta \theta}{\beta-\theta}\right)\frac{\alpha\beta}{\alpha+\beta} > \sigma^2_{\textup{max}} L^2,
\end{equation}
where $\theta = \alpha \frac{s}{\tau-s}$. Finally, we obtain that \eqref{eq:proof-B-PD} ensures $\nabla^2F_{1b}(\vy,\vz)\succ \vzero$ for all $\vy \in \sR^n$ and all $\vz \in \overline{\sU}_s(\sM)$ with $s < \tau$.

\textbf{Step 4: Uniqueness of minimizers of $F_{1b}$.} Comparable to the Step 4 in Theorem \ref{thm:inverse-lip-1a}, we need $\|\hat{\vz}-\vy_\ast\| \leq s$ for all $(\hat{\vy},\hat{\vz}) \in \argmin F_{1b}(\vy,\vz)$. Based on \eqref{eq:proof-error-bound-1b-z}, it's enough to guarantee
\begin{equation}
\label{eq:proof-1b-s-condition}
\left(3 + 2 \frac{\sigma_{\textup{max}}}{\mu} \right) r + \frac{2}{\mu} \|\vn\| \leq s
\end{equation}
Now we choose
\[ s = \frac{4}{\mu} \|\vn\|, \quad r = \frac{2}{5\sigma_{\textup{max}}} \|\vn\|\]
which directly satisfies \eqref{eq:proof-1b-s-condition}. As $\|\vn\| < \frac{1}{76}\frac{\mu^5}{\sigma^2_{\textup{max}}L^2} \tau$, we have
\[ s = \frac{4}{\mu} \|\vn\| < \frac{1}{19} \frac{\mu^4}{\sigma^2_{\textup{max}}L^2} \tau, \quad \frac{s}{\tau-s} < \frac{\frac{1}{19} \frac{\mu^4}{\sigma^2_{\textup{max}}L^2} \tau}{\tau - \frac{1}{19} \frac{\mu^4}{\sigma^2_{\textup{max}}L^2} \tau} \leq \frac{\frac{1}{19} \frac{\mu^4}{\sigma^2_{\textup{max}}L^2} \tau}{\tau - \frac{1}{19}  \tau} = \frac{1}{18} \frac{\mu^4}{\sigma^2_{\textup{max}}L^2}\]
As long as we take
\[\alpha = \frac{9\sigma^2_{\textup{max}}L^2}{\mu^2}, \quad \beta \geq \max\left(\alpha, \frac{3}{2} \mu^2\right)\]
it holds that
\[
\theta = \alpha \frac{s}{\tau-s} < \frac{9\sigma^2_{\textup{max}}L^2}{\mu^2} \frac{1}{18} \frac{\mu^4}{\sigma^2_{\textup{max}}L^2} = \frac{1}{2} \mu^2 
\]
which implies $\beta > 3 \theta$ and hence $\beta > \theta$. Moreover, we can verify the remaining part of \eqref{eq:proof-B-PD}:
\[
\begin{aligned}
\frac{\beta \theta}{\beta-\theta} <&  \frac{\beta \theta}{\beta - \beta/3} = \frac{3}{2}\theta < \frac{3}{4}\mu^2 < \mu^2, \\ 
\left(\mu^2 - \frac{\beta \theta}{\beta-\theta}\right)\frac{\alpha\beta}{\alpha+\beta} >&  \left(\mu^2 - \frac{3}{4}\mu^2\right) \frac{\alpha\beta}{\beta+\beta} = \frac{1}{8} \mu^2 \alpha = \frac{1}{8} \mu^2 \cdot \frac{9\sigma^2_{\textup{max}}L^2}{\mu^2} > \sigma^2_{\textup{max}}L^2.
\end{aligned}
\]
which finishes the proof of \eqref{eq:proof-B-PD}. Finally, it's enough to verify \eqref{eq:proof-alpha-3}:
  \[2r \leq \frac{\|\vn\|}{\sigma_{\textup{max}}} \leq \frac{1}{76}\frac{\mu^5}{\sigma^3_{\textup{max}}L^2} \tau < \tau, \quad \frac{\|\vn\|^2}{r^2} = \frac{25}{4} \sigma^2_{\textup{max}} \leq \alpha \leq \beta,\]
  which finishes Step 4, and concludes the uniqueness of $(\hat{\vy},\hat{\vz})$.

\textbf{Step 5: Local Lipschitz continuity of $\cF_{1b}$.} By largely following Step 5 in the proof of Theorem \ref{thm:inverse-lip-1a} and changing $\nabla^2 F_{1a}(\vy)$ to $\nabla^2 F_{1b}(\vy,\vz)$, one can directly conclude that the mapping $\cF_{1b}$ is locally Lipschitz continuous on $\sX$.
\end{proof}

\subsection{Proximal operator near a manifold}
\label{sec:app-prox}
We collect here the definition and basic properties of the proximal map used in the main text and relate them to the convergence condition proposed in \cite{ryu2019plug}.

\begin{theorem}[Contractivity of the proximal residual near a $\cC^2$ manifold]
\label{thm:prox-residual}
Let $\sM\subset\sR^n$ be a compact $\cC^2$ embedded submanifold with reach $\tau>0$.
For $\sigma>0$ define, for each $\vz\in\Utt$,
\[
\phi_\sigma(\vy,\vz):= \frac{\sigma}{2} \textup{dist}^2(\vy,\sM) + \frac{1}{2}\|\vy - \vz \|^2.
\]
Then $\phi_\sigma$ must yield a unique minimizer, and hence we are able to define
\[
\textup{prox}_{\sigma}(\vz):=\argmin_{\vy} \phi_\sigma(\vy,\vz),
\qquad
\cS_\sigma(\vz):=\textup{prox}_\sigma(\vz)-\vz.
\]
Then $\cS_\sigma$ is contractive within a tubular neighborhood of $\sM$. In particular, it holds that
\begin{equation}
\label{eq:proximal-residual-lip}
\|\cS_\sigma(\vz)-\cS_\sigma(\vz^\prime)\| \leq \frac{\sigma}{1+\sigma} \|\vz - \vz^\prime\|
\end{equation}
for all $\vz,\vz^\prime \in \Urt$ where $r \leq \tau/4$ and $\|\vz - \vz^\prime\|\leq\tau/4$.
\end{theorem}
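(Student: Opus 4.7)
The plan is to first derive a closed-form expression for $\textup{prox}_\sigma(\vz)$, which will realize $\cS_\sigma(\vz)$ as a fixed scalar multiple of the displacement $\vp(\vz)-\vz$, and then obtain the contraction bound by showing that $\vp-\Id$ is $1$-Lipschitz on the segment $[\vz,\vz']$ via a pointwise operator-norm bound on $D\vp$ coming from the reach of $\sM$.

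First I would establish existence and uniqueness of the proximal point geometrically. Coercivity of $\tfrac12\|\vy-\vz\|^2$ gives a global minimizer. The candidate $\vy^\sharp := \vp(\vz)+\tfrac{1}{1+\sigma}\bigl(\vz-\vp(\vz)\bigr)$ lies on the normal segment from $\vp(\vz)$ to $\vz$ inside the reach tube and hence satisfies $\vp(\vy^\sharp)=\vp(\vz)$; a direct calculation gives $\phi_\sigma(\vy^\sharp,\vz)=\tfrac{\sigma}{2(1+\sigma)}\,s(\vz)^2$, where $s(\vz):=\textup{dist}(\vz,\sM)\le r\le \tau/4$. Any minimizer $\vy^{**}$ must undercut this value, which forces $\textup{dist}(\vy^{**},\sM)\le s(\vz)/\sqrt{1+\sigma}<\tau$, placing $\vy^{**}$ in the open reach tube where $\tfrac12\textup{dist}^2(\cdot,\sM)$ is $C^1$ with gradient $\vy-\vp(\vy)$. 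The first-order condition $(1+\sigma)\vy=\vz+\sigma\vp(\vy)$ then forces $\vy$ onto a normal ray through some $\vx_0:=\vp(\vy)\in\sM$ with $\vz-\vx_0\in N_{\vx_0}\sM$, and each such normal-crossing critical point has value $\tfrac{\sigma}{2(1+\sigma)}\|\vz-\vx_0\|^2$, minimized precisely at $\vx_0=\vp(\vz)$. Hence $\vy^{**}=\vy^\sharp$ uniquely, and $\cS_\sigma(\vz)=\vy^\sharp-\vz=\tfrac{\sigma}{1+\sigma}\bigl(\vp(\vz)-\vz\bigr)$.

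With this explicit formula, inequality \eqref{eq:proximal-residual-lip} reduces to showing $\vq:=\vp-\Id$ is $1$-Lipschitz on the segment $[\vz,\vz']$. The hypotheses $r\le\tau/4$ and $\|\vz-\vz'\|\le\tau/4$, combined with the $1$-Lipschitzness of $\textup{dist}(\cdot,\sM)$, ensure $\textup{dist}(\vw_t,\sM)\le r+\|\vz-\vz'\|\le\tau/2<\tau$ at every $\vw_t:=(1-t)\vz+t\vz'$, so the entire segment lies in the open reach tube where $\vp$ is $C^1$. On this segment I would invoke the reach-based derivative formula used in the proof of Theorem~\ref{thm:inverse-lip-1a}: $D\vp(\vw)$ is block-diagonal in the orthogonal decomposition $\sR^n=T_{\vp(\vw)}\sM\oplus N_{\vp(\vw)}\sM$, vanishing on the normal block and self-adjoint on the tangent block with spectrum in $[\tau/(\tau+s),\tau/(\tau-s)]$. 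Subtracting the identity gives $D\vq(\vw)$ with tangent-block spectral radius $s/(\tau-s)$ and normal block $-\mI$ on $N_{\vp(\vw)}\sM$, so $\|D\vq(\vw)\|=\max(1,s/(\tau-s))\le 1$ whenever $s\le\tau/2$. The mean value inequality then yields $\|\vq(\vz)-\vq(\vz')\|\le\|\vz-\vz'\|$, from which \eqref{eq:proximal-residual-lip} follows immediately.

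The main obstacle will be the tangential spectral bound on $D\vq$: it relies on the block decomposition of $D\vp$ supplied by positive reach (Weingarten operator with principal curvatures bounded by $1/\tau$), and it is sharp because the constant $\tau/4$ in the hypothesis is precisely what keeps $s/(\tau-s)\le 1$ along the whole segment—a weaker constraint would allow the tangent block to exceed one and destroy nonexpansivity. A secondary subtlety is uniqueness: additional normal-crossing points of $\sM$ through $\vz$ produce other interior critical points of $\phi_\sigma$, and they must be ruled out from being global minimizers by the distance comparison $\|\vz-\vp(\vz)\|\le\|\vz-\vx_0\|$ for any other critical $\vx_0$.
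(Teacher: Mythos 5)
Your proof is correct, and the contraction step follows essentially the same route as the paper: you both use the explicit formula $\cS_\sigma(\vz)=\tfrac{\sigma}{1+\sigma}\bigl(\vp(\vz)-\vz\bigr)$, the reach-based block-diagonal structure of $D\vp$ from \cite{leobacher2021existence} with tangent spectrum in $[\tau/(\tau+s),\tau/(\tau-s)]$ and zero on normals, and a mean-value bound on a segment contained in $\sU_{\tau/2}(\sM)$ (your direct estimate $\textup{dist}(\vw_t,\sM)\le r+\|\vz-\vz'\|\le\tau/2$ and the paper's ``fits inside a ball in $\sU_{\tau/2}(\sM)$'' are the same observation). Where you genuinely diverge is the derivation of the closed form for $\textup{prox}_\sigma$. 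The paper argues purely geometrically: any $\vy$ can be replaced by a point $\tilde\vy$ on the segment $[\vz,\vp(\vz)]$ with no larger $\|\tilde\vy-\vz\|$ and strictly smaller $\textup{dist}(\tilde\vy,\sM)$, reducing the minimization to the scalar quadratic $\psi(\xi)=\bigl(\sigma\xi^2+(1-\xi)^2\bigr)\tfrac12 s(\vz)^2$ on $[0,1]$; no differentiability of $\textup{dist}^2$ is ever invoked. You instead use coercivity plus the bound $\phi_\sigma(\vy^\sharp,\vz)=\tfrac{\sigma}{2(1+\sigma)}s(\vz)^2$ to force any minimizer into the open reach tube where $\tfrac12\textup{dist}^2$ is $\cC^1$, then classify interior critical points via the first-order condition $(1+\sigma)\vy=\vz+\sigma\vp(\vy)$ as normal footpoints $\vx_0$ with $\vz-\vx_0\in N_{\vx_0}\sM$, and rule out all but $\vx_0=\vp(\vz)$ by comparing critical values $\tfrac{\sigma}{2(1+\sigma)}\|\vz-\vx_0\|^2$. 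Both are valid; the paper's reduction is more elementary, while your stationarity argument makes the exclusion of other normal-crossing candidates explicit and would adapt more readily if an extra smooth penalty were added to $\phi_\sigma$.
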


Relation to plug-and-play (PnP):
Condition~(A) of \cite{ryu2019plug} assumes a (nearly) contractive \emph{denoiser residual}—precisely the kind of property \eqref{eq:proximal-residual-lip} guarantees for the proximal residual $\textup{prox}_{\sigma}-\mI$ on a neighborhood of $\sM$.
In practice, $\sM$ is unknown; one therefore learns a parameterized operator (e.g., a neural network) whose residual is constrained to be (nearly) $\sigma$-contractive and plugs it into PGD/HQS in place of the exact proximal map.
Whereas \cite{ryu2019plug} posits Condition~(A) to ensure convergence, Theorem~\ref{thm:prox-residual} shows this condition arises naturally when the prior corresponds to the manifold-penalty $\tfrac{\sigma}{2}\,\mathrm{dist}^2(\cdot,\sM)$.

\begin{proof}[Proof of Theorem \ref{thm:prox-residual}]
We first note that, for any $\vy$, if $\|\vy - \vz\| > \|\vz - \vp(\vz)\|$, then it holds that 
\[ \phi_\sigma(\vp(\vz),\vz) = 0 + \frac{1}{2} \|\vz - \vp(\vz)\|^2 < \frac{\sigma}{2} \textup{dist}^2(\vy,\sM) + \frac{1}{2}\|\vy - \vz \|^2 = \phi_\sigma(\vy,\vz)\]
which implies
\[ \inf_{\vy} \phi_\sigma(\vy,\vz) = \inf_{\vy: \|\vy - \vz\| \leq \|\vz - \vp(\vz)\|} \phi_\sigma(\vy,\vz) \]
Let $r = \|\vz - \vp(\vz)\|$. We further notice that, for any $\vy$ with $\|\vy - \vz\| = s \leq r$, we are able to define $\tilde{\vy}$
\[ \tilde{\vy} := \frac{r-s}{r} \vz + \frac{s}{r} \vp(\vz) \]
which satisfies $\vp(\tilde{\vy})=\vp(\vz)$ and hence it holds that
 \begin{align*}
\dist(\tilde{\vy},\sM) = \|\tilde{\vy} - \vp(\vz)\| = & \|\vz - \vp(\vz)\| - \|\tilde{\vy} - \vz\| \\
< & \|\vz - \vp(\vy)\| - \|\tilde{\vy} - \vz\| \\
\leq & \|\vz - \vy\| + \|\vy - \vp(\vy)\| - \|\tilde{\vy} - \vz\| \\
= & s + \|\vy - \vp(\vy)\| - s = \|\vy - \vp(\vy)\| = \dist(\vy,\sM)
 \end{align*}
 which implies
 \[ \phi_\sigma(\tilde{\vy},\vz) =  \frac{\sigma}{2} \textup{dist}^2(\tilde{\vy},\sM) + \frac{1}{2}\|\tilde{\vy} - \vz \|^2 < \frac{\sigma}{2} \textup{dist}^2(\vy,\sM) + \frac{1}{2}\|\vy - \vz \|^2 = \phi_\sigma(\vy,\vz)\]
Consequently, we conclude that minimizing $\phi_\sigma$ is equal to minimizing it over the line segment between $\vz$ and its projection $\vp(\vz)$:
\[\inf_{\vy} \phi_\sigma(\vy,\vz) = \inf_{\xi \in [0,1]} \phi_\sigma(\xi \vz + (1-\xi)\vp(\vz),\vz).\]
Now define $\psi(\xi)=\phi_\sigma(\xi \vz + (1-\xi)\vp(\vz),\vz)$. We have
\[
\begin{aligned}
\psi(\xi)=& \frac{\sigma}{2} \left\| \Big(\xi \vz + (1-\xi)\vp(\vz)\Big) - \vp(\vz)\right\|^2 + \frac{1}{2} \left\| \Big(\xi \vz + (1-\xi)\vp(\vz)\Big) - \vz\right\|^2\\
=& \frac{\sigma}{2} \xi^2 \|\vz-\vp(\vz)\|^2 + \frac{1}{2} (1-\xi)^2 \|\vz-\vp(\vz)\|^2\\
= & \Big(\sigma \xi^2 + (1-\xi)^2\Big) \cdot \frac{1}{2} \|\vz-\vp(\vz)\|^2
\end{aligned}
\]
Therefore, $\inf_{\xi\in[0,1]}\psi(\xi)$ is attainable, and the minimizer is $\xi_\ast = \frac{1}{1+\sigma}$, which implies $\phi_\sigma$ must yield a unique minimizer at 
\[\vy_\ast = \frac{\vz + \sigma \vp(\vz)}{1+\sigma}.\]
Consequently, we have
\[ \cS_\sigma(\vz) = \vy_\ast-\vz = \frac{\sigma}{1+\sigma} (\vp(\vz)-\vz) \]
and hence
\[ D \cS_\sigma(\vz) = \frac{\sigma}{1+\sigma} (D\vp(\vz)-\mI). \]
According to \cite[Theorem C]{leobacher2021existence}, $D\vp(\vz)$ is actually restricted to the tangent space $\sT_{\vp(\vz)(\sM)}$:
\[ D\vp(\vz) = \Big(\mI_{\sT_{\vp(\vz)(\sM)}} - r \cL_{\vp(\vz),\vv} \Big)^{-1} P_{\sT_{\vp(\vz)(\sM)}} \]
where $r = \|\vp(\vz)-\vz\|$, $\vv = (\vp(\vz)-\vz)/r$, and $\cL_{\vp(\vz),\vv}$ is the shape operator in direction $\vv$ at $\vp(\vz)$. The shape operator's eigenvalues $\kappa_1,\cdots,\kappa_d$ (In this context, $d$ means the dimension of the tangent space) are the principal curvatures of $\sM$ \cite{do2016differential}, which implies the eigenvalues of $D\vp(\vz)$, when restricted to the tangent space, are
\[ \frac{1}{1-r\kappa_1}, \cdots, \frac{1}{1-r\kappa_d}. \]
All the curvatures are bounded by the reciprocal of the reach: $|\kappa_i| \leq 1/\tau$ \cite{aamari2019estimating}. Therefore, it holds that 
\[ \frac{\tau}{\tau+r}\mI \Big|_{\sT_{\vp(\vz)(\sM)}} \preceq D\vp(\vz) \Big|_{\sT_{\vp(\vz)(\sM)}} \preceq \frac{\tau}{\tau-r}\mI \Big|_{\sT_{\vp(\vz)(\sM)}}.\]
Moreover, as $D\vp(\vz)$ is restricted to and acts only on the tangent space $\sT_{\vp(\vz)(\sM)}$, we have $\vzero \preceq D\vp(\vz) \preceq \frac{\tau}{\tau-r}\mI$, which implies
\[ -\mI  \preceq D\vp(\vz) - \mI \preceq \frac{r}{\tau-r}\mI. \]
For $r \leq \tau/2$, we have $\frac{r}{\tau-r} \leq 1$ and hence $\|D \cS_\sigma(\vz)\| \leq \frac{\sigma}{1+\sigma}$. As long as $\vz,\vz^\prime \in \Urt$ where $r \leq \tau/4$ and $\|\vz - \vz^\prime\|\leq\tau/4$, the two points $\vz,\vz^\prime$ can be included in a convex subset (actually a ball) of $\Urt$ with $r=\tau/2$. By the mean value theorem, we finish the proof of \eqref{eq:proximal-residual-lip}.
\end{proof}

\subsection{Discussions regarding PnP}
\label{sec:app-pnp}

\paragraph{Derivation of HQS.}
Consider \eqref{eq:mapping-1b}:
\[
\min_{\vy,\vz\in\sR^n}\;\frac{1}{2}\|\vx-\mA\vy\|^2 \;+\; \frac{\alpha}{2}\,\mathrm{dist}^2(\vz,\sM) \;+\; \frac{\beta}{2}\|\vy-\vz\|^2.
\]
A typically method to solve it is applying block coordinate descent on it, which is also named ``Half-quadratic-splitting (HQS)" in the literature \cite{geman1995nonlinear}:
\[ 
\begin{aligned}
\vy_{t+1} = & \argmin_{\vy\in\sR^n} \frac{1}{2} \|\mA \vy - \vx\|^2 + \frac{\beta}{2} \|\vy - \vz_t\|^2 = \big(\mA^\top\mA + \beta \mI \big)^{-1} \Big( \mA^\top \vx + \beta \vz_t \Big)\\
\vz_{t+1} = & \argmin_{\vz\in\sR^n} \frac{\alpha}{2} \textup{dist}^2(\vz,\sM) + \frac{\beta}{2} \|\vz - \vy_{t+1}\|^2 = \textup{prox}_{\sigma} (\vy_{t+1}) \quad (\textup{let } \sigma=\alpha/\beta)
\end{aligned}
\]
Similarly, we can parameterize $\textup{prox}_{\sigma}$ as a neural network $\cH_{\theta,\sigma}$. Therefore, HQS suggests an implicit model 
\[
\cG_{\Theta}(\vz,\vx) = \cH_{\theta,\sigma}\bigg( \big(\mA^\top\mA + \beta \mI \big)^{-1} \Big( \mA^\top \vx + \beta \vz \Big)  \bigg)
\]
where $\Theta = \{\theta,\sigma,\beta\}$ includes all trainable parameters, which derives \eqref{eq:hqs-deq}.

\textbf{Bibliographical notes.} Here we adopt the long‐standing “plug-in denoiser” idea. It originated with Plug-and-Play (PnP) ADMM, which replaces a proximal operator with an off-the-shelf denoiser inside ADMM \cite{venkatakrishnan2013plug}. The framework has since been developed and analyzed extensively—see, e.g., \cite{chan2016plug,kamilov2017plug,buzzard2018plug,sun2019online} and the recent survey \cite{kamilov2023plug}. In the PGD setting, one pretrains $\cH$ for Gaussian denoising and plugs it into \eqref{eq:pgd-deq} \cite{ryu2019plug,gavaskar2020plug,liu2021recovery,hurault2022proximal}. The same plug-in idea applies to HQS via \eqref{eq:hqs-deq} \cite{zhang2021plug,hurault2022gradient,rasti2023plug}. In contrast to training a denoiser off-the-shelf and plugging it in, one can train the \emph{entire} $\cG_\Theta$ via deep equilibrium methods for the target task (the approach closest to this paper) in both PGD-style \cite{gilton2021deep,winston2020monotone,zou2023deep,yu2024msdc,daniele2025deep,shenoy2025recovering} and HQS-style \cite{gkillas2023optimization}.

\section{Proofs regarding NS Equations}
\label{sec:app-ns}

To rigorously state and prove the theorems, we present some definitions here.
First, We denote by $H^{m}(\Omega)$ the Sobolev space of functions which are in $L^2(\Omega)$ together with all their derivatives of order $\leq m$. Then $H^{m}_{\mathrm{p}}(\Omega)\subset H^{m}(\Omega)$ is the collection of functions in $H^{m}(\Omega)$ that satisfies the periodic boundary condition on $\Omega$ with zero mean (ref. to \cite[Remark 1.1]{temam1995navier}). Then, we can define the spaces considered in this paper:
\[ \sH := \left\{u \in \left\{ H^{0}_{\mathrm{p}}(\Omega) \right\}^2: \nabla \cdot u = 0 \right\},
\quad 
\sV := \left\{u \in \left\{ H^{1}_{\mathrm{p}}(\Omega) \right\}^2: \nabla \cdot u = 0 \right\}
\]
For the NS equation \eqref{eq:ns-velocity}, we consider $f \in \sH$ and $u \in \sV$. Moreover, we denote $\sV^\prime$ as the dual space of $\sV$ and have
\[ \sV \subset \sH \subset \sV^\prime. \]
We then equip $\sH$ with the standard $L^2$ inner product and norm for vector fields:
\[ \langle u, v \rangle_{\sH} := \int_{\Omega} \langle u(\xi), v(\xi) \rangle \mathrm{d}\xi, \quad \| u \|_{\sH} := \sqrt{\langle u, u \rangle_{\sH}} = \left( \int_{\Omega} \|u(\xi)\|^2 \mathrm{d}\xi \right)^{1/2}= \|u\|_{L^2(\Omega)}\]
The space $\sV$ is equipped with the $L^2$ norm on the first-order derivatives of $u$. In particular,
\[
\begin{aligned}
    \langle u, v \rangle_{\sV} := & \sum_{i=1}^2 \int_{\Omega} \left\langle \frac{\partial u}{\partial \xi_i}(\xi), \frac{\partial v}{\partial \xi_i}(\xi) \right\rangle \mathrm{d}\xi \\ 
    \| u \|_{\sV} := & \sqrt{\langle u, u \rangle_{\sV}} = \left( \sum_{i=1}^2 \int_{\Omega} \left\|\frac{\partial u}{\partial \xi_i}(x) \right\|^2 \mathrm{d}\xi \right)^{1/2} = \|\nabla u\|_{L^2(\Omega)} 
\end{aligned}
\]
and $\|\cdot\|_{\sV^\prime}$ is defined as the dual norm of $\|\cdot\|_{\sV}$. By Poincare and Cauchy-Shwartz inequalities, we have 
\[ \|v\|_{\sH} \leq c_1 \|v\|_{\sV} , \quad \forall v \in \sV\]
and 
\[ \|v\|_{\sV^\prime} \leq c_2 \|v\|_{\sH} , \quad \forall v \in \sH \]
where $c_1,c_2$ are constants depending on the domain $\Omega$. The above definitions and results are standard in the literature and we largely follow the notation in \cite[Section 2]{temam1995navier}.

\begin{proof}[Proof of Theorem \ref{thm:ns-existence}]
\cite[Theorem 10.1]{temam1995navier} states that, for any $f \in \sV^\prime$, if $\|f\|_{\sV^\prime} \le c_0 \nu^2$ (with $c_0>0$ depending only on $\Omega$), then the steady NS problem \eqref{eq:ns-velocity} has a unique solution $u_\ast$.
Since $\sH \subset \sV^\prime$ and $\|f\|_{\sV^\prime} \le c_2 \|f\|_{\sH}$, this yields uniqueness on
\[
\sH_{\nu}^{(1)} := \left\{ f \in \sH : \|f\|_{\sH} \le \frac{c_0}{c_2}\,\nu^2 \right\}.
\]
Moreover, by \cite[Theorem 10.4]{temam1995navier}, there exists an open dense set $\sH_{\nu}^{(2)} \subset \sH$ such that, on each connected component of $\sH_{\nu}^{(2)}$, the solution $u_\ast$ depends $C^\infty$ on $f$; in particular, $f \mapsto u_\ast$ is locally Lipschitz there.
Define $\sH_{\nu}:=\sH_{\nu}^{(1)} \cap \sH_{\nu}^{(2)}$. Since $\sH_{\nu}^{(2)}$ is open and dense in $\sH$, the set $\sH_{\nu}$ is dense in $\sH_{\nu}^{(1)}$. On $\sH_{\nu}$ the solution is unique and the map $f \mapsto u_\ast$ is locally Lipschitz. This completes the proof.
\end{proof}

Before moving to Corollary \ref{coro:ns-discrete}, let's reclarify lifting and projection operators: Let the lifting (or extension) operator $\cE_h: \sR^{N_h \times 2} \to \left\{L^2(\Omega)\right\}^2 $ be the piecewise–constant reconstruction $\cE_h(\vx):= \sum_{C \in \Omega_h} x_C \vone_{C}$, and let $\cP: \left\{L^2(\Omega)\right\}^2 \to \sH$ be the orthogonal projection onto divergence–free, zero–mean fields. Then we move on to Corollary \ref{coro:ns-discrete}.

\begin{proof}[Proof of Corollary \ref{coro:ns-discrete}]
The mapping $\cF_{2}:\vx \mapsto \vy_\ast$ can be viewed as a composition of multiple mappings: We first map $\vx \in \sR^{N_h \times 2}$ to a continuous version $f \in \sH$ by $\cP \circ \cE_h$, then $f$ can be mapped to its corresponding solution $u_\ast$ by a Locally Lipschitz operator as stated in Theorem \ref{thm:ns-existence}. Here we denote this mapping by $\cS: f \mapsto u_\ast$. Then $u_\ast$ is mapped to $\omega_\ast$ by vorticity: $\nabla \times u_\ast$, and finally $\omega_\ast$ can be mapped to $\vy_\ast$ by a restriction operator $\cR_h$:
\[ \cF_2 = \cR_h \circ (\nabla \times) \circ \cS \circ \cP \circ \cE_h. \]
Then let's analyze the norm of the above operators one by one. Firstly, the restriction operator $\cR_h$ has a norm no greater than $1$ as:
\begin{align*}
\| \cR_h(\omega) \|^2_{\ell^2_h} 
= & \sum_{C \in \Omega_h} |C| \left|  \frac{1}{|C|}\int_{C} \omega(\xi) \mathrm{d}\xi \right|^2 \\
\leq & \sum_{C \in \Omega_h} \frac{1}{|C|} \left(  \int_{C} |\omega(\xi)|\mathrm{d}\xi \right)^2 
\leq \sum_{C \in \Omega_h} \int_{C} |\omega(\xi)|^2 \mathrm{d}\xi = \|\omega\|^2_{L^2(\Omega)}
\end{align*}
Note that  $\cR_h$ is a linear operator, hence its bounded norm immediately leads to its bounded Lipschitz constant:
\[ \| \cR_h(\omega) - \cR_h(\omega^\prime) \|^2_{\ell^2_h} 
= \| \cR_h(\omega - \omega^\prime) \|^2_{\ell^2_h} 
\leq \|\omega - \omega^\prime\|^2_{L^2(\Omega)}. \]
Second, the curl operator $\nabla \times$ must be a bounded linear operator because the solution $u_\ast \in \sV$, where first-order derivatives must be $L^2$. Third, the solution mapping $\cS$ has been discussed in Theorem \ref{thm:ns-existence}, it is a nonlinear operator, but it is locally Lipschitz continuous. Fourth, the projection operator $\cP$ must be linear and have a norm no greater than $1$. Finally, the lifting operator is linear and has a bounded norm as:
\[ \|\cE_h(\vx)\|^2_{L^2(\Omega)} = \sum_{C \in \Omega_h} |C| ~ |x_C|^2 = \|\vx\|^2_{\ell^2_h} \]
Therefore, except for the nonlinear operator $\cS$, the other four operators are all linear and bounded and hence are globally Lipschitz continuous. As long as we can show that the input of $\cS$ must be taken from the unique solution regime $\sH_{\nu}$, we will complete the proof that $\cF_2$ is locally Lipschitz everywhere on $\sX_{\nu,h}$. This can be proved because $\vx \in \sX_{\nu,h}$ implies $\cP(\cE_{h}(\vx)) \in \sH_{\nu}$. Finally, by applying Theorem \ref{thm:lip}, we conclude the existence of $\cG$ described in Corollary \ref{coro:ns-discrete}, which finishes the entire proof.
\end{proof}

\section{Proofs regarding linear programming}
\label{sec:app-lp}
Although Lipschitz continuity of LP solution maps has been studied (e.g., \cite{mangasarian1987lipschitz,dontchev2009implicit}), we are not aware of a reference that states Theorem \ref{thm:lp} in the precise form needed here—particularly allowing perturbations of $\mA$ (rather than treating $\mA$ as fixed). For completeness, we therefore include a self-contained discussion and proof.

To work with a standard form, we rewrite the general-form problem \eqref{eq:lp} in standard form. Suppose there are $p$ equality constraints and $q$ inequality constraints. Without loss of generality, we assume $\circ_i$ equals to ``$=$" for $1 \leq i \leq p$ and $\circ_i$ equals to ``$\leq$" for $p+1 \leq i \leq m$. Then we denote $\mA_p$ as the first $p$ rows of matrix $\mA$ and $\mA_q$ as the remaining part:
\[\mA_p := \mA[1:p, ~:],\qquad \mA_q := \mA[p+1:m, ~ :]\]
And therefore the general form LP \eqref{eq:lp} can be written as
\[\min_{\vy \in \sR^n} \vc^\top \vy, \quad \textup{s.t. } \mA_p \vy = \vb_p, ~~ \mA_q \vy \leq \vb_q, ~~ \vl \leq \vy \leq \vu.  \]
Let $\hat{\vy}:=\vy-\vl$, $\vs := \vb_q - \mA_q \vy$, and $\vt := \vu - \vy$, the above problem can be transformed to
\[
\min_{\vy \in \sR^n} \vc^\top \hat{\vy}, \quad \textup{s.t. } 
\begin{bmatrix}
\mA_p & & \\
\mA_q & \mI & \\
\mI & & \mI 
\end{bmatrix}
\begin{bmatrix}
\hat{\vy} \\
\vs \\
\vt 
\end{bmatrix}
= 
\begin{bmatrix}
\vb_p - \mA_p \vl \\
\vb_q - \mA_q \vl \\
\vu - \vl
\end{bmatrix},
~~
\hat{\vy} \geq \vzero, \vs \geq \vzero, \vt \geq \vzero
\]
By letting
\[
\tilde{\vc}:=\begin{bmatrix}
    \vc \\ \vzero \\ \vzero
\end{bmatrix}, ~~
\tilde{\mA}:=\begin{bmatrix}
\vb_p - \mA_p \vl \\
\vb_q - \mA_q \vl \\
\vu - \vl
\end{bmatrix},~~
\tilde{\vb}:=\begin{bmatrix}
\vb_p - \mA_p \vl \\
\vb_q - \mA_q \vl \\
\vu - \vl
\end{bmatrix},~~
\tilde{\vy}:=\begin{bmatrix}
\hat{\vy} \\
\vs \\
\vt 
\end{bmatrix}
\]
The problem is equivalently expressed in standard form as
\[ \min_{\tilde{\vy}} \tilde{\vc}^\top \tilde{\vy}, ~~ \textup{s.t. } \tilde{\mA}\tilde{\vy}=\tilde{\vb},~~ \tilde{\vy} \geq \vzero. \]
In fact, every LP can be rewritten in an equivalent standard form. While concepts such as basic feasible solutions, degeneracy, and complementary slackness are most naturally and cleanly stated in standard form, each admits a closely related analogue (with minor adjustments) for the general form. Accordingly—without loss of generality and to keep the focus on core ideas—we carry out the proof in the standard-form setting:
\[
\min_{{\vy}} {\vc}^\top {\vy}, ~~ \textup{s.t. } {\mA}{\vy}={\vb},~~ {\vy} \geq \vzero,
\]
with dual
\[
\min_{{\vz}} {\vb}^\top {\vz}, ~~ \textup{s.t. } {\mA^\top}{\vz}\leq{\vc}. 
\]
Here, we follow the standard settings in the literature: $\vy,\vc \in \sR^n$, $\vz,\vb \in \sR^m$, $\mA \in \sR^{m \times n}$, $\textup{rank}(\mA)=m$ (ensured by preprocessing with removing redundant equalities), and $m \leq n$. In this context, we define the domain of LP that we work on:
\[ \sX:= \{(\mA,\vb,\vc): \textup{The resulting standard LP is feasible and bounded}\} \]

Note that, to match the rest of the paper, we reserve $\vx$ for machine learning model inputs (in this context, it is $\vx=(\mA,\vb,\vc)$) and hence write the primal LP variable as $\vy$ and the dual LP variable as $\vz$. This departs from the common $(\vx,\vy)$ convention. Note also that in the main text the symbol $\vz$ denotes a latent variable; here, in the appendix regarding LP's technical details, it denotes the dual variable. These meanings are unrelated and should be clear from context.

Now let's present some definitions used in this appendix. Fix a \textit{basis} by selecting an index set $B \subset \{1,2,\cdots,n\}$ with $|B|=m$ such that the $m \times m$ submatrix $\mB:=\mA[:,B]$ is \textit{nonsingular}. Let $N=\{1,2,\cdots,n\} \setminus B$ be the complement of the basis and let $\mN:=\mA[:,N]$. Then the equality constraints read
\[\mB \vy_{B} + \mN \vy_{N} = \vb\]
Setting $\vy_{N}=\vzero$ yields $\vy_{B} = \mB^{-1}\vb$. Such a $\vy = [\vy_{B}, \vzero]$ is called a \textit{basic solution}. If additionally $\vy_{B} \geq \vzero$, this basic solution is feasible, then it is called a \textit{basic feasible solution (BFS)}. On the dual side, we define the slack variable $\vs$ and its sub-vector restricted to $B$ and $N$:
\[ \vs := \vc - \mA^\top \vz, \quad \vs_{B} := \vc_{B} - \mB^\top \vz, \quad \vs_{N} := \vc_{N} - \mN^\top \vz. \]
A pair $(\vy,\vz)$ is primal–dual optimal (i.e., satisfies KKT for LP) iff
\begin{equation}
\label{eq:lp-complementary}
\mA \vy = \vb,\quad \vc = \mA^\top \vz + \vs, \quad \vy\odot\vs=\vzero, \quad \vy \geq \vzero,~~\vs \geq \vzero
\end{equation}
for some $\vs \in \sR^n$. If, in addition, there exists a basis $B$ such that
\begin{equation}
\label{eq:lp-regularity}
\vy_{B} \geq \vzero, ~~\vy_{N} = \vzero, ~~\vs_{B} = \vzero, ~~\vs_{N} \geq \vzero,
\end{equation}
then the tuple $(\vy,\vz,\vs)$ is called an optimal BFS with a complementary dual. By the fundamental theorem of linear programming, any feasible instance with finite optimal value $(\mA,\vb,\vc) \in \sX$ admits an optimal BFS with a complementary dual satisfying \eqref{eq:lp-complementary} and \eqref{eq:lp-regularity} together \cite{bertsimas1997introduction}.

While conditions \eqref{eq:lp-complementary} and \eqref{eq:lp-regularity} are enough to ensure the existence of the optimal basic solutions, they are not enough to ensure that the optimal solution is unique and locally Lipschitz continuous w.r.t. the inputs $(\mA,\vb,\vc)$. To ensure these points, we present two additional conditions based on \eqref{eq:lp-complementary} and \eqref{eq:lp-regularity}:
\begin{align}
\vy_B > \vzero & \quad  (\textup{Non-degeneracy}) \label{eq:lp-nondegenerate} \\
\vs_N > \vzero & \quad (\textup{Strict complementary slackness}) \label{eq:lp-sc}
\end{align}
All the conditions together are enough to the uniquenss and local Lischitz continuity. Let's introduce a set consisting of all ``good" LP instances:
\[ \sX_{\mathrm{sub}}:= \{(\mA,\vb,\vc) \in \sX: \textup{The LP yields a tuple $(\vy,\vz,\vs)$ satisfying \eqref{eq:lp-complementary}, \eqref{eq:lp-regularity}, \eqref{eq:lp-nondegenerate} and \eqref{eq:lp-sc}.}\} \]
With all the preparations, we can prove Theorem \ref{thm:lp} now. Actually, proving Theorem \ref{thm:lp} in the context of standard-form LP is equivalent to proving the following two theorems.

\begin{theorem}
\label{thm:lp-regular}
For any LP $(\mA,\vb,\vc) \in \sX_{\mathrm{sub}}$, it must yield a unique optimal solution $\vy_\ast$, and the solution mapping $(\mA,\vb,\vc) \mapsto \vy_\ast$ is locally Lipschitz continuous everywhere on $\sX_{\mathrm{sub}}$.
\end{theorem}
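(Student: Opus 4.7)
The plan is to exploit the two strict conditions that define $\sX_{\mathrm{sub}}$---non-degeneracy \eqref{eq:lp-nondegenerate} and strict complementary slackness \eqref{eq:lp-sc}---to pin down a single optimal basis that survives small perturbations, whereupon the solution map reduces locally to an explicit rational function of the LP data.

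First I would settle uniqueness. Fix $(\mA_0,\vb_0,\vc_0)\in\sX_{\mathrm{sub}}$ with associated tuple $(\vy_0,\vz_0,\vs_0)$ and basis $B$. By LP strong duality, any primal optimum $\vy^\prime$ satisfies complementary slackness against the dual optimum, $\vy^\prime\odot\vs_0=\vzero$. The strict inequality $(\vs_0)_N>\vzero$ then forces $\vy^\prime_N=\vzero$, so $\mA_0\vy^\prime=\vb_0$ collapses to $\mB_0\vy^\prime_B=\vb_0$ with $\mB_0$ nonsingular, yielding the unique value $\vy^\prime_B=\mB_0^{-1}\vb_0$. Moreover, $(\vy_0)_B>\vzero$ identifies $B$ with the support of the (now unique) primal optimum, so the optimal basis itself is unique.

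Next I would construct an explicit solution map on a neighborhood of $(\mA_0,\vb_0,\vc_0)$. With $\mB:=\mA[:,B]$, define
\[
\Psi(\mA,\vb,\vc):=\big(\mB^{-1}\vb,\ \vzero\big),\qquad \vz(\mA,\vc):=\mB^{-\top}\vc_B,\qquad \vs(\mA,\vc):=\vc-\mA^\top\vz(\mA,\vc).
\]
Continuity of the determinant guarantees that $\mB$ stays nonsingular (with locally bounded inverse) on an open neighborhood $\cU$ of $(\mA_0,\vb_0,\vc_0)$, and Cramer's rule makes $\Psi$, $\vz$, $\vs$ all $C^\infty$---in particular locally Lipschitz---on $\cU$. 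The equality portions of \eqref{eq:lp-complementary}--\eqref{eq:lp-regularity}, namely $\mA\Psi=\vb$, $\Psi_N=\vzero$, $\vs_B=\vzero$, and $\Psi\odot\vs=\vzero$, hold identically on $\cU$ by construction. Since $(\vy_0)_B>\vzero$ and $(\vs_0)_N>\vzero$ are open conditions and the relevant components of $\Psi$ and $\vs$ are continuous, shrinking $\cU$ yields $\Psi_B>\vzero$ and $\vs_N>\vzero$ throughout $\cU$, which supplies primal and dual feasibility.

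Combining these pieces, every point of $\cU$ satisfies \eqref{eq:lp-complementary}--\eqref{eq:lp-sc} with the same basis $B$, so $\cU\subset\sX_{\mathrm{sub}}$ and $\Psi$ is an optimal BFS with complementary dual; the uniqueness step identifies $\Psi$ with $\cF_3$ on $\cU$, and local Lipschitz continuity follows from the smoothness of $(\mA,\vb)\mapsto\mB^{-1}\vb$. Since $(\mA_0,\vb_0,\vc_0)$ was arbitrary, $\cF_3$ is locally Lipschitz on all of $\sX_{\mathrm{sub}}$. The main bookkeeping hurdle I anticipate is clarifying the role of $B$: $\sX_{\mathrm{sub}}$ only asserts the \emph{existence} of a suitable basis at each point, so one must emphasize that an admissible $B$ is fixed at each base point, that strict complementary slackness together with non-degeneracy makes this $B$ unique, and that nearby perturbations cannot activate a different basis without violating the open conditions used to define $\cU$.
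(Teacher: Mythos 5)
Your proof is correct, but it takes a genuinely different route from the paper. The paper converts the LP KKT system into the framework of \cite{dontchev1996characterizations} and verifies the hypotheses of their strong-regularity lemma (Lemma \ref{lemma:lp-regular}) — an implicit-function-theorem-type statement for parametric KKT systems — by showing that under non-degeneracy and strict complementarity the constraint gradients indexed by $I_1$ span $\sR^n$ and $I_2=\varnothing$, so the critical subspace is trivial and the second-order condition holds vacuously. You instead exploit the combinatorial basis structure of LP directly: strict complementarity forces any primal optimum to have $\vy^\prime_N=\vzero$, invertibility of $\mB_0$ then pins down $\vy^\prime_B$ and (with non-degeneracy) the unique optimal basis $B$; locally, the solution is the rational map $(\mA,\vb,\vc)\mapsto(\mB^{-1}\vb,\vzero)$ with dual $\vz=\mB^{-\top}\vc_B$ and slack $\vs=\vc-\mA^\top\vz$; and the strict inequalities $(\vy_0)_B>\vzero$, $(\vs_0)_N>\vzero$ are open, so the same basis certifies optimality on a neighborhood $\cU$, where $\cF_3$ therefore coincides with a $C^\infty$ function. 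Your argument is self-contained, more elementary, and actually yields local $C^\infty$ regularity (not merely Lipschitz), at the cost of being specific to LP; the paper's is shorter given the cited lemma and generalizes to nonlinear programs. One place worth tightening: you should state explicitly that the KKT conditions \eqref{eq:lp-complementary} are sufficient for LP optimality, so that $\Psi$ satisfying them on $\cU$ — together with your uniqueness argument applied at each point of $\cU$ — is what licenses identifying $\Psi$ with $\cF_3$ there.
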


\begin{theorem}
\label{thm:lp-dense}
$\sX_{\mathrm{sub}}$ is a dense subset of $\sX$.
\end{theorem}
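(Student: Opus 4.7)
The plan is a perturbation argument: given $(\mA,\vb,\vc)\in \sX$, for every $\epsilon>0$ I will construct a nearby instance $(\mA,\vb',\vc')\in \sX_{\mathrm{sub}}$ within distance $O(\epsilon)$, keeping $\mA$ fixed (density in the $(\vb,\vc)$-slice already yields density in $\sX$). The argument splits naturally into first pushing $(\vb,\vc)$ into the relative interior of the feasibility-and-boundedness region, and then applying a generic perturbation inside that open region.

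For the first step, by the fundamental theorem of linear programming there is an optimal primal BFS $\vy^\ast\ge\vzero$ with $\mA\vy^\ast=\vb$ and a dual optimum $\vz^\ast$ with slack $\vs^\ast:=\vc-\mA^\top\vz^\ast\ge\vzero$. Setting $\vb(\epsilon):=\vb+\epsilon\,\mA\vone$ and $\vc(\epsilon):=\vc+\epsilon\,\vone$, the shifted instance admits $\vy^\ast+\epsilon\vone>\vzero$ as a strictly feasible primal point and $\vs^\ast+\epsilon\vone>\vzero$ as strict dual slack. Both strict primal and strict dual feasibility are open conditions, so $(\mA,\vb(\epsilon),\vc(\epsilon))$ lies in a relatively open subset $\sX^\circ\subset \sX$, and any sufficiently small further perturbation of $(\vb,\vc)$ remains in $\sX^\circ$.

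For the second step, I would enumerate the finitely many index sets $B\subset\{1,\dots,n\}$ of size $m$ with $\mB:=\mA[:,B]$ nonsingular (and $N$ its complement). For each such $B$ and each coordinate $i$, the sets
\[
H^{\mathrm{prim}}_{B,i}:=\{\vb:\;\ve_i^\top\mB^{-1}\vb=0\},\qquad
H^{\mathrm{dual}}_{B,i}:=\{\vc:\;\ve_i^\top(\vc_N-\mN^\top\mB^{-\top}\vc_B)=0\}
\]
are genuine hyperplanes, since $\mB^{-\top}\ve_i\neq\vzero$ and the dual functional has coefficient $1$ on $(\vc_N)_i$. Outside this finite union of hyperplanes, every feasible primal BFS satisfies $\vy_B>\vzero$ (condition \eqref{eq:lp-nondegenerate}) and every feasible dual basic solution satisfies $\vs_N>\vzero$ (condition \eqref{eq:lp-sc}). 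Picking any optimal basis $B^\ast$ supplied by the fundamental theorem then yields a tuple $(\vy,\vz,\vs)$ verifying \eqref{eq:lp-complementary}--\eqref{eq:lp-sc}, placing the instance in $\sX_{\mathrm{sub}}$.

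Combining the two steps, the intersection of $\sX^\circ$ (relatively open in $\sX$) with the complement of the finite union of hyperplanes (open and dense in $\sR^m\times\sR^n$) is nonempty in every neighborhood of $(\vb,\vc)$ and lies in $\sX_{\mathrm{sub}}$, which establishes density. The main obstacle is that $\sX$ itself need not be open: the original $(\vb,\vc)$ may sit on the boundary of primal feasibility or of dual feasibility, so a naively generic perturbation could exit $\sX$. The specific shift directions $\mA\vone$ and $\vone$ chosen in the first step resolve this by simultaneously restoring strict positivity of $\vy$ and of $\vs$, which reduces the problem to a standard genericity argument inside an open region.
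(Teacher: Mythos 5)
Your proof is correct but takes a genuinely different route from the paper's. The paper's argument is a direct construction: fixing $\mA$, it takes the optimal primal-dual basic tuple $(\vy,\vz,\vs)$ supplied by the fundamental theorem, perturbs the tuple itself to strict positivity ($\vy_k=\vy+\frac{1}{k}\ve_B$, $\vs_k=\vs+\frac{1}{k}\ve_N$, $\vz_k=\vz$), and then reverse-engineers the approximating instance via $\vb_k:=\mA\vy_k$ and $\vc_k:=\mA^\top\vz_k+\vs_k$, so that the perturbed tuple is automatically a nondegenerate, strictly complementary optimal basic pair for $(\mA,\vb_k,\vc_k)$, with the perturbation size computed explicitly. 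Your argument instead works forward on the data $(\vb,\vc)$: a Slater-type shift $(\vb,\vc)\mapsto(\vb+\epsilon\mA\vone,\,\vc+\epsilon\vone)$ moves the instance into an open region of strict primal/dual feasibility, after which a genericity step (avoiding a finite union of hyperplanes indexed by all nonsingular bases $B$) secures nondegeneracy and strict complementarity at whatever optimal basis the fundamental theorem provides. Both arguments are valid. The paper's construction is more elementary---it skips the openness and hyperplane-counting machinery and simply exhibits the approximating LP---whereas your route gives a slightly stronger conclusion in the $(\vb,\vc)$-slice (the exceptional set is contained in a finite union of hyperplanes, hence nowhere dense with measure zero, not merely non-dense) and isolates the boundary-of-$\sX$ obstruction explicitly through the Slater shift.
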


Theorem \ref{thm:lp-regular} follows from \cite{dontchev1996characterizations}, which develops Robinson’s notion of strong regularity \cite{robinson1980strongly} for nonlinear programs. For completeness—and to keep notation consistent with linear programming—we restate the relevant lemma in an LP-adapted form and then verify its hypotheses for LP. We begin by quoting the result from \cite{dontchev1996characterizations}.

\begin{lemma}[\cite{dontchev1996characterizations}]
\label{lemma:lp-regular}
Consider a parameteric nonlinear program:
\begin{align*}
\min_{\vy\in\sR^n} & \vc^\top \vy + g_0(\vw,\vy) \\
\textup{s.t. } & g_i(\vw,\vy)=u_i, ~~ 1 \leq i \leq r \\
& g_i(\vw,\vy)\leq u_i, ~~ r+1 \leq i \leq d
\end{align*}
where $g_i (0 \leq i \leq d)$ are all $\cC^2$ functions, and $\vc,\vw$ and $\vu=[u_1,\cdots,u_d]^\top$ are parameters to describe the program, and consider its Lagrangian with multipliers $\lambda=[\lambda_1,\cdots,\lambda_d] \in \sR^d$ given by
\[L(\vw,\vy,\lambda) = g_0(\vw,\vy) + \sum_{i=1}^{d} \lambda_i g_i(\vw, \vy).\]
Let $(\bar{\vy},\bar{\lambda})$ be a KKT point at $(\bar{\vc},\bar{\vw},\bar{\vu})$, and define the index sets at $(\bar{\vy},\bar{\lambda})$
\[
\begin{aligned}
I_1 = & \Big\{r+1 \leq i \leq d: g_i(\bar{\vw},\bar{\vy})=u_i, \bar{\lambda}_i > 0 \Big\} \cup \Big\{1,\cdots,r\Big\}, \\
I_2 = & \Big\{r+1 \leq i \leq d: g_i(\bar{\vw},\bar{\vy})=u_i, \bar{\lambda}_i = 0 \Big\}, \\
I_3 = & \Big\{r+1 \leq i \leq d: g_i(\bar{\vw},\bar{\vy}) < u_i, \bar{\lambda}_i = 0 \Big\}.
\end{aligned}
\]
If the following conditions hold:
\begin{itemize}
\item The constraint gradients $\nabla_\vy g_i(\bar{\vw},\bar{\vy})$ for $i \in I_1 \cup I_2$ are linearly independent; and
\item It holds that
\[ \langle \vy^\prime, \nabla^2_{\vy\vy}L(\bar{\vw},\bar{\vy},\bar{\lambda}) \vy^\prime \rangle > 0 \]
for all $\vy^\prime \neq \vzero$ in the subspace $\sM=\Big\{ \vy^\prime: \vy^\prime \perp \nabla_\vy g_i(\bar{\vw},\bar{\vy}) \textup{ for all } i \in I_1 \Big\}$,
\end{itemize}
then the KKT solution map $(\vc,\vw,\vu) \mapsto (\vy,\lambda)$ is locally single-valued and Lipschitz around $(\bar{\vc},\bar{\vw},\bar{\vu},\bar{\vy},\bar{\lambda})$.
\end{lemma}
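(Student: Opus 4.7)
My plan is to reformulate the KKT system as a parametric generalized equation and apply Robinson's strong-regularity theorem \cite{robinson1980strongly}. Writing $\vp=(\vc,\vw,\vu)$ and $\vq=(\vy,\lambda)$, the KKT conditions take the form $0 \in H(\vp,\vq) + N_K(\vq)$, where $H$ collects the Lagrangian stationarity $\nabla_\vy L(\vw,\vy,\lambda)$ and the constraint residuals $g_i(\vw,\vy)-u_i$, and $N_K$ is the normal cone of $K=\sR^n\times\sR^r\times\sR^{d-r}_{\geq 0}$ that enforces sign and complementarity of the inequality multipliers. By Robinson's theorem, it suffices to establish strong regularity of the \emph{linearized} generalized equation obtained by replacing $H$ with its first-order Taylor polynomial at $(\bar\vp,\bar\vq)$: if the linearized problem admits a locally single-valued and Lipschitz solution map in $\vp$, the same property transfers to the nonlinear system.

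For the linearized problem, the first step is to exploit that strict positivity $\bar\lambda_i>0$ on $I_1$ and strict slackness $g_i(\bar\vw,\bar\vy)<u_i$ on $I_3$ are open conditions that persist under small perturbations. Hence any nearby solution must satisfy $g_i=u_i$ on $I_1$ and $\lambda_i=0$ on $I_3$ identically, and the only truly variational structure lives on the weakly active set $I_2$. For each choice of active/inactive pattern on $I_2$ compatible with complementarity, the linearized KKT reduces to a square linear system whose coefficient matrix has the saddle-point form
\[
\begin{bmatrix} \nabla^2_{\vy\vy} L(\bar\vw,\bar\vy,\bar\lambda) & G^\top \\ G & 0 \end{bmatrix},
\]
where $G$ stacks the gradients of the selected active constraints. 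The LICQ hypothesis on $I_1\cup I_2$ guarantees that $G$ has full row rank, and the SOSC on the critical subspace $\sM$ forces the Hessian to be positive definite on the null space of $G$; a standard Schur-complement argument then yields nonsingularity of the saddle-point matrix and hence a primal-dual pair that depends Lipschitz continuously on the right-hand side.

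The main obstacle is to show that the pattern-wise local solutions coalesce into a single Lipschitz map as $\vp$ varies. This amounts to verifying that, for each small perturbation, exactly one pattern on $I_2$ is consistent with the sign constraints $\lambda_i\geq 0$ and $g_i\leq u_i$. This is precisely Robinson's coherent-orientation condition for strong regularity of affine variational inequalities on polyhedral cones; under LICQ combined with SOSC it follows from the positive definiteness of $\nabla^2_{\vy\vy}L$ on the critical cone together with the full row rank of the active-gradient matrix, which together rule out the sign ambiguities that could produce multiple selections. Once this compatibility is verified, the pattern-wise branches glue into a single Lipschitz selection for the linearized problem, and Robinson's theorem lifts the conclusion to the nonlinear generalized equation, giving local single-valuedness and Lipschitz continuity of the KKT solution map in a neighborhood of $(\bar\vp,\bar\vq)$.
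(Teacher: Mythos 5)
The paper does not prove this statement at all: it is imported verbatim as a quoted result from \cite{dontchev1996characterizations} (the sufficiency direction being Robinson's classical theorem from \cite{robinson1980strongly}), and the appendix only verifies its hypotheses for the LP instance. Your proposal therefore supplies an argument where the paper supplies none, and the argument you sketch is the standard one: recast the KKT system as a generalized equation, invoke Robinson's implicit-function theorem to reduce to the linearization, observe that strict complementarity on $I_1$ and strict inactivity on $I_3$ persist under small perturbations so that only the weakly active set $I_2$ carries variational structure, and show each complementarity pattern yields a nonsingular saddle-point system because LICQ on $I_1\cup I_2$ gives full row rank of $G$ while positive definiteness of $\nabla^2_{\vy\vy}L$ on $\sM$ restricts to positive definiteness on $\ker G\subseteq\sM$ for every pattern. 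This structure is correct and is exactly why the hypothesis is stated on the larger subspace $\sM$ (orthogonal only to the strongly active gradients) rather than on the critical cone: the condition must hold uniformly over all admissible patterns, so your label ``SOSC on the critical subspace'' is slightly loose terminology for what is really the strong second-order sufficient condition, though you use the hypothesis in the right way.

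The one step you assert rather than derive is the gluing: showing that for each nearby parameter exactly one pattern on $I_2$ is consistent with the sign constraints, i.e., that the pattern-wise branches define a single-valued Lipschitz selection of the linearized affine variational inequality. This is the heart of Robinson's theorem (equivalently, the coherent-orientation condition for piecewise-affine maps), and ``positive definiteness plus full row rank rule out the sign ambiguities'' is a claim that itself requires the degree-theoretic or piecewise-linear homeomorphism argument you are implicitly appealing to. As a self-contained proof this step is incomplete; as a reduction to the cited literature it is fine, and it matches how the paper itself treats the lemma — as an external result whose hypotheses are then checked for the LP case.
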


\begin{proof}[Proof of Theorem \ref{thm:lp-regular}]
Taking $r=m$ and $d=m+n$. Let $\va^\top_i$ be the i-th row of $\mA$ in standard LP, and let
\[
g_i(\vw,\vy)=
\begin{cases}
\va_i^\top \vy,& i=1,\dots,m,\\[2pt]
-\,y_{i-m},& i=m+1,\dots,m+n,
\end{cases}
\quad
u_i=
\begin{cases}
b_i,& i=1,\dots,m,\\[2pt]
0,& i=m+1,\dots,m+n,
\end{cases}
\]
with \(\vw\) collecting the coefficients of \(\mA\).
The Lagrangian in Lemma \ref{lemma:lp-regular} becomes
\[
L(\vw,\vy,\lambda)=\vc^\top \vy+\sum_{i=1}^m \lambda_i\,\va_i^\top \vy+\sum_{j=1}^n \lambda_{m+j}(-y_j).
\]
Introduce the usual dual/primal–slack variables
\[
\vz:=-\lambda_{1:m}\in\sR^m,\qquad \vs:=\lambda_{m+1:m+n}\in\sR^n_{\ge 0},
\]
to rewrite stationarity as \(\nabla_{\vy} L=\vc-\mA^\top \vz-\vs=\vzero\), i.e., \(\vs=\vc-\mA^\top \vz\).
Primal feasibility is \(\mA\vy=\vb,\ \vy\ge 0\); dual feasibility is \(\vs\ge \vzero\); and complementarity is \(\vy\odot \vs=\vzero\).
Thus the KKT system in Lemma \ref{lemma:lp-regular} coincides with the standard LP KKT conditions.

Assume \((\mA,\vb,\vc)\in\sX_{\mathrm{sub}}\), i.e., the LP admits a tuple $(\bar\vy,\bar\vz,\bar\vs)$ satisfying \eqref{eq:lp-complementary}, \eqref{eq:lp-regularity}, \eqref{eq:lp-nondegenerate} and \eqref{eq:lp-sc} (A nondegenerate and strict complementary basic point). In this context, the index sets \(I_1,I_2,I_3\) at \((\bar\vy,\bar\vz,\bar\vs)\) become:
\[
\begin{aligned}
I_1 &= \{1,\ldots,m\}\ \cup\ \{\,m+j:\ \bar y_j=0,\ \bar s_j>0\,\},\\
I_2 &= \{\,m+j:\ \bar y_j=0,\ \bar s_j=0\,\},\\
I_3 &= \{\,m+j:\ \bar y_j>0,\ \bar s_j=0\,\}.
\end{aligned}
\]
which implies:
\begin{itemize}
\item For each \(j\), either \(\bar y_j>0\) or \(\bar s_j>0\), which implies \(I_2=\varnothing\).
\item $I_3$ is substantially the basis set: $I_3 = \{m + j:j \in B\}$
\item $I_1$ includes all the indices in the complement of basis: $I_1 = \{1,\ldots,m\}\ \cup\ \{\,m+j: j \in N\}$
\end{itemize}
To verify the hypotheses of Lemma \ref{lemma:lp-regular}, we examine the gradients:
\[\{\nabla_\vy g_i\}_{i\in I_1}=\{\va_i\}_{i=1}^m \cup \{-\ve_j\}_{j\in N} \]
In the context of standard LP, $|N|=n-m$. Hence, \(\{\nabla_\vy g_i\}_{i\in I_1}\) consists of $n$ vectors in $\sR^n$. Now we create a matrix $\mG$ by stacking these vectors as rows:
\[\mG:=\begin{bmatrix}
\va^\top_1 \\
\cdots \\
\va^\top_m \\
\ve^\top_{j_1} \\
\cdots \\
\ve^\top_{j_{n-m}}
\end{bmatrix}\]
By properly permuting the columns of $\mG$, it becomes
\[
\tilde{\mG} = \begin{bmatrix}
\mB & \mN \\
\vzero & \mI
\end{bmatrix}
\]
where $\mI$ represents the identity matrix in $\sR^{n-m}$. Since $\mB$ (the basis matrix) and $\mI$ are both nonsingular, $\tilde{\mG}$ (and hence $\mG$) must be nonsingular. Therefore, the rows of $\mG$ are linearly independent, i.e., \(\{\nabla_\vy g_i\}_{i\in I_1}\) is linearly independent. With \(I_2=\varnothing\), the first hypothesis of Lemma \ref{lemma:lp-regular} holds. Moreover, because these gradients \(\{\nabla_\vy g_i\}_{i\in I_1}\) span $\sR^n$, the $\sM$ subspace must be trivial: $\sM=\{\vzero\}$. Therefore, the second hypothesis of Lemma \ref{lemma:lp-regular} is automatically satisfied. 

By Lemma \ref{lemma:lp-regular}, the KKT solution map is locally single-valued and Lipschitz around the given point, which yields the desired local uniqueness and Lipschitz dependence of \(\vy_\ast\) on \((\mA,\vb,\vc)\) for every \((\mA,\vb,\vc)\in\sX_{\mathrm{sub}}\).
\end{proof}

Theorem \ref{thm:lp-dense} can be proved by fundamental concepts in real analysis.

\begin{proof}[Proof of Theorem \ref{thm:lp-dense}]
To prove $\sX_{\mathrm{sub}}$ is dense in $\sX$, it's enough to show that: For any $(\mA,\vb,\vc)\in\sX$, one can always create a sequence of LP $\{(\mA_k,\vb_k,\vc_k)\}_{k\ge 1} \subset \sX_{\mathrm{sub}}$ such that 
\[ \mA_k \to \mA, ~~ \vb_k \to \vb, ~~ \vc_k \to \vc. \]
Now let's fix $(\mA,\vb,\vc)\in\sX$. As we previously discussed, there must be a tuple $(\vy,\vz,\vs)$ satisfying \eqref{eq:lp-complementary} and \eqref{eq:lp-regularity}. Define:
\[ \vy_k := \vy + \frac{1}{k}\ve_{B},\quad \vs_{k} := \vs + \frac{1}{k}\ve_{N}, \quad \vz_k := \vz\]
so that $(\vy_k,\vz_k, \vs_k)$ must satisfy the nondegeneracy and strict complementary slackness: \eqref{eq:lp-regularity}, \eqref{eq:lp-nondegenerate}, and \eqref{eq:lp-sc}. Accordingly, define
\[ \mA_k := \mA, \quad \vb_k := \mA_k \vy_k, \quad \vc_k := \mA^\top_k \vz_k + \vs_k \]
Then one can verify that the tuple $(\vy,\vz,\vs)$ satisfies \eqref{eq:lp-complementary}, \eqref{eq:lp-regularity}, \eqref{eq:lp-nondegenerate} and \eqref{eq:lp-sc} for the LP instance $(\mA_k,\vb_k,\vc_k)$, hence $(\mA_k,\vb_k,\vc_k) \in \sX_{\mathrm{sub}}$ for all $k \geq 1$. Finally, such a perturbed LP instance can be arbitrarily close to $(\mA,\vb,\vc)$ as $k \to \infty$:
\begin{align*}
\|\mA_k-\mA\| = & 0 \\
\|\vb_k-\vb\| = & \left\| \mA\left( \frac{1}{k}\ve_{B} \right) \right\| \leq \frac{1}{k} \|\mA\| \|\ve_{B}\| = \frac{\sqrt{m}}{k}\|\mA\| \to 0 \\
\|\vc_k-\vc\| = & \left\|\frac{1}{k}\ve_{N} \right\| = \frac{\sqrt{n-m}}{k} \to 0
\end{align*}
which finishes the proof.
\end{proof}

\section{Training Strategies}
\label{app:training}

\textbf{Unrolling vs implicit differentiation.} There are two training strategies adopted in this paper. One is named ``unrolling" (minimizing $\ell(\vy_T)$):
$$\min_\theta \ell(\vy_T), ~~~ \vy_{t+1} = \mathcal{G}_{\theta}(\vy_{t},\vx),~~ t=0,1,2,\cdots,T-1$$ 
and the other is named ``implicit differentiation" (minimizing $\ell(\vy_*)$):
$$\min_\theta \ell(\vy_\ast), ~~~ \vy_{\ast} = \mathcal{G}_{\theta}(\vy_{\ast},\vx).$$
These two strategies are closely related. In particular,
\begin{itemize}[leftmargin=5mm]
    \item As established in prior literature, unrolled training is mathematically equivalent to a Neumann series approximation of the implicit gradient \cite{geng2021training}. Specifically, implicit differentiation requires inverting the Jacobian $(\mI - \mJ_{\mathcal{G}_\theta})^{-1}$; finite unrolling effectively approximates this inverse via a Neumann series expansion. This is a widely adopted technique in the implicit model community to avoid the instability and cost of exact inversion.
    \item Implicit training is simply the limit of unrolled training: as $T \to \infty$, the gradient $\nabla_\theta \ell(\vy_T)$ converges to the implicit gradient $\nabla_\theta \ell(\vy_\ast)$ \cite{geng2021training}.
\end{itemize}
Overall, unrolling and root-finding are merely two numerical implementations for approximating the same fixed point, $\vy_\ast(\vx)$, and technically speaking, there is no significant gap or distinction between the two. Theoretically, infinite unrolling converges exactly to $\vy_\ast(\vx)$. In practice, unrolling depth simply controls the trade-off between accuracy and computational cost: a dynamic strictly analogous to setting the error tolerance in implicit root-finding solvers. 

Particularly in our paper, for Case Studies 1 \& 2, we employ implicit differentiation (minimizing $\ell(\vy_\ast)$) via root-finding; for Case Study 3, we adopt unrolling to train implicit GNNs, which serves as a truncated Neumann approximation of the implicit GNN gradient; for Case Study 4, we directly use the pretrained model from \cite{geiping2025scaling}.

\textbf{Guarantees of Regularity and the Expressivity Trade-off.} While our experiments demonstrate that standard training (either unrolling and implicit differentiation defined above) empirically results in regular implicit operators, \textit{we do not explicitly enforce this property in the loss function.} Designing training mechanisms that theoretically guarantee regularity without sacrificing the model's unique expressive capabilities remains an open and interesting future topic. 

Recall that regularity (Definition \ref{def:regular-g}) comprises two conditions: the Lipschitz continuity of the map $\vx \mapsto \cG_\theta(\vy,\vx)$ and the contractivity of the map $\vy \mapsto \cG_\theta(\vy,\vx)$. The first condition is largely inherent to standard deep learning architectures; compositions of affine layers with bounded weights and 1-Lipschitz activations (e.g., ReLU) naturally preserve Lipschitz continuity with respect to the input \cite{miyato2018spectral,virmaux2018lipschitz}. Therefore, the critical challenge lies in guaranteeing the second condition: contractivity with respect to the state $\vy$. 

A substantial body of literature has sought to enforce this contractivity by construction (e.g., \cite{el2021implicit,winston2020monotone,jafarpour2021robust,revay2020lipschitz,havens2023exploiting}). These approaches typically impose rigid structural constraints, such as parameterizing the model as a one-layer nonlinear MLP: $\cG_\theta(\vy,\vx)=\sigma(\mA\vy+\mB\vx+\vb)$ and strictly bounding the spectral norm of $\mA$, or enforcing global monotonicity. 

However, these methods generally enforce a \textit{uniform} contraction modulus $\mu$ across the entire domain $\vx \in \sX$. Our theoretical analysis suggests that such uniformity fundamentally undercuts the unique expressive advantage of implicit models. As illustrated in Figure \ref{fig:toy-diagram}, for a sequence of continuous iterates $\vy_t(\vx)$ to converge to a target $\cF(\vx)$ that is discontinuous or has singularities, the convergence \textit{cannot} be uniform. This implies that the convergence rate—and consequently the operator's contraction modulus $\mu(\vx)$—must be adaptive, varying with $\vx$ to allow for slower convergence in complex regions. Enforcing a globally uniform $\mu$ severs this adaptive capability, thereby severely constraining the model's expressive power. 

Therefore, developing novel regularization techniques that can guarantee \textit{adaptive} contractivity (ensuring $0 < \mu(\vx) < 1$ locally while allowing it to vary over $\vx$) is a critical direction for future research to balance theoretical stability with maximal expressivity.

\section{Experiment Details regarding Image Reconstruction}
\label{sec:app-exp-inv}

This section complements the main text with additional implementation and dataset details for the inverse-problem experiments.

\textbf{Experiment settings.} We consider an image deblurring task, $\vx = \mA(\vy_\ast) + \vn$, where $\mA$ is the blur operator and $\vn$ is the Gaussian noise ($\sigma=0.03$). We use a motion-blur operator, and the blur kernel is the first of the eight kernels from \cite{Levin_2009_CVPR}. Ground-truth images $\vy_\ast$ come from BSDS500 \cite{MartinFTM01}. We follow the official splits (200 train / 100 validation / 200 test) and apply a random $128 \times 128$ crop to each image. For each $\vy_\ast$, we generate the corresponding $\vx$ by applying $\mA$ and adding noise. The resulting pairs $(\vx,\vy_\ast)$ form three datasets $\sD_{\textup{inv,train}}$, $\sD_{\textup{inv,val}}$, and $\sD_{\textup{inv,test}}$ for training, validation, and testing, respectively.
In both PGD and HQS style parameterizations (\eqref{eq:pgd-deq} and \eqref{eq:hqs-deq}), the operator $\cH$ is implemented with DRUNet \cite{zhang2021plug}.

\textbf{Training.} We initialize $\cH$ using pretrained weights from the Deepinv library \cite{tachella2025deepinverse} and then fine-tune the full implicit models on the BSDS500 training set for this deblurring task. Training follows the vanilla Jacobian-based implicit differentiation and is implemented on top of the official Deepinv framework.
All models were trained with Adam (learning rate \(10^{-4}\), batch size \(3\)).
Explicit baselines were trained for \(20\) epochs, and the implicit models for \(10\) epochs.
After each epoch we evaluated on the validation set and saved the checkpoint; the final model used for testing is the one with the lowest validation loss. These epoch budgets were sufficient for validation-loss convergence.

\textbf{PSNR.} PSNR (Peak Signal-to-Noise Ratio) is defined between a reference $\vy^\ast$ and reconstructed image $\vy$ as
\[\textup{PSNR}(\vy,\vy^\ast):= 10 \log_{10}\left(\frac{n \cdot \textup{MAX}^2}{\|\vy-\vy^\ast\|^2}\right)\]
where $n$ is the dimension of $\vy$ and $\vy^\ast$, and MAX means the max possible pixel value (e.g., 255 for 8-bit, or 1 if images are in $[0,1]$. In our context, it is 1. Higher PSNR means better (more accurate) reconstruction.

\textbf{Standard test set.}
Evaluation uses the \(200\) images from the official BSDS500 test split, randomly cropped to \(128\times128\).
Let \(\sD_{\textup{inv,test}}=\{(\vx_i,\vy_i^\ast)\}_{i=1}^{200}\), where
\[
\vx_i \;=\; \mA(\vy_i^\ast) + \vn_i,\qquad \vn_i \sim \cN(\vzero,\sigma^2 \mI),\ \ \sigma=0.03.
\]
Here \(\vy_i^\ast\) denotes the clean (ground-truth) image and \(\vx_i\) its corresponding blurred–noisy observation under the forward model \(\mA\).

\textbf{Perturbed test set.} To empirically validate our theory, we created a perturbed version of the test set. To create a diverse and representative set of perturbations, we generate perturbations that correspond to different frequency levels. Image frequencies represent different levels of detail, where low frequencies capture smooth, large-scale areas, and high frequencies capture sharp edges and fine textures. By probing the model with perturbations across this spectrum, we can comprehensively evaluate its behavior.

Specifically, we construct each perturbation by targeting a singular vector of the forward operator $\mA$. Because \(\mA\) is (circular) convolution, its singular vectors are Fourier modes. For each image \(\vy^\ast_i\) and each frequency magnitude \(f\in\{0.1,0.3,0.5,0.7,0.9\}\), we first identify the 2D discrete Fourier frequencies and sort them by their geometric distance from the origin. We then select the frequency coordinate $(u,v)$ at the $f$-th percentile of this sorted list. A one-hot tensor is created in the Fourier domain with a value of 1.0 at the chosen $(u,v)$ position and zeros elsewhere. This sparse frequency representation is transformed back into the image domain by applying the adjoint of the blur operator, $\mA^\top$. These perturbations are visualized in Figure \ref{fig:inv-visualize-delta}. Adding them to \(\vy_i^\ast\) respectively yields perturbed clean images \(\vy_{i,j}^\ast\) (\(j=1,\dots,5\)); we then form the corresponding observation
\[
\vx_{i,j} \;=\; \mA(\vy_{i,j}^\ast) + \vn_{i},\qquad \vn_{i}\sim\cN(\vzero,\sigma^2 \mI).
\]
The perturbed evaluation set is
\[
\sD'_{\textup{inv,test}}
\;=\;
\big\{\,(\vx_{i,j},\vy_{i,j}^\ast)\ :\ 1\le i\le 200,\ 1\le j\le 5\,\big\}.
\]
For convenience we also define the unperturbed index \(j=0\) by \(\vx_{i,0}:=\vx_i\) and \(\vy_{i,0}^\ast:=\vy_i^\ast\).

\textbf{Platform.}
All experiments were run on a workstation with eight Quadro RTX~6000 GPUs.

\begin{figure}[t]
    \centering

    \begin{subfigure}[b]{0.19\textwidth}
        \includegraphics[width=\linewidth]{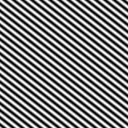}
        \centering
        frequency=0.1
        \label{fig:delta-1}
    \end{subfigure}
    \begin{subfigure}[b]{0.19\textwidth}
        \includegraphics[width=\linewidth]{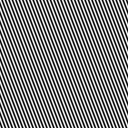}
        \centering
        frequency=0.3
        \label{fig:delta-2}
    \end{subfigure}
    \begin{subfigure}[b]{0.19\textwidth}
        \includegraphics[width=\linewidth]{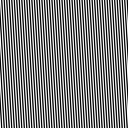}
        \centering
        frequency=0.5
        \label{fig:delta-3}
    \end{subfigure}
    \begin{subfigure}[b]{0.19\textwidth}
        \includegraphics[width=\linewidth]{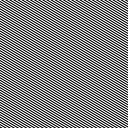}
        \centering
        frequency=0.7
        \label{fig:delta-4}
    \end{subfigure}
    \begin{subfigure}[b]{0.19\textwidth}
        \includegraphics[width=\linewidth]{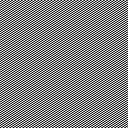}
        \centering
        frequency=0.9
        \label{fig:delta-5}
    \end{subfigure}

    \caption{Visualized perturbations for inverse problems.}
    \label{fig:inv-visualize-delta}
\end{figure}

\textbf{Additional Experiments.} 
Implicit models often excel on imaging tasks, but a natural question is whether simply stacking more explicit layers (i.e., deepening the model) can close the gap. To probe this, we construct explicit counterparts to implicit models by untying the parameters across iterations:
\[
\min_{\Theta} \E_{\vx} {\ell(\vy_T,\vy_\ast)}, \quad \textup{s.t. } \vy_t = \cG_{\Theta^{(t)}}(\vy_{t-1},\vx), ~~ t=1,\cdots,T
\]
where each block $\cG_{\Theta^{(t)}}$ has the same architecture as in the implicit case (PGD or HQS), but $\Theta^{(t)}$ are separate for each $t$. This is equivalent to stacking $T$ blocks to form a deeper explicit model with more learnable parameters. Unlike implicit models (which can use different iteration counts at train vs.\ test), these explicit models must use the same \(T\) for both training and testing. We evaluated $T \in \{1, 2, 4, 8, 16, 32\}$ to compare against the corresponding implicit models.

\begin{table}[t]
\small
\centering
\caption{Deeper explicit models vs implicit models for image deblurring (PGD architecture). ``Exp ($\times T$)'' represents an explicit model $T$ times deeper than the implicit baseline. O/M denotes CUDA Out of Memory during training. }
\label{tab:inv-deeper}
\begin{tabular}{l|l|l|l|l|l|l|l}
\hline
& Exp ($\times 1$) & Exp ($\times 2$) & Exp ($\times 4$) & Exp ($\times 8$) & Exp ($\times 16$) & Exp ($\times 32$) &  Implicit        \\ \hline\hline
 Params.        & 32.641 M             & 65.282 M             & 130.56 M             & 261.13 M  & 522.26 M             & 1044.5 M            & 32.641 M    \\ \hline    
 PSNR    & 27.14 dB     & 27.64 dB    & 27.89 dB & 28.11 dB     & 28.27 dB    & O/M      & 28.21 dB \\ \hline
\end{tabular}

\vspace{5mm}

\caption{Deeper explicit models vs implicit models for image deblurring (HQS architecture). ``Exp ($\times T$)'' represents an explicit model $T$ times deeper than the implicit baseline. O/M denotes CUDA Out of Memory during training. }
\label{tab:inv-wider}
\begin{tabular}{l|l|l|l|l|l|l|l}
\hline
 & Exp ($\times 1$) & Exp ($\times 2$) & Exp ($\times 4$) & Exp ($\times 8$) & Exp ($\times 16$) & Exp ($\times 32$) &  Implicit        \\ \hline\hline
 Params.        & 32.641 M             & 65.282 M             & 130.56 M             & 261.13 M  & 522.26 M             & 1044.5 M            & 32.641 M        \\ \hline
 PSNR     & 26.94 dB    & 28.02  dB   & 28.35 dB & 28.69 dB & 28.87 dB & O/M   & 29.18 dB\\ \hline
\end{tabular}
\end{table}

Tables~\ref{tab:inv-deeper} and \ref{tab:inv-wider} report results on image deblurring. Across both PGD and HQS settings, deepening explicit models increases parameter counts massively (up to $\sim 1$ billion) but yields diminishing returns in PSNR. Crucially, the implicit models achieve performance comparable to or better than explicit models that are $16\times$ deeper, while using a fraction of the parameters (32.6 M vs. 522 M). For instance, in the HQS setting, the implicit model (29.18 dB) outperforms the explicit model with 16 unrolled blocks (28.87 dB). 
 
 Furthermore, training extremely deep explicit models (e.g., $T=32$) becomes infeasible due to memory constraints (O/M). This highlights the distinct efficiency advantage of the weight-tied implicit approach: it theoretically allows for infinite depth (realized here as 100 test-time iterations) while maintaining constant parameter counts ($32.6$ M) and memory usage.

\section{Experiment Details regarding Scientific Computing}
\label{sec:app-ns-exp}

\textbf{Model structure and training.}
Given cell-averaged forces $\vx\!\in\!\sR^{H\times W\times2}$ and vorticities $\vy\!\in\!\sR^{H\times W\times1}$, where $H$ means the height and $W$ means the width, we learn
\[
\vz_\ast = \cG_{\Theta}\!\big(\vz_\ast,\ \cQ_{\Phi}(\vx)\big),
\qquad
\vy_\ast = \cQ_{\Psi}(\vz_\ast),
\]
where $\vz_\ast\!\in\!\sR^{H\times W\times C}$ is a latent field with $C$ channels. At inference, we iterate 
\[\vz_{t}=\cG_{\Theta}\!\big(\vz_{t-1},\ \cQ_{\Phi}(\vx)\big),\]
for $1 \leq t \leq T$ and finally call $\vy_T=\cQ_{\Phi}(\vz_T)$.

The projection $\cQ_{\Phi}$ is a \emph{pointwise} linear encoder applied at each grid cell to lift into $C$ channels. In particular, $\vg=\cQ_{\Phi}(\vx)$ reads
\[ \vg = \mW_{1} \vx + \vb_{1}  \in \sR^{H\times W\times C} \]
where $\Phi=(\mW_{1},\vb_{1})$ are learnable parameters. 

The core map $\cG_{\Theta}(\vz,\vg)$ stacks $L$ identical FNO layers with input injection:
\begin{align*}
&\vz^{(0)} =  \vz \\
&\vz^{(l)} = \sigma\left( \vg + \mW^{(l)}_{2} \vz^{(l-1)} + \vb^{(l)}_2 + \textup{IFFT} (\mR^{(l)} \cdot \textup{FFT}(\vz^{(l-1)}) ) \right) , ~~ l = 1,2,\cdots,L,\\
&\cG_{\Theta}(\vz,\vg) =  \vz^{(L)}
\end{align*}
where $\Theta = \{\mW^{(l)}_{2}, \vb^{(l)}_2, \mR^{(l)}\}_{l=1}^L$ are learnable parameters.
Each layer:
(i) performs a {global spectral convolution} on $\vz$: take an FFT of the $C$-channel tensor, keep only a small set of low Fourier modes. Suppose the number of retained Fourier modes is $K \times K$ (2D FFT), $\textup{FFT}(\vz)\in\sC^{K \times K \times C}$. For each retained mode $(k_1,k_2)$ multiply the $C$-dimensional channel vector by a learnable dense matrix $\mR^{(l)}_{k_1,k_2}\in\sC^{C \times C}$ (mixing channels) and hence the overall matrix is of size $\mR^{(l)}\in\sC^{K \times K \times C \times C}$, then apply an inverse FFT; 
(ii) adds a {local pointwise transform}, adds the injected encoder features $\cQ_{\Phi}(\vx)$, and applies a nonlinearity.  This realizes a resolution-invariant, globally receptive operator that naturally respects periodic boundary conditions. 

Finally, we decode with the pointwise readout $\cQ_{\Psi}$ (a small per-cell two-layer MLP) to produce $\vy \in \sR^{H\times W\times1}$ where $\Psi$ are learnable parameters.

All samples use \(H=W=128\). Unless stated otherwise, we set the latent width \(C=32\), retain \(K=12\) Fourier modes per dimension in the FNO blocks, and use \(L=3\) FNO layers inside \(\cG_\Theta\). Training differentiates implicitly through the fixed point, and the fixed-point solver uses Anderson acceleration. We optimize with Adam (learning rate \(5\times10^{-3}\), batch size \(16\)). For explicit baselines, we train for \(500\) epochs, which suffices for the training loss to converge.

\textbf{Perturbed data generation.}
In this paragraph, we describe how we generate perturbed samples in $\sD^\prime_{\textup{pde,test}}$. We take the dataset of \cite{marwah2023deep} as the unperturbed set $\sD_{\textup{pde,test}}$ and create perturbations by linearizing the steady NS equation \eqref{eq:ns-velocity}. Each sample $(f,\omega)$ comprises a forcing term $f$ and its vorticity solution $\omega$. Directly prescribing $f$ and solving for $\omega$ is computationally costly; following \cite{marwah2023deep}, we instead prescribe $\omega$ and obtain the corresponding $f$ by evaluating the PDE operator (not by solving the PDE). In our setting, the base samples are given; thus we first construct a solution perturbation $\delta\omega$ and then compute the induced forcing perturbation $\delta f$ via the linearization, yielding the perturbed pair $(f+\delta f,\ \omega+\delta\omega)$.

Note that, while the dataset is discrete, we use the continuous notation $f,\omega,u$ in this section to ease reading and to remain consistent with the PDE literature. In addition, we use $\xi = (\xi_1,\xi_2)$ as the special domain variable to keep consistent with our main text, and use $k=(k_1,k_2)$ as the frequency domain variable. 

(Generate $\delta\omega$).
Fix a target wavenumber $k_\ast\in\sN$ and a desired $L^2$–magnitude $\eta>0$. We construct $\delta \omega$ by
\[
\delta\omega(\xi_1,\xi_2)=A\,\sin\!\big(k_\ast \xi_1 + k_\ast \xi_2\big),
\qquad
A\ \text{chosen so that}\ \ \|\delta\omega\|_{L^2(\Omega)}=\eta.
\]
The wavenumber is selected from a user–specified frequency percentile $p_{\textup{freq}}$ relative to the maximum resolvable frequency $k_{\mathrm{max}}=H/2=W/2$, namely
\[
k_\ast = p_{\textup{freq}} \times k_{\mathrm{max}}
\quad\text{(rounded to the nearest integer mode).}
\]
In our code we set the grid size $H=W=128$, the perturbation strength $\eta=0.01$, and choose
\[
p_{\textup{freq}} \in \{0.01, 0.02, 0.03, 0.04, 0.05, 0.1, 0.3, 0.5, 0.7, 0.9, 0.95, 0.96, 0.97, 0.98, 0.99\}.
\]
Accordingly, each original sample yields $15$ perturbed samples.

(Generate velocity from vorticity).
Given a scalar vorticity field $\omega$ (and its perturbation $\delta\omega$), we recover the corresponding velocities $u$ via a streamfunction $\psi$ given by
\[
u = (\partial_2 \psi,\,-\partial_1 \psi),
\qquad
\omega = -\,\Delta \psi .
\]
Hence $\psi$ is obtained by solving the Poisson equation $\Delta \psi = -\,\omega$, after which we obtain $u$.
On a periodic grid, these operators are implemented efficiently in the Fourier domain.

(Linearization and the perturbed vorticity forcing $\delta g$).
Applying the (scalar) curl ``$\nabla\times$'' to both sides of \eqref{eq:ns-velocity} yields the steady vorticity form
\[
(u\!\cdot\!\nabla)\,\omega \;-\; \nu\,\Delta \omega = g,
\qquad
g = \nabla\times f = \partial_1 f_2 - \partial_2 f_1 ,
\]
where $f=(f_1,f_2)$ is the body force and $g$ is its curl. Introducing perturbations $(\delta u,\delta\omega,\delta g)$ and expanding
\[
(u+\delta u)\!\cdot\!\nabla\,(\omega+\delta\omega) \;-\; \nu\,\Delta(\omega+\delta\omega)
= g+\delta g,
\]
then subtracting the base equation and discarding higher–order terms gives the first–order relation
\[
\delta g
=
(u\!\cdot\!\nabla)\,\delta\omega
\;+\;
(\delta u\!\cdot\!\nabla)\,\omega
\;-\;
\nu\,\Delta\,\delta\omega .
\]
Again, for numerical implementation on a periodic grid, the differential operators are applied efficiently in the Fourier domain.

(Recover the vector force $\delta f$ from its curl $\delta g$).
We recover a periodic $\delta f=(\delta f_1,\delta f_2)$ satisfying $\nabla\times\delta f=\delta g$ by solving a Poisson equation for an auxiliary streamfunction $\psi$ and obtain $\delta f$ exactly as in ``Generate velocity from vorticity.''

\textbf{Additional Experiments: Scaling Explicit Models.} A natural question is whether the performance gap between implicit and explicit models can be bridged simply by scaling up the explicit architecture (i.e., stacking more layers or increasing channel width). To investigate this, we compared the implicit FNO against explicit baselines scaled significantly in two dimensions: depth (up to $32\times$) and width (up to $8\times$). The results, summarized in Table~\ref{tab:fno-deeper} and Table~\ref{tab:fno-wider}, demonstrate that while scaling explicit models yields modest accuracy gains, it faces severe diminishing returns and computational bottlenecks (eventually leading to CUDA Out-of-Memory errors). Crucially, the implicit model achieves markedly better performance (lowest relative error of $0.0785$) than even the largest viable explicit models, despite the explicit counterparts using over $10\times$ the number of parameters (e.g., $29.06$ M for Exp($\times 16$) vs. $2.376$ M for Implicit). This confirms that the implicit formulation provides an expressive advantage that cannot be efficiently replicated by simply allocating more capacity to an explicit solver.

\begin{table}[t]
\centering
\caption{Implicit FNO vs deeper Explicit FNO. ``Exp($\times l$)" denotes an explicit model that is $l$ times deeper than the implicit FNO. O/M indicates a CUDA out-of-memory error during training.}
\label{tab:fno-deeper}
\begin{tabular}{c|c|c|c|c|c|c|c}
\hline
               & Exp($\times 1$)    & Exp($\times 2$)    & Exp($\times 4$) & Exp($\times 8$)    & Exp($\times 16$)    & Exp($\times 32$)  & Implicit    \\ \hline\hline
Params.  & 2.376 M           & 4.155 M           &  7.713 M   & 14.83M           & 29.06M           &     57.52M        & 2.376 M           \\ \hline
Rel. Err.     
& 0.1787
& 0.1526  & 0.1410 & 0.1380  & 0.1360  & O/M  & 0.0785  \\ \hline
\end{tabular}
\end{table}

\begin{table}[t]
\centering
\caption{Implicit FNO vs wider Explicit FNO. ``Exp($\times w$)" denotes an explicit model that is $w$ times wider than the implicit FNO. O/M indicates a CUDA out-of-memory error during training.}
\label{tab:fno-wider}
\begin{tabular}{c|c|c|c|c|c}
\hline
               & Exp($\times 1$)    & Exp($\times 2$)    & Exp($\times 4$) & Exp($\times 8$)      & Implicit    \\ \hline\hline
Params.  & 2.376 M           & 9.504 M           &  38.01 M   & 152.0M                    & 2.376 M           \\ \hline
Rel. Err.     
& 0.1787  & 0.1555  & 0.1401  & O/M  & 0.0785  \\ \hline
\end{tabular}
\end{table}

Note: These findings are broadly consistent with \cite{marwah2023deep}. We follow their setup with two minor deviations: we use a smaller training batch size (16) due to hardware limits, and while we keep $T=24$ training iterations for the implicit model, at inference we run $T=50$, because we observe that the trained implicit models remain stable and often benefit from additional fixed-point iterations at test time.

\section{Experiment Details regarding LP}
\label{sec:app-lp-exp}

\textbf{GNN model details.}
We implement \eqref{eq:gnn-structure}:
\[\vz_\ast = \cG_{\Theta}(\vz_\ast,\cQ_{\Phi}(\vx)), \quad \vy_\ast = \cQ_{\Psi}(\vz_\ast) \]
with an $L$-layer message-passing GNN \cite{scarselli2008graph,xu2018how} on the bipartite graph. Let $\cN(i)$ (resp. $\cN(j)$) be the neighbors of constraint node $W_i$ (resp. variable node $V_j$). With shared MLPs across all nodes and edges, the GNN structure is given by:
\begin{align*}
& \textup{Input-embedding: } &&  W_i^{(0)} = \textup{MLP}_{\phi_1}(b_i,\circ_i), \\
& && V_j^{(0)} = \textup{MLP}_{\phi_2}(c_j,l_j, u_j, z_{\mathrm{in},j}) \\ 
& \textup{Message-passing }(1\leq l \leq L-1): && W_i^{(l)} = \textup{MLP}_{\theta_1^{(l)}}\left( W_i^{(l)}, \sum_{j \in \cN(i)} A_{ij} \cdot \textup{MLP}_{\theta_2^{(l)}}\left( V_j^{(l-1)}\right) \right), \\
& && V_j^{(l)} = \textup{MLP}_{\theta_3^{(l)}}\left( V_j^{(l)}, \sum_{i \in \cN(j)} A_{ij} \cdot \textup{MLP}_{\theta_4^{(l)}}\left( W_i^{(l-1)}\right) \right) \\
& \textup{Output-embedding: } && z_{\mathrm{out},j} = \textup{MLP}_{\theta_5}\left( V_j^{(L)} \right)
\end{align*}
We write this compactly as follows.
\[ \vz_{\mathrm{out}} = \cG_{\Theta}(\vz_{\mathrm{in}},\cQ_{\Phi}(\vx)) \]
where $\Theta = \left\{ \{\theta_1^{(l)}\}_{l=1}^{L-1},\{\theta_2^{(l)}\}_{l=1}^{L-1},\{\theta_3^{(l)}\}_{l=1}^{L-1},\{\theta_4^{(l)}\}_{l=1}^{L-1}, \theta_5 \right\}$ are trainable parameters in the GNN, $\Phi = \{\phi_1,\phi_2\}$ includes the trainable parameters of the input embedding. The input $\vx$ includes all static information $\vx := (\mA,\vb,\vc,\circ,\vl,\vu)$. Finally, the output embedding $\vy = \cQ_{\Psi}(\vz)$ is given by
\[ y_j = \textup{MLP}_{\Psi}(z_j) \]
for every variable node $j$. All MLPs in $\cG_{\Theta}$, $\cQ_{\Phi}$, and $\cQ_{\Psi}$ use two layers with ReLU activations. We sweep widths (or embedding sizes) in $\{4,8,16,32\}$ and report results in the main text.

Note that $l$ is the layer index within the GNN structure, not the iteration number $t$. All parameters in $\Theta$ are independent of the iteration number, so this GNN can be applied iteratively. $\vx$ is the static features and $\vz$ is the dynamic feature.
In addition, removing the dynamic input $z_{\mathrm{in}}$ and decoding directly to $\vy$ recovers the standard (explicit) GNN baseline.

\textbf{Dataset generation.}
We largely follow \cite{chen2023lp} to construct the training set $\sD_{\textup{LP,train}}$ and test set $\sD_{\textup{LP,test}}$, drawing $(\mA,\vb,\vc,\circ,\vl,\vu)$ i.i.d.\ from the same distribution. Each LP has $50$ variables and $10$ constraints. The matrix $\mA$ is sparse with $100$ nonzeros whose locations are chosen uniformly at random and whose values are sampled from a standard normal distribution. Entries of $\vb$ and $\vc$ are sampled i.i.d.\ from $\mathrm{Unif}[-1,1]$, after which $\vc$ is scaled by $0.01$. Variable bounds $\vl,\vu$ are sampled coordinatewise from $\mathcal{N}(0,10)$; whenever $l_j>u_j$ we swap them. Constraint types are sampled independently with $\Pr(\circ_i=``\leq")=0.7$ and $\Pr(\circ_i=``=")=0.3$. Under this generator, the feasibility probability is approximately $0.53$; we retain only feasible instances, yielding $2{,}500$ LPs for training and $1{,}000$ for testing. Solutions are computed with \texttt{scipy.optimize}.

To build the perturbed datasets $\sD^{(j)}_{\textup{LP,test}}$, we perturb one component at a time while holding the others fixed. For $\vc$, draw $\delta\vc$ with i.i.d.\ standard normal entries, normalize, and scale to magnitude $10^{-4}$:
\[
\vc^\prime=\vc+10^{-4}\times\frac{\delta\vc}{\|\delta\vc\|}.
\]
We apply the same procedure to $\vb$, $\vl$, and $\vu$. For $\mA$, we perturb only existing nonzeros to preserve the sparsity pattern: let $\sS=\{(i_k,j_k)\}_{k=1}^{\mathrm{nnz}(\mA)}$ be the nonzero locations and draw $\delta\va\in\sR^{|\sS|}$ i.i.d.\ standard normal; normalize and scale so $\|\delta\va\|=10^{-4}$, then set
\[
\mA^\prime_{i_k,j_k}=\mA_{i_k,j_k}+(\delta\va)_k\ \ \text{for }(i_k,j_k)\in\sS,\qquad
\mA^\prime_{i,j}=\mA_{i,j}\ \text{otherwise}.
\]
This yields five perturbed versions (perturbing $\mA$, $\vb$, $\vc$, $\vl$, or $\vu$ separately). We evaluate the estimated Lipschitz constants $L_t$ and relative errors $E_t$ on each version and report the results in the main text.

\textbf{Training method.} To train our implicit GNNs, we employ a two-stage curriculum strategy. The model is trained by unrolling its iterative updates for a fixed number of steps, T, and minimizing the loss on the final output:
\begin{align*}
\min_{\Theta,\Phi,\Psi} & \sum_{(\vx,\vy_\ast) \in \sD_{\textup{LP,train}}} \ell(\vy_T, \vy_\ast) \\
\textup{s.t. } & \vz_0 = \vzero \\
& \vz_{t} = \cG_\Theta(\vz_{t-1},\cQ_{\Phi}(\vx)), \quad t=1,2,\cdots,T \\
& \vy_T = \cQ_\Psi(\vz_T)
\end{align*}
We set the final unroll horizon to $T=6$, as we observed no significant improvements with longer sequences. Training directly with $T=6$ is inefficient, so we adopt a two-stage curriculum. This approach is a standard practice in the Learning to Optimize field for training implicit or unrolled models that solve optimization problems \cite{chen2022learning}. This approach begins with a shorter unroll horizon and a larger learning rate, using the trained model to warm-start the subsequent stage with a longer horizon and a reduced learning rate. This strategy is often described as ``layerwise training" \cite{chen2018theoretical,liu2019alista} or ``curriculum learning" \cite{chen2020training,liu2023towards,he2024mathematics}. In our settings: Stage 1 uses $T=3$ with a learning rate $0.01$; Stage 2 uses $T=6$ with a learning rate $10^{-4}$. Both stages use Adam optimizer.

For a fair comparison, the non-iterative explicit GNNs are trained using the same two-stage learning rate schedule. This regimen proved effective, as the training errors for our explicit baselines surpassed those reported in prior work \cite{chen2023lp}.

At the inference time, $T$ can be chosen as the unroll length in the training stage, or moderately longer. In our experiments, we use $T=8$ at the inference time, as we do not observe significant improvement with a larger number of iterations.

\textit{Remark.} While we employ unrolled training rather than the vanilla Jacibian-based implicit differentiation, we classify our approach as an “implicit model” because the underlying architecture, a weight-tied update $\vz_{t} = \cG_\Theta(\vz_{t-1},\cQ_{\Phi}(\vx))$, remains identical. The distinction lies solely in the numerical implementation: as established by \cite{geng2021training}, unrolled training is mathematically equivalent to a Neumann series approximation of the implicit gradient. Thus, unrolling and root-finding are simply two valid strategies for approximating the same fixed point, $\vy_\ast(\vx)$. This equivalence is widely recognized in the Implicit GNN literature, where Neumann approximations are standard for scaling to large graphs (e.g., \cite{baker2023implicit}). Since our focus is on expressivity rather than optimization mechanics, we treat both formulations as belonging to the same model class.

\section{Broader contextual discussions}
\label{app:refs}

While our work primarily establishes the expressive power of implicit models through the lens of fixed-point iterations, we situate our contributions within the broader landscape of implicit model theory in this section.

\textbf{Universality and expressivity.} Despite its foundational importance, a general and systematic theory of expressive power for implicit models remains largely open. To our knowledge, existing results address only specific facets of the problem. 
For example, while \cite{bai2019deep} demonstrated that an operator $g$ exists such that its fixed point reproduces any explicit network $f$, this existence result crucially does not guarantee that the fixed-point iteration \eqref{eq:implicit-iterate} actually converges. Other works have established universality within restricted domains, such as steady-state PDEs \cite{marwah2023deep}, or proven separation results where implicit models outperform explicit counterparts in specific settings \cite{wu2024separation}. While insightful, none of these studies provide a complete characterization of the general function class representable by implicit models, nor do they directly address the fundamental questions (Q1) and (Q2) raised in our introduction.

\textbf{Contrast with explicit models.} First, consider the setting where the model is subject to a global Lipschitz constraint (e.g., $\text{Lip}(f_\theta) \le 1$), which is common for robustness and stability. In this case, explicit feedforward networks are mathematically strictly limited to representing globally 1-Lipschitz maps \cite{murari2025approximation}. Consequently, they are fundamentally incapable of expressing locally Lipschitz targets whose gradients become arbitrarily large (such as $1/x$ near zero). In contrast, our work demonstrates that implicit models break this barrier: a simple, globally regular operator (Lipschitz in $\vx$, contractive in $\vy$) can generate complex, locally Lipschitz fixed-point maps via iteration. This ``Simple Operator $\to$ Complex Fixed Point" mechanism is the core difference claimed in our paper.

Second, if we remove constraints, \cite{beneventano2021deep} have shown that deep ReLU networks can indeed approximate locally Lipschitz functions on arbitrary compact sets. However, achieving high precision for such complex targets requires the explicit model size (depth/width) to grow arbitrarily large. Here lies the crucial distinction: explicit models scale expressivity with model size, whereas implicit models are able to scale expressivity with test-time iterations. This allows implicit models to represent increasingly complex functions dynamically without adding parameters.

\textbf{Training Dynamics and Convergence.} A significant body of work focuses on the optimization mechanics of implicit models. \cite{geng2021training} rigorously established the equivalence between unrolled training and implicit differentiation via Neumann series approximations, validating the training methodologies used in our case studies. \cite{ling2023global,truong2025global} provide global convergence guarantees and rate analyses for the training in over-parameterized deep equilibrium models. While these studies ensure that training algorithms can successfully minimize the loss, our work addresses the fundamental antecedent question: whether a model exists that is capable of representing the target function in the first place.

\textbf{Generalization.} Distinct from expressivity, \cite{fung2024generalization} derive generalization bounds for families of implicit networks, characterizing their ability to perform on unseen data. Our analysis focuses on approximation capacity—the ability to construct an operator that exactly reproduces a target map—which is orthogonal to the sample complexity and generalization bounds discussed in their work.

\textbf{Infinite-Width Limits and Kernel Connections.} Recent research has sought to bridge the gap between implicit models, explicit deep networks, and kernel methods. \cite{gao2022a} extend the over-parameterization theory of explicit networks to implicit models, establishing well-posedness and convergence even in finite-width regimes where standard infinite-depth results do not directly apply. In the infinite-width limit, \cite{feng2023neural} formally derive the Neural Tangent Kernel (NTK) for equilibrium models, characterizing their training dynamics in the linear regime. On the architectural side, \cite{ling2024deep} show that for high-dimensional Gaussian mixtures, deep equilibrium models can be functionally equivalent to shallow explicit networks. In contrast to these kernel-based or distribution-specific analyses, \textit{our work adopts a non-parametric function-space perspective}; we demonstrate that for general locally Lipschitz targets, the expressive power of implicit models is not static but scales dynamically with test-time computation, a property distinct from the linear regimes often studied in kernel theory.

\end{document}